\newtheorem{theorem}{Theorem}
\newtheorem{proposition}{Proposition}
\newtheorem{definition}{Definition}
\newtheorem{example}{Example}
\newtheorem{lemma}{Lemma}
\newtheorem{observation}{Observation}
\newtheorem{corollary}{Corollary}
\newproof{proof}{Proof}
\newproof{proofofthm-min-con-criterion}{Proof of Theorem \ref{thm:min-con-criterion}}
\newproof{proofofthm-bounded-rel-con}{Proof of Theorem \ref{thm:bounded-rel-con}}
\newproof{proofofthm-disjuncts}{Proof of Theorem \ref{thm:disjuncts}}
\newcommand{\cseq}{\Phi}
\newcommand{\form}[1]{F_{#1}}
\newcommand{\D}{\mbox{{\it C }}}
\newcommand{\Q}{\mbox{{\it Q }}}
\newcommand{\U}{\mathcal{U}}
\newcommand{\A}{\mathcal{A}}
\newcommand{\Dom}{\mathcal{D}}
\newcommand{\pref}{\,\succ}
\newcommand{\contracting}{base contractor\xspace}
\newcommand{\contractingrel}{base contractor\xspace}
\newcommand{\contractingrels}{base contractors\xspace}
\newcommand{\contraction}{full contractor\xspace}
\newcommand{\meetcontraction}{full meet contractor\xspace}
\newcommand{\meetcontractions}{full meet contractors\xspace}
\newcommand{\meetprotcontraction}[1]{full #1-protecting meet contractor\xspace}
\newcommand{\contractions}{full contractors\xspace}
\newcommand{\contractedrel}{contracted relation\xspace}
\newcommand{\lang}{\mathcal{L}}
\newcommand{\setrev}{\ ^*\ }
\newcommand{\setcon}{\div }
\newcommand{\normsubscr}[1]{\mbox{{\scriptsize #1}}}
\newcommand{\divtolayers}{stratifiable\xspace}
\newcommand{\layer}{stratum\xspace}
\newcommand{\layerindex}{\layer index\xspace}
\newcommand{\layers}{strata\xspace}
\newcommand{\boundedlayer}{finitely stratifiable\xspace}
\newcommand{\boundedlayerprop}{finite stratifiability property\xspace}
\newcommand{\ero}{ERO}
\newcommand{\tname}[1]{\mbox{$#1$}}
\newcommand{\leftsideproof}{\noindent \begin{tikzpicture} \node[draw, inner sep=3pt, rounded corners=2pt] at (0,0) {{\scriptsize $\Leftarrow$}};;\end{tikzpicture}\ \ }
\newcommand{\rightsideproof}{\noindent \begin{tikzpicture} \node[draw, inner sep=3pt, rounded corners=2pt] at (0,0) {{\scriptsize $\Rightarrow$}};;\end{tikzpicture}\ \ }
\def\join#1{\mathrel{\mathop{
       \hbox{$\triangleright\!\triangleleft$}}
       \limits_{#1}}}
\journal{ARTIFICIAL INTELLIGENCE Journal}
\begin{document}

\begin{frontmatter}
\title{Contracting preference relations for database applications\tnoteref{t1}}
\tnotetext[t1]{Research partially supported by NSF grant IIS-0307434. This paper is an 
extended version of \cite{DBLP:conf/aaai/MindolinC08}}

\author{Denis Mindolin}
\ead{mindolin@cse.buffalo.edu}

\author{Jan Chomicki}
\ead{chomicki@cse.buffalo.edu}

\address{Department of Computer Science and Engineering\\
201 Bell Hall, University at Buffalo\\
Buffalo, NY 14260-2000, USA}

\begin{abstract}
The binary relation framework has been shown to be applicable to
many real-life preference handling scenarios.
Here we study preference contraction: the problem of discarding selected preferences.
We argue that the property of minimality and the preservation of
strict partial orders are crucial for contractions.
Contractions can be further constrained by specifying which
preferences should be protected. 
We consider two classes of preference relations: finite and finitely representable.
We present algorithms for
computing minimal and preference-protecting minimal contractions for
finite as well as finitely representable preference relations. 
We study relationships between preference
change in the binary relation framework and belief change in
the belief revision theory. We also introduce some preference
query optimization techniques which can be
used in the presence of contraction. We evaluate the proposed algorithms
experimentally and present the results.
\end{abstract}

\begin{keyword}
preference contraction \sep preference change \sep preference query
\end{keyword}

\end{frontmatter}

\section{Introduction}

A large number of preference handling frameworks have been 
developed \cite{Fishburn1970,boutilier03cpnets,sep-preferences}. 
In this paper, we work with the \emph{binary relation} preference 
framework \cite{Chomicki2003, kiesling2002}.
Preferences are represented as binary relations over tuples. 
They are required to be \emph{strict 
partial orders (SPO)}: transitive and irreflexive binary relations.
The SPO properties are known to capture the rationality of preferences
\cite{Fishburn1970}.
This framework can deal with finite as well as infinite 
preference relations, the latter represented using finite \emph{preference
formulas}. 

Working with preferences in any framework, 
it is naive to expect that
they never change. Preferences can change
over time: if one likes something now, it does not mean
one will still like it in the future. Preference change
is an active topic of current research \cite{Chomicki2007,Freund2004}. 
It was argued \cite{Doyle2004pref} that along with the discovery
of sources of preference change and elicitation of the change
itself, it is important to preserve the correctness
of preference model in the presence of change. 
In the binary relation framework, a natural correctness
criterion is the preservation of SPO properties of preference relations.

An operation of preference change -- preference revision -- has been proposed in 
\cite{Chomicki2007}. 
We note that when a preference relation is changed using a revision operator,
new preferences are ``semantically combined'' with the original preference relation. 
However, combining new preferences with the existing ones is not the only way people 
change their preferences in real life. 
Another very common operation of preference change is ``semantic subtraction'' from 
a set of preferences another set of preferences 
one used to hold, if the reasons for holding
the \emph{contracted} preferences are no longer valid.
That is, we are given an initial 
preference relation $\,\succ$ and a subset $CON$ 
of $\pref$ (called here a \emph{\contractingrel}) which should not hold. 
We want to change $\pref$ in such a way that $CON$ does not hold in it. 
This is exactly opposite to the way the preference revision operators change preference relations.
Hence, such a change cannot be captured by the existing preference revision operators.

In addition to the fact that discarding preferences is common, there is another 
practical reason why preference contraction is important. In many database applications,
preference relations are used to compute sets of the best (i.e. the most preferred) objects 
according to user's preferences. Such objects may be cars, books, cameras etc. 
The operator which is used in the binary relation 
framework to compute such sets is called \emph{winnow} \cite{Chomicki2003} (or \emph{BMO} in \cite{kiesling2002}).
The winnow operator is denoted as 
$w_{\succ}(r)$, where $r$ is the original set of objects, and $\succ$ is a preference relation. 
If the preference relation $\pref$ is large (i.e. the user has many preferences), the result of
$w_{\succ}(r)$ may be too narrow. One way to widen the result is by discarding some preferences
in $\pref$. Those may be the preferences which do not hold any more or are not longer important.

In this paper, we address the problem of contraction of preference relations. We consider it for 
finitely representable infinite preference relations (Example \ref{ex:motiv}) and finite preference
relations (Example \ref{ex:motiv-finite}).

\begin{example}\label{ex:motiv}
  Assume that Mary wants to buy a car. She prefers newer cars, and given 
	two cars made in the same year, a cheaper one is preferred. 
  $$o \succ o' \equiv o.year > o'.year \vee o.year = o'.year \wedge o.price < o'.price$$
  where $>, <$ denote the standard orderings of rational numbers, 
  the attribute $year$ defines the year when cars
  are made, and the attribute $price$ -- their price.
  The information about all cars which are
  in stock now is shown in the table below:
  \smallskip
\begin{center}
  {\small
  \begin{tabular}{l|l|l|l}
    \hline
      $id$ & $make$ & $year$ & $price$ \\
   \hline
      $t_1$ & VW & 2007 & 15000 \\
      $t_2$ & VW & 2007 & 20000 \\
      $t_3$ & Kia & 2006 & 15000 \\
      $t_4$ & Kia & 2007 & 12000 \\
   \hline    
  \end{tabular}
  }
\end{center}
  \smallskip

  Then the set of the most preferred cars according 
  to $\pref$ is $S_1 = \{t_4\}$. Assume that having observed the set $S_1$, Mary understands that
  it is too narrow. She decides that the car $\,t_1$ is not really
  worse than $t_4$. 
  She generalizes that by stating that the cars made in $2007$ 
  which cost $12000$ are not better
  than the cars made in $2007$ costing $15000$. So
  $t_4$ is not preferred to $t_1$ any more, and thus
  the set of the best cars according to the new preference
  relation should be $S_3 = \{t_1,t_4\}.$

  The problem which we face here is 
  how to represent the change to the preference relation
  $\pref$.
  Namely, we want to find a preference
  relation obtained from $\pref$, in which 
  certain preferences
  do not hold. A naive solution is to 
  represent the new preference as 
  \mbox{$\pref_1 \ \equiv \ (\pref -\ CON)$}, where
  $CON(o,o') \equiv o.year = o'.year = 2007 \wedge 
  o.price = 12000 \wedge o'.price = 15000$, i.e.,
  $CON$ is the preference we want to discard. 
  So 
	\begin{align*}
  o\succ_1 o' \equiv & (o.year > o'.year \vee o.year = o'.year \wedge
  o.price < o'.price) \wedge \\
	& \neg (o.year = o'.year = 2007 \wedge o.price = 12000 \wedge o'.price = 15000).
	\end{align*}

  However, $\pref_1$ is not transitive
  since if we take $t_5 = (VW, 2007, 12000)$,
  $t_6 = (VW, 2007, $ $14000)$, and $t_7 = (VW, 2007, 15000)$,
  then $t_5 \succ_1 t_6$ and $t_6 \succ_1 t_7$ but
  \mbox{$t_5 \not \succ_1 t_7$}. Hence,
  this change does not preserve SPO. To 
  make the changed preference relation transitive,
  some other preferences have to be discarded
  in addition to $CON$. At the same time,
  discarding too many preferences is not
  a good solution since they may be important. 
  Therefore, we need to discard 
  a minimal part of $\pref_1$ which contains $CON$
  and preserves SPO in the modified preference relation.
  An  SPO preference relation 
  which is minimally different from $\pref_1$ and
  does not contain $CON$ is shown below:
	\begin{align*}
    o \succ_2 o' \equiv & (o.y > o'.y \vee o.y = o'.y \wedge o.p < o'.p) \wedge \\
                      	    & \neg (o.y = o'.y = 2007 \wedge o.p = 12000 \wedge
                              o'.p > 12000 \wedge o'.p \leq 15000)
  	\end{align*}
  The set of the best cars according to $\,\succ_2$ is $S_2' = \{t_1,t_4\}$.
  As we can see, the relation $\pref_2$ is different
  from the naive solution $\pref_1$ in the sense
  that $\pref_2$ implies that a car made in $2007$ costing
  $12000$ is not better than a car made in $2007$ costing
  \emph{from $12000$ to $15000$}. We note that $\succ_2$ is not the only relation
  minimally different from $\pref_1$ and not containing $CON$. 
\end{example}

\begin{example}\label{ex:motiv-finite}
Let Mary have the following preferences over cars. She prefers \emph{VW to Kia}.
Given two VW, her preference over color is \emph{red is better than green, which is better than 
blue}. Given two Kias, her preference over color is \emph{green is better red, which is better than blue}.
 In this example, we use the \emph{ceteris paribus} semantics \cite{boutilier03cpnets}:
 the preference statements above are used to compare only the tuples different in a single attribute. 
 The SPO preference relation $\succ_1$ representing Mary's preferences is shown in Figure
 \ref{pic:cpnet-contraction-orig}. An edge from a tuple to another one denotes the preference of the first tuple
 to the second one.

Assume that after some time, Mary decides to change her preferences:
 \emph{a red $VW$ is not better than a red $Kia$}, and \emph{a green $VW$ is not better than a blue $Kia$}.
 That means that the set of preferences we need to drop from the current preference relations is 
 $CON = \{($ \mbox{$(VW, red)$} , \mbox{$(Kia, red)$} $); ($ \mbox{$(VW, green)$}, \mbox{$(Kia, blue)$} $)\}$.
 The corresponding edges are dashed in Figure \ref{pic:cpnet-contraction-orig}. Clearly, if these edges are removed from the graph, it will not be transitive any more.
 Hence, additional edges need to be removed to preserve the transitivity of the preference relation. One minimal set of edges
 whose removal along with $CON$ results in a transitive preference relation is shown 
 in Figure \ref{pic:cpnet-contraction-contr} as the dashed edges.
\end{example}

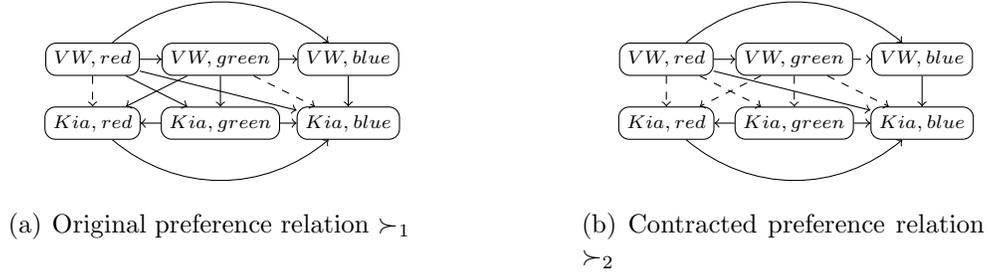
\begin{figure}[ht]
	\centering
	\subfigure[Original preference relation $\succ_1$]{
		\begin{tikzpicture}[scale=0.85]
			\tikzstyle{cir} = [draw=black,rounded corners,inner sep=3pt, font=\fontsize{7}{7}]
			 \tikzstyle{upperedge} = [bend left=40, ->];
			 \tikzstyle{loweredge} = [bend right=40, ->];

			\node[cir] (VR) at (0,1) {$VW, red$};
			\node[cir] (VG) at (2,1) {$VW, green$};
			\node[cir] (VB) at (4,1) {$VW, blue$};
			\node[cir] (KR) at (0,0) {$Kia, red$};
			\node[cir] (KG) at (2,0) {$Kia, green$};
			\node[cir] (KB) at (4,0) {$Kia, blue$};

			\draw[upperedge] (VR) to (VB);
			\draw[->] (VR.east) to (VG.west);
			\draw[->] (VG.east) to (VB.west);

			\draw[->] (KG.west) to (KR.east);
			\draw[->] (KG.east) to (KB.west);
			\draw[loweredge] (KR) to (KB);

			\draw[->, dashed] (VR) to (KR);
			\draw[->] (VG) to (KG);
			\draw[->] (VB) to (KB);

 			\draw[->, dashed] (VG) to (KB);
			\draw[->] (VR) to (KB);
			\draw[->] (VG) to (KR);
			\draw[->] (VR) to (KG);

			\node at (-1,0){};
			\node at (4,0){};

		\end{tikzpicture}
		\label{pic:cpnet-contraction-orig}
	}
	\hspace{2cm}
	\subfigure[Contracted preference relation $\succ_2$]{
		\begin{tikzpicture}[scale=0.85]
			\tikzstyle{cir} = [draw=black,rounded corners,inner sep=3pt, font=\fontsize{7}{7}]
			 \tikzstyle{upperedge} = [bend left=40, ->];
			 \tikzstyle{loweredge} = [bend right=40, ->];

			\node[cir] (VR) at (0,1) {$VW, red$};
			\node[cir] (VG) at (2,1) {$VW, green$};
			\node[cir] (VB) at (4,1) {$VW, blue$};
			\node[cir] (KR) at (0,0) {$Kia, red$};
			\node[cir] (KG) at (2,0) {$Kia, green$};
			\node[cir] (KB) at (4,0) {$Kia, blue$};

			\draw[upperedge] (VR) to (VB);
			\draw[->] (VR.east) to (VG.west);
			\draw[->, dashed] (VG.east) to (VB.west);

			\draw[->] (KG.west) to (KR.east);
			\draw[->] (KG.east) to (KB.west);
			\draw[loweredge] (KR) to (KB);

			\draw[->, dashed] (VR) to (KR);
 			\draw[->, dashed] (VG) to (KG);
			\draw[->] (VB) to (KB);

 			\draw[->, dashed] (VG) to (KB);
 			\draw[->] (VR) to (KB);
 			\draw[->, dashed] (VG) to (KR);
 			\draw[->, dashed] (VR) to (KG);

			\node at (-1,0){};
			\node at (4,0){};
		\end{tikzpicture}
		\label{pic:cpnet-contraction-contr}
	}

 \caption{Contraction of a finite preference relation}
 \label{pic:cpnet-contraction}
\end{figure}

The examples above show that to discard a subset $CON$ of a preference relation $\pref$, 
some preferences additional to $CON$ may be discarded to make the resulting preference
relation an SPO. A subset $P^-$ of $\pref$ which containts $CON$ and
whose removal from $\pref$ preserves the SPO axioms of the modified preference relation is called
a \emph{\contraction of $\pref$ by $CON$}. Such a set $P^-$ may be viewed as a union of the preferences $CON$ to discard
and \emph{a set of reasons} of discarding $CON$. Ideally, if a user decides to discard
preferences, she also provides all the reasons for such a change. In this case, the relation 
$(\pref - \,CON)$ is already a preference relation (i.e., SPO). However, in real life scenarios, it is hard to 
expect that users always provide complete information about the change they want to make. 
At the same time, the number of alternative \contractions $P^-$ for a given $\pref$ and $CON$ may be
large or even infinite for infinite preference relations.
As a result, there is often a need to learn from the user the reasons for discarding preferences. 
That may be done in a step-wise manner by exploring possible alternatives and using user feedback to select the correct ones.

We envision the following scenario here.
To find a complete set of preferences she wants to discard,  
the user iteratively expresses the most obvious
preferences $CON$ that should be dropped from her preference relation $\pref$. 
After that, a possible set of reasons $P^-$ for such a change is computed.
To check if she is satisfied with the computed $P^-$, an impact
of the performed change may be demonstrated to her (e.g., the result of the winnow operator
over a certain data set). If the \contraction $P^-$ does not 
represent the change she actually wanted to make, the user may undo the change and select another alternative 
or tune the contraction by elaborating it.
One type of such elaboration is specifying a set of additional preferences to discard. 
Another type of elaboration which we propose in this paper is \emph{preference protection}. 
Because the exact reasons for contracting $CON$ are not known beforehand, some
preferences which are important for the user may be contracted in an intermediate $P^-$. 
To avoid that, a user can impose a requirement of \emph{protecting a set of preferences
from removal}. The corresponding contraction operator is called here
\emph{preference-protecting contraction}. This iterative process stops when the user
is satisfied with the computed \contraction.

An important property of the scenario above is that the set of reasons $P^-$
which is computed as a result of the iterative process above has to be as small as possible.
That is, preferences that the user does not want to discard and that are not needed to be removed to 
preserve the SPO properties of the modified preference relation should remain. To preserve the minimality of preference change, two approaches
are possible.

In the first one, the \contraction computed in every step is \emph{minimal}. The corresponding 
contraction operator here is called \emph{minimal preference contraction}. It is guaranteed
that the \contraction $P^-$ computed in the last iteration (i.e., when $P^-$ is satisfactory for the user)
is minimal. Note that since there could be many possible minimal 
\contractions of a preference relation by a \contracting, \emph{any} of them may be picked
assuming that if the user is not completely satisfied with it, she will tune the contraction in the next
iteration. 
In belief revision theory, the contraction operator 
with a similar semantics is called \emph{maxichoice contraction} \cite{hansson-book-chapter}. 

In the other variant, the \contraction $P^-$ computed in every step is not necessary minimal.
However, the user can make $P^-$ smaller by specifying the preferences which should be protected
from removal. The \contraction computed in the last step may be not minimal, 
but sufficiently small to meet the user expectations. We propose to construct $P^-$
as the \emph{union} of all minimal \contractions of the preference relation by $CON$.
This contraction operator is called \emph{meet contraction} if no preferences need to be preserved, 
and \emph{preference-protecting meet contraction} if preference preservation is required.
Similar operators in belief revision are  
\emph{full meet contraction}, and \emph{partial meet contraction} \cite{hansson-book-chapter}.

We note that the operations of preference contraction we propose in this paper should 
be understood in the context of the scenario discussed above. However, the details
of the scenario are beyond the scope of this work. The main results of the paper are as follows.
First, we present necessary and sufficient conditions for successful construction of the minimal preference
contraction. Second, we propose two algorithms for minimal preference contraction: the first for 
finitely representable preference relations and 
the second for finite preference relations. Third, we show necessary and sufficient conditions
for successful evaluation of 
the operator of preference-protecting contraction and propose an algorithm of computing the operator of preference-protecting
minimal contraction. Fourth, we show how meet and meet preference-protecting contraction operators can be computed in the preference relation framework.
Fifth, we show how to optimize preference query evaluation in the presence of contraction.
Finally, we perform experimental evaluation of the proposed framework and present the results 
of the experiments. 
In the related work section, we show relationships of the current work with belief revision and 
other approaches of preferences change.

\section{Basic Notions}\label{sec:notations}

The preference relation framework we use in the 
paper is a variation of the one proposed in \cite{Chomicki2003}.
Let $\A = \{A_1,\ldots,A_m\}$ be a fixed set of attributes. 
Every attribute $A_i$ is associated with a \emph{domain}
$\Dom_{A_i}$. We consider here two kinds of infinite domains:
\D (uninterpreted constants) and \Q (rational numbers).
Then a universe $\U$ of tuples is defined as
$$\U = \prod_{A \in \A}\Dom_{A_i}$$
We assume that two tuples $o$ and $o'$ are equal if and only if
the values of their corresponding attributes are equal.

\begin{definition}
  A binary relation $\,\succ\:\subset \U \times \U$ is a
  \emph{preference relation}, if it is a strict
    partial order (SPO) relation, i.e., transitive and irreflexive.
\end{definition}

Binary relations $R \subseteq \U \times \U$ considered
in the paper are \emph{finite}
or \emph{infinite}. Finite binary relations are represented
as sets of pairs of tuples. The infinite binary relations
we consider here are \emph{finitely representable} as \emph{formulas}. 
Given a binary relation $R$, its formula
representation is denoted $\form{R}$. That is, $R(o, o')$ iff $\form{R}(o,o')$. 
A formula representation $\form{\succ}$ of a preference relation 
$\,\succ$ is called a \emph{preference formula}.

We consider two kinds of atomic formulas here:
\begin{itemize}
   \item \emph{equality constraints}: $o.A_i = o'.A_i$, $o.A_i \neq o'.A_i$, 
     $o.A_i = c$, or $o.A_i \neq c$,
     where $o, o'$ are tuple variables, $A_i$ is
     a \D-attribute, and $c$ is an uninterpreted constant;
   \item \emph{rational-order constraints}: $o.A_i\theta o'.A_i$ or 
     $o.A_i\theta c$, where
     \mbox{$\theta \in \{=,\neq, <, >, \leq, \geq\}$}, $o, o'$ are tuple variables,
     $A_i$ is a \Q-attribute, and $c$ is a rational number. 
\end{itemize}

A preference formula whose all atomic formulas are equality (resp. rational-order) 
constraints will be called an \emph{equality} (resp. \emph{rational order}) preference formula.
If both equality and rational order constraints are used in a formula, the formula will be called
an \emph{equality/rational order} formula or simply \emph{\ero}-formula.
Without loss of generality, we assume that all preference formulas are quantifier-free 
because \ero-formulas admit quantifier elimination.
Examples of relations represented using \ero-formulas are $\pref, \pref_1$ and $\pref_2$ 
in Example \ref{ex:motiv}.
\medskip

We also use the representation of binary relations as \emph{directed graphs}, both 
in the finite and the infinite case.

\begin{definition}\label{def:edge}
  Given a binary relation $R \subseteq \U \times \U$ and two tuples $x$ and $y$
  such that $x R y$ ($xy \in R$), \emph{$xy$ is an $R$-edge from $x$ to $y$}.
  A \emph{path in $R$} (or an \emph{$R$-path}) from $x$ to $y$ is a sequence
  of $R$-edges such that the start node of the first edge is $x$, the end node
  of the last edge is $y$, and the end node of every edge (except the last one) is the start node of the next edge
  in the sequence. The sequence of nodes participating in an $R$-path is an \emph{$R$-sequence}.
  The \emph{length of an $R$-path}
  is the number of $R$-edges in the path. The \emph{length of an $R$-sequence} is the number of 
  nodes in it.
\end{definition}

An element of a preference relation is called \emph{a preference}.
We use the symbol $\,\succ$ with subscripts to refer to preference relations. We write
$x \succeq y$ as a shorthand for \mbox{$(x \succ y \vee x = y)$}. 
We also say that \emph{$x$ is preferred to $y$} and \emph{$y$ is dominated by $x$} according to $\pref$
if $x \succ y$.

\medskip

In this paper, we present several algorithms for finite relations. 
Such algorithms are implemented using the \emph{relational algebra} operators:
{selection} $\sigma$, {projection} $\pi$, {join}
$\bowtie$, set difference $-$, and union $\cup$ \cite{ramakrishnan-db-book}. Set difference and 
union in relational algebra have the same semantics as in the set theory.
The semantics of the other operators are as follows:
\begin{itemize}
 \item Selection $\sigma_{F}(R)$ picks from the relation $R$ all the tuples for which the condition
$F$ holds. The condition $F$ is a boolean expression involving comparisons 
between attribute names and constants.
 \item Projection $\pi_{L}(R)$ returns a relation which is obtained from 
the relation $R$ by leaving in it only the columns listed in $L$ and dropping the others.
  \item Join of two relations $R$ and $S$
	$$R \join{R.X_1 = S.Y_1, \ldots, R.X_n = S.Y_n} S$$
computes a product of $R$ and $S$, leaves only the tuples in which
$R.X_1 = S.Y_1, \ldots, R.X_n = S.Y_n$, and drops the columns $S.Y_1, \ldots, S.Y_n$
from the resulting relation. 
\end{itemize}

When we need more than one copy of a relation $R$ 
in a relational algebra expression, we add subscripts to the relation name (e.g. $R_1, R_2$ etc).

\section{Preference contraction}

Preference contraction is an operation of discarding preferences. 
We assume that when a user intends to discard some preferences, he or she
expresses the preferences to be discarded as a binary relation called a \emph{\contractingrel{}}. 
The interpretation of each pair in a \contractingrel is that the first tuple should not be preferred to the second tuple. 
We require \contracting{}
relations to be subsets of the preference relation to be contracted. 
Hence, a \contractingrel{} is irreflexive but not necessary transitive.
Apart from that,
we do not impose any other restrictions on the \contractingrels{}
(e.g., they can be finite of infinite), unless stated otherwise. 
Throughout the paper, \contractingrels{} are typically referred to as $CON$.

\begin{definition}\label{def:contr-rel}
  A binary relation $P^-$ is \emph{a \contraction{} of a preference relation $\,\succ$ by $CON$} if
\mbox{ $CON \subseteq P^- \subseteq\,\succ$}, and 
 $(\,\succ -\ P^-)$ is a preference relation (i.e., an SPO).
  The relation $(\,\succ -\ P^-)$ is called the \emph{\contractedrel{}}.

  A relation $P^-$ is \emph{a minimal \contraction{} of $\,\succ$ by $CON$} if
  $P^-$ is a \contraction{} of $\,\succ$ by $CON$, and there is no other \contraction{} 
  $P'$ of $\,\succ$ by $CON$ s.t. $P' \subset P^-$.
\end{definition}

\begin{definition}
   A preference relation is \emph{minimally contracted} if it is contracted by
  a \emph{minimal \contraction{}}. \emph{Contraction} is an operation of constructing a \contraction{}. \emph{Minimal contraction} 
  is an operation of constructing a minimal \contraction{}.
\end{definition}

The notion of a minimal \contraction{} narrows the set of \contractions{}.
However, as we illustrate in Example \ref{ex:min-con-inf}, 
a minimal \contraction{} is generally not unique for the given preference
and \contracting{} relations. 
Moreover, the number of minimal \contractions{} for infinite preference relations
can be infinite. Thus, minimal contraction
differs from minimal preference revision \cite{Chomicki2007} which is uniquely
defined for given preference and revising relations.

 %\vspace{-0.1in}
               \begin{figure}[ht]
		 \centering                              
 
			\begin{tikzpicture}
				\tikzstyle{cir} = [draw=black,rounded corners,inner sep=2pt]
				\node[cir] (u) at (0, 0) {{\small $x_1$}};
				\node[cir] (x) at (1, 0) {{\small $x_2$}};
				\node[cir] (y) at (2, 0) {{\small $x_3$}};
				\node[cir] (v) at (3, 0) {{\small $x_4$}};
				
				\draw[->, dashed, bend left=60] (u) to  (v);
				\draw[->, bend left=60] (u) to (x);
				\draw[->, bend left=60] (u) to (y);
				
				\draw[->, bend left=60] (x) to (y);
				\draw[->, bend left=60] (x) to (v);
		
				\draw[->, bend left=60] (y) to (v);
			\end{tikzpicture}
		   \caption{$\pref$ and $CON$}
		   \label{pic:contr-example1} 
		\vspace{-4mm}
	       \end{figure}

\begin{example}\label{ex:min-con-inf}
  Take the preference relation $\,\succ$ which is a total order of $\{x_1, \ldots, x_4\}$ 
  (Figure \ref{pic:contr-example1}). 
  Let the \contracting{} relation $CON$ be $\{x_1x_4\}$.
  Then the following sets are 
minimal \contractions{} of $\,\succ$ by $CON$:
$P^-_1 = \{x_1x_2,x_1x_3,x_1x_4\}$, $P^-_2 = \{x_3x_4, x_2x_4, x_1x_4\}$, $P_3 = \{x_1x_2, x_3x_4, x_1x_4\}$, 
and $P^-_4 = \{x_1x_3, x_2x_4, x_2x_3, x_1x_4\}$.
\end{example}

An important observation here is that that the contracted preference relation is defined as a 
subset of the original preference relation. We want to preserve the SPO properties -- transitivity and irreflexivity -- 
of preference relations. Since any subset of an irreflexive relation
is also an irreflexive relation, no additional actions are needed to preserve
irreflexivity during contraction. However, not every
subset of a transitive relation is a transitive relation. 
We need to consider paths in the original preference relation which by transitivity may produce
$CON$-edges which need to be discarded. We call such paths \emph{$CON$-detours}.

\begin{definition}
  Let $\ \succ$ be a preference relation, and $P \subseteq \ \succ$. 
  Then a $\,\succ$-path from $x$ to $y$ is a \emph{$P$-detour} 
  if $xy \in P$.
\end{definition}

First, let us consider the problem of finding any
\contraction{}, not necessary minimal. 
As we showed above, a contracted preference relation cannot have any $CON$-detours. 
To achieve that, some additional edges of the preference relation have to be discarded. 
However, when we discard these edges, we have to make sure that there are no 
paths in the contracted preference relation which produce the removed edges.
Hence, a necessary and sufficient
condition for a subset of a preference relation \emph{to be its \contraction{}} can be formulated
in an intuitive way. 

\begin{lemma}\label{lemma:contr-necc}
  Given a preference relation (i.e., an SPO) $\,\succ$ and a \contraction $CON$, 
  a relation $P^- \subseteq\,\succ$ is a \contraction of $\pref$ by $CON$ if and only if
  $CON \subseteq P^-$, and for every \mbox{$xy \in P^-$}, \mbox{$(\,\succ -\ P^-)$} contains no paths
  from $x$ to $y$.
\end{lemma}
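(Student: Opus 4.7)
The plan is to unpack the definition of \contraction{} and observe that the only non-trivial axiom to verify for $(\,\succ -\ P^-)$ is transitivity, since irreflexivity is inherited from $\,\succ$ for free (any subset of an irreflexive relation is irreflexive). The inclusions $CON \subseteq P^- \subseteq\,\succ$ in both directions are immediate from Definition \ref{def:contr-rel}, so the real content is the equivalence between transitivity of $(\,\succ -\ P^-)$ and the absence of $(\,\succ -\ P^-)$-paths from $x$ to $y$ for each $xy \in P^-$.

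For the forward direction, I assume $P^-$ is a \contraction{}. Then $CON \subseteq P^-$ holds by definition, and $(\,\succ -\ P^-)$ is an SPO. Toward a contradiction, suppose there exist $xy \in P^-$ and a $(\,\succ -\ P^-)$-path from $x$ to $y$. A straightforward induction on the length of this path, using the transitivity of the SPO $(\,\succ -\ P^-)$, yields $x(\,\succ -\ P^-)y$. This contradicts $xy \in P^-$, since $(\,\succ -\ P^-)$ and $P^-$ are disjoint.

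For the backward direction, I assume $CON \subseteq P^-$ and the no-detour condition. I need to show transitivity of $(\,\succ -\ P^-)$. Take arbitrary $xy, yz \in (\,\succ -\ P^-)$; then $xy, yz \in \,\succ$ and by transitivity of $\,\succ$ we get $xz \in\, \succ$. If $xz \in P^-$, then $x,y,z$ form a $(\,\succ -\ P^-)$-path from $x$ to $z$, contradicting the hypothesis applied to the edge $xz \in P^-$. Hence $xz \notin P^-$, so $xz \in (\,\succ -\ P^-)$, establishing transitivity. Combined with irreflexivity, $(\,\succ -\ P^-)$ is an SPO, so $P^-$ is a \contraction{} of $\,\succ$ by $CON$.

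There is no real obstacle here; the argument is essentially a direct translation of "transitivity fails precisely when some edge of $P^-$ is reconstructible by a surviving path." The only point requiring a hint of care is the forward direction's induction, where one must check that the two-step transitivity of $(\,\succ -\ P^-)$ extends to arbitrary-length paths, but this is immediate.
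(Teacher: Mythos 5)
Your proof is correct and follows essentially the same route as the paper's: irreflexivity is inherited for free, the backward direction reduces a transitivity failure to a surviving two-edge path that would regenerate an edge of $P^-$ (or contradict transitivity of $\succ$), and the forward direction collapses any surviving path from $x$ to $y$ by transitivity of the SPO $(\succ - P^-)$ to contradict $xy \in P^-$. Your write-up is if anything slightly more explicit than the paper's about the induction on path length in the forward direction.
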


\begin{proof}$ $

    \leftsideproof Prove that if for all $xy \in P^-$, $(\pref -\,P^-)$ contains no 
     paths from $x$ to $y$, then $(\pref -\,P^-)$ is an SPO. The irreflexivity of $(\pref - \,P^-)$
	follows from the irreflexivity of $\pref$. Assume $(\pref - \,P^-)$ is not transitive, i.e.,
	there are $xz, zy \in (\pref -\,P^-)$ but $xy \not \in (\pref -\,P^-)$. If 
	$xy \in P^-$ then the path $xz, zy$ is not disconnected which contradicts the initial assumption. 
	If $xy \not \in P^-$, then the assumption of transitivity of $\succ$ is violated. 
	
    \rightsideproof First, $CON \not \subseteq P^-$ implies that $P^-$ is not a \contraction of $\pref$ by $CON$ by definition. 
	Second, the existence of a path from $x$ to $y$ in $(\succ - \,P^-)$ 
	for $xy \in P^-$ implies that $(\pref -\,P^-)$ is not transitive, which violates
	the SPO properties. \qed
\end{proof}

Now let us consider the property of minimality of \contractions{}. Let $P^-$ be any minimal 
\contraction{} of a preference relation $\pref$ by a \contractingrel{} $CON$. Pick 
any edge $xy$ of $P^-$. An important question which arises here is 
\emph{why is $xy$ a member of $P^-$}? The answer is obvious if $xy$ is also a member of $CON$:
every $CON$-edge has to be removed from the preference relation. However, what if $xy$ is not 
a member of $CON$? To answer this question, let us introduce the notion of 
the \emph{outer edge set} of an edge belonging to a \contraction{} relation. 

\begin{definition}\label{def:cseq}
  Let $CON$ be a \contractingrel{} of a preference relation $\succ$,
  and $P^-$ be a \contraction{} of $\succ$ by $CON$. Let 
  $xy \in P^- - CON$, and 
      \begin{tabbing}
	$\cseq_0(xy)= $ \= $\{xy\},$ and \\
        $\cseq_i(xy) =$ \> $\{u_iv_i \in P^- | \exists u_{i-1}v_{i-1} \in $ \= $\cseq_{i-1}(xy)\ .\ $ \= 
             $u_{i} = u_{i-1} \wedge v_{i-1}v_i \in (\pref - \,P^-) \vee $ \\
             \>\> $v_{i-1} =  v_{i} \wedge u_{i} u_{i-1} \in (\pref - \,P^-) \},$ for $i > 0$.
      \end{tabbing}
  
  Then the \emph{outer edge set $\cseq(xy)$ for $xy$} is defined as
  $$\cseq(xy) = \bigcup_{i=0}^{\infty}\cseq_i(xy).$$
\end{definition}

               \begin{figure}[ht]
			\vspace{-7mm}
		 \centering                              
			\begin{tikzpicture}
			 \tikzstyle{cir} = [draw=black,rounded corners,inner sep=2pt];
			 \tikzstyle{upperedge} = [bend left=60, ->];
			 \tikzstyle{loweredge} = [bend right=60, ->];
			 \node[cir] (u) at (1, 0) {{\scriptsize $u$}};
			 \node[cir] (x) at (2, 0) {{\scriptsize $x$}};
			 \node[cir] (y) at (3, 0) {{\scriptsize $y$}};
			 \node[cir] (v) at (4, 0) {{\scriptsize $v$}};
			 \node[cir] (z) at (5, 0) {{\scriptsize $z$}};

			\draw[loweredge] (u) to (x);
			\draw[upperedge, dashed] (x) to (y);
			\draw[loweredge] (y) to (v);
			\draw[loweredge, dashed] (v) to (z);
			\draw[upperedge, dashed] (x) to (z);

			\draw[upperedge, dashed] (x) to (v);
			\draw[upperedge, dashed] (u) to (v);
			\draw[upperedge, dashed] (u) to (z);

			\draw[loweredge] (u) to (y);
			\draw[loweredge] (y) to (z);
			\end{tikzpicture}

			\vspace{-3mm}
		   \caption{$\cseq(xy)$ for Example \ref{ex:cseq}.}
		   \label{pic:cseq-constr}
	       \end{figure}

Intuitively, the outer edge set $\cseq(xy)$ of an edge $xy \in (P^- - \,CON)$ contains all the edges
of a \contraction{} $P^-$ which should be removed from $P^-$ (i.e., added back to the preference relation $\pref$) to preserve the \contraction{} property of the result, should $xy$ be removed from $P^-$ (i.e., added back to the preference relation). 
The reasoning here is as follows. When for some $i$, $\cseq_{i}(xy)$ is removed from $P^-$, 
then $\cseq_{i+1}(xy)$ has to be also removed from $P^-$. Otherwise, for every edge in $\cseq_{i+1}(xy)$, 
there is a two-edge path in $\pref$ one of whose edges is in $\cseq_i(xy)$ while the other is not contracted.
Hence, if the SPO properties of $(\pref - \,P^-)$ need to be preserved, 
removing $xy$ from $P^-$ requires recursively removing the entire $\cseq(xy)$ from $P^-$.

The next example illustrates the inductive construction of an outer edge set. 
Some properties of outer edge sets are shown in Lemma \ref{lemma:cool-seq}.
 
\begin{example}\label{ex:cseq}
  Let a preference relation $\succ$ be the set of all edges 
  in Figure \ref{pic:cseq-constr},
  and $P^-$ be defined by the dashed edges. 
  Let us construct $\cseq(xy)$ (assuming that $xy$ is not an 
  edge of the \contractingrel{} $CON$).

  \begin{itemize}
    \item $\cseq_0(xy) = \{xy\}$;
    \item $\cseq_1(xy) = \{xv, xz\}$;
    \item $\cseq_2(xy) = \{uv, uz\}$;
  \end{itemize}

  Thus,  $\cseq(xy) = \{xy, xv, xz, uv, uz\}$.
\end{example}

\begin{lemma}\label{lemma:cool-seq}
  Let $P^-$ be a \contraction{} of a preference relation $\pref$ by a \contractingrel{}
  $CON$. Then for every $xy \in (P^- - CON)$, $\cseq(xy)$ has the following properties:

  \begin{enumerate}
    \item  for all $uv \in \cseq(xy)$, $u \succeq x$ and $y \succeq v$;
    \item  for all $uv \in \cseq(xy)$, $ux, yv \not \in P^-$;
    \item  if $(\cseq(xy) \cap CON) = \emptyset$, then $P' = (P^- - \cseq(xy))$ is a \contraction{}
      of $\succ$ by $CON$.
  \end{enumerate}
\end{lemma}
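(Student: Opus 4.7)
I will prove the three properties in order, using induction on the level index $i$ in the definition of $\cseq_i(xy)$ for (1) and (2), and then leveraging (1) together with a two-edge transitivity analysis for (3). Throughout, I rely on the fact that, since $P^-$ is a \contraction, $(\pref -\,P^-)$ is itself an SPO, and on Lemma \ref{lemma:contr-necc} to reduce (3) to transitivity of $(\pref -\,P')$.

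For property (1), the base case $i=0$ is immediate, since $\cseq_0(xy) = \{xy\}$. For the inductive step, an edge $u_iv_i \in \cseq_i(xy)$ arises from some $u_{i-1}v_{i-1} \in \cseq_{i-1}(xy)$ via one of the two disjuncts in the definition; in both, combining the induction hypothesis $u_{i-1} \succeq x$ and $y \succeq v_{i-1}$ with the new $\pref$-edge ($v_{i-1}v_i$ or $u_iu_{i-1}$) and using transitivity of $\pref$ yields the required inequalities. For property (2), the base case is the irreflexivity of $\pref$, which rules out $xx$ and $yy$. For the inductive step, suppose $u_iv_i$ arises from $u_{i-1}v_{i-1}$ via the first disjunct (the second is symmetric): $u_ix = u_{i-1}x \not\in P^-$ holds directly by the induction hypothesis, while for $yv_i \not\in P^-$ I argue by contradiction---property (1) gives $y \succeq v_{i-1}$, and combining $yv_{i-1} \not\in P^-$ (treating the degenerate case $y = v_{i-1}$ separately) with $v_{i-1}v_i \in (\pref -\,P^-)$ and transitivity of $(\pref -\,P^-)$ forces $yv_i \in (\pref -\,P^-)$, contradicting $yv_i \in P^-$.

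For property (3), the containment $CON \cap \cseq(xy) = \emptyset$ gives $CON \subseteq P'$, and $P' \subseteq P^- \subseteq\,\pref$ is immediate. By Lemma \ref{lemma:contr-necc} it then suffices to show that $(\pref -\,P') = (\pref -\,P^-) \cup \cseq(xy)$ is transitive (irreflexivity is inherited from $\pref$). Given $uw, wv \in (\pref -\,P')$, I case-split. If both edges lie in $(\pref -\,P^-)$, transitivity of the SPO $(\pref -\,P^-)$ closes the case. If one edge lies in $(\pref -\,P^-)$ and the other in $\cseq(xy)$, transitivity of $\pref$ yields $uv \in\,\pref$: either $uv \in (\pref -\,P^-)$ and we are done, or $uv \in P^-$, in which case applying the appropriate disjunct of the $\cseq_{j+1}(xy)$ clause one level above the relevant $\cseq$-edge places $uv$ back into $\cseq(xy) \subseteq (\pref -\,P')$. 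The main obstacle is the final case, where both $uw$ and $wv$ lie in $\cseq(xy)$: here property (1) forces $y \succeq w \succeq x$, hence $y \succeq x$, which combined with $x \succ y$ contradicts the SPO axioms via irreflexivity and transitivity of $\pref$. This case is therefore vacuous, transitivity holds, and $P'$ is a \contraction of $\pref$ by $CON$.
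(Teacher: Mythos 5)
Your proof is correct and follows essentially the same route as the paper's: induction on the stratum index $i$ of $\cseq_i(xy)$ for Properties 1 and 2, and for Property 3 a case analysis on whether the two edges $uw,wv$ lie in $(\pref-\,P^-)$ or in $\cseq(xy)$, with the key observations that the mixed case pushes $uv$ into the next level $\cseq_{j+1}(xy)$ by definition and that the ``both in $\cseq(xy)$'' case is ruled out by Property 1 together with irreflexivity. The only (harmless) differences are presentational: you spell out the degenerate cases and the identity $(\pref-\,P')=(\pref-\,P^-)\cup\cseq(xy)$ more explicitly than the paper does.
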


\begin{proof}
  First, we prove that Properties 1 and 2 hold. We do it by induction on the index of $\cseq_i(xy)$
  used to construct $\cseq(xy)$.
    For every $uv \in \cseq_0(xy)$, Properties 1 and 2 hold by the construction of $\cseq_0$.
    Now let Properties 1 and 2 hold for $\cseq_n(xy)$, i.e.,
      \begin{equation*}\label{eq:cool-seq1}\tag{1}
      \forall u_nv_n \in \cseq_n(xy) \rightarrow u_n \succeq x \wedge y \succeq v_n
      \wedge u_nx,yv_n \not \in P^-
      \end{equation*}

      Pick any $u_{n+1}v_{n+1} \in \cseq_{n+1}(xy)$. By construction of $\cseq_{n+1}(xy)$, we have
      \begin{align*}\label{eq:cool-seq2}
         \exists u_{n}v_{n} \in \cseq_{i}(xy)\ .\ 
             & u_{n+1} = u_{n} \wedge v_{n} \succ v_{n+1} \wedge v_{n}v_{n+1} \not \in P^- \vee\\
             & u_{n+1} \succ u_{n} \wedge v_{n} =  v_{n+1} \wedge u_{n+1}u_{n} \not \in P^-\tag{2}
      \end{align*}
	Note that $u_{n+1} \succeq x$ and $y \succeq v_{n+1}$ follows from \eqref{eq:cool-seq1}, 
	\eqref{eq:cool-seq2}, and transitivity of $\succeq$. 
	Similarly, $u_{n+1}x, yv_{n+1} \not \in P^-$ is implied by 
	\eqref{eq:cool-seq1}, \eqref{eq:cool-seq2}, and transitivity of $(\pref -\, P^-)$. Hence, 
	Properties 1 and 2 hold for $\cup_{i=0}^n\cseq_i(xy)$ for any $n$.
      
      Now we prove Property 3: $(\pref -\, P')$ is an SPO 
	and $CON \subseteq P'$. The latter follows from $CON \subseteq P^-$ and
	$\cseq(xy) \cap CON = \emptyset$. Irreflexivity of $(\pref - \,P')$ follows
	from irreflexivity of $\pref$. Assume $(\pref - \,P')$ is not transitive, i.e.,
	there are $uv \not \in (\pref -\,P')$ and $uz,zv \in (\pref - \,P')$. Transitivity
	of $(\succ -\, P^-)$ implies that at least one of $uz, zv$ is in $\cseq(xy)$.
	However, Property 1 implies that \emph{exactly one} of $uz, zv$ is in
	$\cseq(xy)$ and the other one is not in $\cseq(xy)$ and thus in $(\pref -\,P^-)$. However, 
	$uz \in \cseq(xy)$ and $zv  \in (\pref - \,P^-)$ imply $uv \in \cseq(xy)$, and 
	thus $uv \in (\pref - \,(P^- - \cseq(xy))) = (\pref - \,P')$, 
	i.e., we derive a contradiction. A similar contradiction is derived in the case
	$uz \in (\pref - \,P^-)$ and $zv \in \cseq(xy)$. Therefore, $(\pref -\, P')$ is an SPO and $P'$ is a full contractor
	of $\pref$ by $CON$. \qed	
\end{proof}

Out of the three properties shown in Lemma \ref{lemma:cool-seq}, the last one is the most important.
It says that if an edge $xy$ of a \contraction{} is not needed to disconnect any $CON$-detours, then 
that edge may be dropped from the \contraction{} along with its entire outer edge set. 
A more general result which follows from Lemma \ref{lemma:cool-seq} is formulated 
in the next theorem. It represents a necessary and sufficient condition 
for a \contraction{} to be \emph{minimal}.

\begin{theorem}\label{thm:min-con-criterion}{\bf (Full-contractor minimality test).\ }
  Let $P^-$ be a \contraction{} of $\,\succ$ by $CON$. Then $P^-$ is a \emph{minimal}
  \contraction{} of $\,\succ$ by $CON$ if and only if for every $xy \in P^-$,   
there is a $CON$-detour in $\,\succ$ in which $xy$ is the only $P^-$-edge. 
\end{theorem}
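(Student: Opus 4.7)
The plan is to combine the outer edge set machinery of Lemma~\ref{lemma:cool-seq} with the path characterization from Lemma~\ref{lemma:contr-necc}, handling the two directions of the biconditional separately.

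For the ($\Leftarrow$) direction, I would argue by contraposition: suppose some \contraction{} $P' \subsetneq P^-$ of $\succ$ by $CON$ exists, and pick any $xy \in P^- \setminus P'$. By hypothesis there is a $CON$-detour in $\succ$ from some $a$ to $b$ (with $ab \in CON$) whose only $P^-$-edge is $xy$. All edges of this detour other than $xy$ lie in $\succ \setminus P^- \subseteq \succ \setminus P'$, while $xy \in \succ \setminus P'$ by the choice of $xy$. Hence the entire detour is a path from $a$ to $b$ in $\succ \setminus P'$, even though $ab \in CON \subseteq P'$, contradicting Lemma~\ref{lemma:contr-necc} applied to $P'$.

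For the ($\Rightarrow$) direction, fix any $xy \in P^-$. If $xy \in CON$, the single-edge sequence $x \to y$ already is a $CON$-detour in which $xy$ is trivially the only $P^-$-edge. Otherwise $xy \in P^- \setminus CON$, so $\cseq(xy)$ is defined. Were $\cseq(xy) \cap CON$ empty, Lemma~\ref{lemma:cool-seq}(3) would supply the strictly smaller \contraction{} $P^- \setminus \cseq(xy)$, contradicting the minimality of $P^-$; so pick some $ab \in \cseq(xy) \cap CON$, which satisfies $ab \neq xy$ since $xy \notin CON$. Parts 1 and 2 of Lemma~\ref{lemma:cool-seq} then give $a \succeq x$, $y \succeq b$, and $ax, yb \notin P^-$. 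Collapsing coincident vertices when $a = x$ or $b = y$, the sequence $a, x, y, b$ is a $\succ$-path from $a$ to $b$ whose only $P^-$-edge is $xy$, i.e., the desired $CON$-detour.

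The delicate step is the extraction of this detour in the forward direction: the inductive definition of $\cseq(xy)$ can chain $xy$ to $ab$ through arbitrarily many intermediate outer edges, and at first glance one might expect the resulting $CON$-detour to grow correspondingly long. The key insight is that Lemma~\ref{lemma:cool-seq} has already compressed that chain into two endpoint inequalities ($a \succeq x$, $y \succeq b$) and two outside-$P^-$ witnesses ($ax, yb \notin P^-$), so a detour of length at most three, going $a \to x \to y \to b$, always suffices no matter how deep $ab$ sits inside $\cseq(xy)$.
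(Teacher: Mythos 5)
Your proof is correct and takes essentially the same route as the paper's: the backward direction is the same appeal to Lemma~\ref{lemma:contr-necc}, and the forward direction uses the outer edge set $\cseq(xy)$ together with properties 1--3 of Lemma~\ref{lemma:cool-seq} exactly as the paper does. The only cosmetic differences are that you make the trivial case $xy \in CON$ explicit and collapse the paper's two-case split into a single dichotomy on whether $\cseq(xy) \cap CON$ is empty.
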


\begin{proof}$ $

  \leftsideproof The proof in this direction is straightforward. Assume that for every edge of the 
  \contraction{} $P^-$ there exists at least one $CON$-detour
  in which only that edge is in $P^-$. If $P^-$ loses any its subset $P$ containing that edge, then
  there will be a $CON$-detour in $\pref$ having no edges in $(P^- - \,P)$, and thus
  $(P^- - \,P)$ is not a \contraction{} of $\pref$ by $CON$ by Lemma \ref{lemma:contr-necc}. 
  Hence, $P^-$ is a minimal \contraction{}.

  \rightsideproof Let $P^-$ be a minimal \contraction{}. For the sake of contradiction, assume
	for some $xy \in P^-$, 1) there is no $CON$-detour which $xy$ belongs to, or 2)
  any $CON$-detour $xy$ belongs to has at least one more $P^-$-edge. If 1) holds, 
  then $\cseq(xy)$ has no edges in $CON$ by construction. Thus, Lemma \ref{lemma:cool-seq}
  implies that $(P^- - \cseq(xy))$ is a \contraction{} of $\succ$ by $CON$. Since $\cseq(xy)$ is not empty,
  we get that $P^-$ is not a minimal \contraction{} which is a contradiction. If 2) holds, then we use the same argument as
  above and show that $\cseq(xy) \cap CON = \emptyset$. If $\cseq(xy) \cap CON$ is not empty (i.e.,
  some $uv \in \cseq(xy) \cap CON$), then by Lemma \ref{lemma:cool-seq}, 
	$$u \succeq x \wedge x \succ y \wedge y \succeq v \wedge ux, yv \not \in P^-,$$
  and thus there is a $CON$-detour going from $u$ to $v$ in which $xy$ is the only $P^-$-edge. 
  This contradicts the initial assumption. \qed
\end{proof}

Note that using the definition of minimal \contraction{} to check the minimality of a \contraction{} $P^-$
requires checking the \contraction{} properties of \emph{all subsets} of $P^-$. 
In contrast, the minimality checking method shown in Theorem \ref{thm:min-con-criterion} 
requires checking properties of \emph{distinct elements} of $P^-$ with respect to its other members.

Sometimes a direct application of the minimality test from Theorem \ref{thm:min-con-criterion} 
is hard because it does not give any bound on the length of $CON$-detours. Hence, it is not clear
how it can be represented as a finite formula. 
Fortunately, the transitivity of preference relations implies that 
the minimality condition from Theorem \ref{thm:min-con-criterion} 
can be stated in terms of paths of length at most three.

\begin{corollary}\label{col:simple-cond}
  A \contraction{} $P^-$ of $\pref$ by $CON$ is \emph{minimal} if and only if for every edge $xy \in P^-$, 
  there is a $CON$-detour consisting of at most three edges among which only $xy$ is in $P^-$. 
\end{corollary}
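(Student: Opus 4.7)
The plan is to prove the two directions separately, with the left-to-right direction being immediate from Theorem~\ref{thm:min-con-criterion} (a $CON$-detour of length at most three is, in particular, a $CON$-detour), so essentially only the converse needs work.

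For the converse, I would start from a minimal contraction $P^-$ and pick an arbitrary edge $xy \in P^-$. Theorem~\ref{thm:min-con-criterion} supplies some $CON$-detour in $\pref$ in which $xy$ is the only $P^-$-edge; say it is a $\pref$-path $u = z_0, z_1, \ldots, z_n = v$ with $uv \in CON$, and with $z_iz_{i+1} = xy$ for some $i$, while every other $z_jz_{j+1}$ lies in $(\pref -\ P^-)$.

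The key step is to collapse the two $P^-$-free subpaths using the fact that $(\pref -\ P^-)$ is itself an SPO (since $P^-$ is a contraction), hence transitive. The initial segment $u=z_0,\ldots,z_i=x$ lies entirely in $(\pref -\ P^-)$, so if $u \neq x$ then $ux \in (\pref -\ P^-)$; similarly, if $y \neq v$ then $yv \in (\pref -\ P^-)$. Concatenating $ux$ (if present), then $xy$, then $yv$ (if present) yields a $\pref$-path from $u$ to $v$ of length at most three whose only $P^-$-edge is still $xy$, and it is a $CON$-detour because $uv \in CON$ is unchanged. A small case split on whether $u=x$ and/or $y=v$ covers detours of length $1$, $2$, or $3$.

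I do not expect a real obstacle here: the argument is essentially bookkeeping on top of Theorem~\ref{thm:min-con-criterion}, and the only nontrivial ingredient, transitivity of $(\pref -\ P^-)$, is guaranteed by the definition of \contraction. The only thing to be careful about is the degenerate cases where $u=x$ or $y=v$, so that we do not implicitly assume the existence of an edge $ux$ or $yv$ when the corresponding subpath is empty.
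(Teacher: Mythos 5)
Your proof is correct and follows essentially the same route as the paper's: the nontrivial direction is handled exactly as in the paper, by using the transitivity of $(\pref -\ P^-)$ to collapse the $P^-$-free prefix and suffix of the detour supplied by Theorem~\ref{thm:min-con-criterion} into single edges $ux$ and $yv$ (with the degenerate cases $u=x$, $y=v$ handled separately). The only nit is a labeling slip: the direction you call ``left-to-right'' (existence of a short detour implies minimality) is really the right-to-left implication of the stated equivalence, but the mathematical content assigned to each direction is right.
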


\begin{proof}$ $\\
 \leftsideproof Trivial.

 \rightsideproof For every $xy \in P^-$, pick any $CON$-detour $T$ in which the only $P^-$-edge is $xy$. If its length 
 is less or equal to three, then the corollary holds. Otherwise, $x$ is not the start node of $T$, or
 $y$ is not the end node of $T$, or both. Let the start node $u$ of $T$ be different from $x$. 
 Since the only common edge of $T$ and $P^-$ is $xy$,
 every edge in the path from $u$ to $x$ is an element of $(\pref - \,P^-)$. Transitivity of $(\pref - \,P^-)$
 implies $ux \in (\pref - \,P^-)$. Similarly, $yv \in (\pref - \,P^-)$ for the end node of $T$ if $y$ is different
 from $v$. Hence, there is a $CON$-detour of length at most three in which $xy$ is the only element of $P^-$. \qed 
\end{proof}

As a result, the following tests can be used to check the minimality of a \contraction $P^-$. 
In the finite case, $P^-$ is minimal if the following relational algebra expression
results in an empty set

	\begin{tabbing}
	 P - \=$[\pi_{\normsubscr{P}_2.\normsubscr{X}, \normsubscr{P}_2.\normsubscr{Y}}$((R$_1$ - P$_1$) $\join{\normsubscr{R}_1.\normsubscr{Y} = \normsubscr{P}_2.\normsubscr{X}}$ P$_2$ 
			$\join{\normsubscr{P}_2.\normsubscr{Y} = \normsubscr{R}_3.\normsubscr{X}}$ (R$_3$ - P$_3$) 
			$\join{\normsubscr{R}_1.\normsubscr{X} = C.X, \ \normsubscr{R}_3.\normsubscr{Y} = \normsubscr{C}.\normsubscr{Y}}$ C) \ $\cup$ \\
	     \> $\pi_{\normsubscr{P}_2.\normsubscr{X}, \normsubscr{P}_2.\normsubscr{Y}}$(P$_2$ 
			$\join{\normsubscr{P}_2.\normsubscr{Y} = \normsubscr{R}_3.\normsubscr{X}}$ (R$_3$ - P$_3$) 
			$\join{\normsubscr{P}_2.\normsubscr{X} = C.X, \ \normsubscr{R}_3.\normsubscr{Y} = \normsubscr{C}.\normsubscr{Y}}$ C) \ $\cup$ \\
	     \> $\pi_{\normsubscr{P}_2.\normsubscr{X}, \normsubscr{P}_2.\normsubscr{Y}}$((R$_1$ - P$_1$) $\join{\normsubscr{R}_1.\normsubscr{Y} = \normsubscr{P}_2.\normsubscr{X}}$ P$_2$ 
			$\join{\normsubscr{R}_1.\normsubscr{X} = C.X, \ \normsubscr{P}_2.\normsubscr{Y} = \normsubscr{C}.\normsubscr{Y}}$ C) \ $\cup$ 
	      C $]$,
	\end{tabbing}

	for the tables R, C and P with columns X and Y, storing $\pref$, $CON$, and $P^-$ correspondingly.
In the finitely representable case, $P^-$ is minimal if the following
formula is valid
   \begin{align*}
  \forall x,y \ (\form{P^-}(x,y) \Rightarrow \form{\succ}(x,y) \wedge 
	         \exists u,v  \ . \ & \form{CON}(u,v) \wedge 
			      ( \form{\succ}(u,x) \vee u = x) \wedge \\ 
			      & (\form{\succ}(y,v) \vee y = v) \wedge 
  			      \neg \form{P^-}(u,x) \wedge \neg \form{P^-}(y,v)).
   \end{align*}

Below we show examples of checking minimality of \contractions using Corollary 
\ref{col:simple-cond}. We note that when the relations are definable using \ero-formulas,
checking minimality of a \contraction{}
can be done by performing quantifier elimination on the above formula.

\begin{figure}
\centering
 \subfigure[Infinite case, Example \ref{ex:min-crit-inf}]{
	\begin{tikzpicture}[scale=0.8]
	 \tikzstyle{point} = [draw=black,rounded corners,inner sep=2pt];
	 \tikzstyle{upperedge} = [bend left=60, ->];
	  	\node (s) at (-0.5, 0) {};
	 	\node [label=below:{\scriptsize 0}, fill=black, inner sep=1pt] (n0) at (0, 0) {};
		\node [label=below:{\scriptsize 1}, fill=black, inner sep=1pt] (n1) at (1, 0) {};
		\node [label=below:{\scriptsize 2}, fill=black, inner sep=1pt] (n2) at (2, 0) {};
		\node [label=below:{\scriptsize 3}, fill=black, inner sep=1pt] (n3) at (3, 0) {};
		\node [label=below:{\scriptsize 4}, fill=black, inner sep=1pt] (n4) at (4, 0) {};
	  	\node (e) at (4.5, 0) {};
 		\draw (s) -- (n0) -- (n1) -- (n2) -- (n3) -- (n4) -- (e);
	 
		\draw[top color=black!30, bottom color = black!10, dashed] (1,0) arc (180:0:1.5 and 1) -- (4,0) arc (0:180:1 and 0.5) -- (n1);
		\draw[upperedge, dashed] (n0) to (n3);
		\draw (4.1,0.1) -- (4,0) -- (3.9, 0.1);
		\node(L) at (1.5, 1.5) {{\scriptsize infinitely many edges}};
		\node(P) at (3, 0.7){};
		\draw[-] (L) to (P);
		\draw (s) -- (n0) -- (n1) -- (n2) -- (n3) -- (n4) -- (e);
	\end{tikzpicture}
	\label{pic:min-crit-1}
  }
 \subfigure[Finite case, Example \ref{ex:min-crit-fin}]{
	\begin{tikzpicture}
	 \tikzstyle{point} = [draw=black,rounded corners,inner sep=2pt];
	 \tikzstyle{upperedge} = [bend left=60, ->];
		\node at (0,5) {
			{\small
			\begin{tabular}{c||c|c}
			$R$ & X & Y\\
			\hline
			\vspace{-1mm}
			& u & x\\
			\vspace{-1mm}
			& x & y\\
			\vspace{-1mm}
			& y & v\\
			\vspace{-1mm}
			& u & y\\
			\vspace{-1mm}
			& x & v\\
			\vspace{-1mm}
			& u & v\\
			\end{tabular}
			}
		};
		\node at (2.3,5.96) {
			{\small
			\begin{tabular}{c||c|c}
			$C$ & X & Y\\
			\hline
			\vspace{-1mm}
			& u & v\\
			\end{tabular}
			}
		};
		\node at (4.6,5.4) {
			{\small
			\begin{tabular}{c||c|c}
			$P$ & X & Y\\
			\hline
			\vspace{-1mm}
			& u & x\\
			\vspace{-1mm}
			& y & v\\
			\vspace{-1mm}
			& x & v\\
			\vspace{-1mm}
			& u & v\\
			\end{tabular}
			}
		};
		\node at (6.95,5.78) {
			{\small
			\begin{tabular}{c||c|c}
			$D$ & X & Y\\
			\hline
			\vspace{-1mm}
			& u & x\\
			\vspace{-1mm}
			& x & v\\
			\end{tabular}
			}
		};
	\end{tikzpicture}
	\label{pic:min-crit-2}
  }
\caption{Checking minimality of a \contraction{}}
\label{pic:min-crit}
\vspace{-4mm}
\end{figure}

\begin{example}\label{ex:min-crit-inf}
 Let a preference relation $\pref$ be defined by the formula $\form{\succ}(o, o') \equiv o.d < o'.d$, where $d$ is a $\Q$-attribute. Let a \contractingrel{} $CON$
 of $\pref$ be defined by the formula
 $$\form{CON}(o, o') \equiv (1 \leq o.d \leq 2 \wedge o'.d = 4) \vee (o.d = 0 \wedge o'.d = 3)$$
 (Figure \ref{pic:min-crit-1}). Denote the relation represented by the first and second disjuncts of $\form{CON}$ as 
 $CON_1$ and $CON_2$ correspondingly. 
 The relation $P^-$ defined by $\form{P^-}$ is a \contraction{} of $\pref$ by $CON$
 $$\form{P^-}(o, o') \equiv (1 \leq o.d \leq 2 \wedge 2 < o'.d \leq 4 ) \vee (o.d = 0 \wedge 0 < o.d' \leq 3).$$ 
 Similarly, denote the relations represented by the first and the second disjuncts of $\form{P^-}$ as $P^-_1$ and 
 $P^-_2$ correspondingly. We use Corollary \ref{col:simple-cond} to check the minimality of $P^-$. By the corollary, we need to consider $CON$-detours of 
 length at most three. Note that every $P^-_1$-edge starts a one- or two-edge $CON$-detour with the corresponding 
 $CON_1$-edge. Moreover, the second edge of all such two-edge detours is not contracted by $P^-$. 
 Hence, the minimal \contraction{} test is satisfied for $P^-_1$-edges. Now we consider $P^-_2$-edges. 
 All $CON$-detours which these edges belong to 1) correspond to $CON_2$-edges, and 2)
 are started by $P^-_2$-edges. Hence, we need to consider only $CON_2$-detours of length at most two. 
 When a $P^-_2$-edge ends in $o'$ with the value of $d$ in $(0, 1)$ and $(2,3]$, the second edge in the 
 corresponding two-edge $CON_2$-detour is not contracted by $P^-$. However, when $d$ is in $[1, 2]$, the second edge is already
 in $P^-$. Hence, $P^-$ is not minimal by Corollary \ref{col:simple-cond}. To minimize it,
 we construct $P^*$ by removing the edges from $P^-$ which end in $o'$ with $d$ in $[1,2]$

\begin{tabbing}
 \hspace{2cm}$\form{P^*}(o, o') \equiv $ \= $(1 \leq o.d \leq 2 \wedge 2 < o'.d \leq 4 ) \vee $\\
			     \> $(o.d = 0 \wedge (0 < o.d' < 1 \vee 2 < o'.d \leq 3))$
\end{tabbing}
\end{example}

\begin{example}\label{ex:min-crit-fin}
 Take a preference represented by the table $R$, and a \contracting{} $R$
 represented by the table $C$ (Figure \ref{pic:min-crit-2}). 
 Consider the table $P$ representing a \contracting{} of $R$ by $C$.
 Then the result of the relational algebra expression above 
 evaluated for these tables is shown in the
 table $D$. Since it is not empty, the \contraction{} represented by $P$ is not minimal.
 The minimality of $P$ can be achieved by removing from it any (but only one) tuple in $D$.
\end{example}

\section{Construction of a minimal \contraction}\label{sec:min-con-alg}

In this section, we propose a method of computing a minimal \contraction{}. 
We use the idea shown in 
Example \ref{ex:min-con-inf}. Pick for instance the set $P^-_1$. That
set was constructed as follows: we took the $CON$-edge
$x_1x_4$ and put in $P^-_1$ all the edges which start some
path from $x_1$ to $x_4$. For the preference relation $\,\succ$ 
from Example \ref{ex:min-con-inf}, $P^-_1$ turned out to be
a minimal \contraction{}.
As is it shown in the next lemma, the set consisting
of all edges starting $CON$-detours is a \contraction{} by $CON$.

\begin{lemma}\label{lemma:smaller-con}
  Let $\,\succ$ be a preference relation 
  and $CON$ be a \contracting{} relation of $\,\succ$.
  Then
  $$P^- := \{\ xy\ |\ \exists x'v \in CON \ . \ x' = x \wedge x' \succ y \wedge y \succeq v \} $$
  is a \contraction{} of $\,\succ$ by $CON$.
\end{lemma}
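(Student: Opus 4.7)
The plan is to verify directly the three conditions in Definition~\ref{def:contr-rel}: $CON \subseteq P^- \subseteq\,\succ$, irreflexivity of $(\succ - P^-)$, and transitivity of $(\succ - P^-)$. The first three are easy bookkeeping; transitivity is the only substantive step and will use the witness structure built into the definition of $P^-$.

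First I would knock off the easy inclusions. For $CON \subseteq P^-$, given $x'v \in CON$, instantiate the existential witness with the pair $(x',v)$ itself: we have $x' = x'$, $x' \succ v$ (since $CON \subseteq\,\succ$), and $v \succeq v$, so $x'v \in P^-$. For $P^- \subseteq\,\succ$, the defining clause contains $x' \succ y$ with $x' = x$, so $xy \in\,\succ$ immediately. Irreflexivity of $(\succ - P^-)$ is inherited from irreflexivity of $\succ$.

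The main step is transitivity of $(\succ - P^-)$. Suppose $xy, yz \in (\succ - P^-)$; I must show $xz \in (\succ - P^-)$. Transitivity of $\succ$ gives $xz \in\,\succ$, so it suffices to rule out $xz \in P^-$. Assume for contradiction that $xz \in P^-$. Then there exists a $CON$-edge $xv$ (i.e.\ the existential witness with $x' = x$) such that $x \succ z$ and $z \succeq v$. Now observe that this same $CON$-edge $xv$ also witnesses $xy \in P^-$: indeed $x \succ y$ holds, and chaining $y \succ z \succeq v$ by transitivity of $\succ$ gives $y \succ v$, hence $y \succeq v$. Therefore $xy \in P^-$, contradicting $xy \in (\succ - P^-)$. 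So $xz \notin P^-$, and $(\succ - P^-)$ is transitive.

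The only delicate point is the witness-reuse observation in the transitivity step: the $CON$-edge that places $xz$ into $P^-$ automatically places $xy$ into $P^-$ as well, because the starting vertex of the witness is fixed to $x$ and the endpoint condition $y \succeq v$ is produced for free from $y \succ z \succeq v$. Once this is noticed, the contradiction is immediate and the proof is complete.
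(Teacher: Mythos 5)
Your proof is correct, and it is somewhat more self-contained than the paper's. The paper disposes of this lemma in one line by appealing to Lemma~\ref{lemma:contr-necc} (the characterization of \contractions{} via the absence of $(\succ - P^-)$-paths between endpoints of $P^-$-edges), leaving the actual verification of that lemma's hypothesis implicit. You instead check the SPO axioms directly against Definition~\ref{def:contr-rel}, and your key step --- the observation that the $CON$-edge $xv$ witnessing $xz \in P^-$ also witnesses $xy \in P^-$ whenever $x \succ y \succ z$, because the witness is anchored at the start vertex $x$ and the condition $y \succeq v$ follows from $y \succ z \succeq v$ --- is exactly the substance that an honest application of Lemma~\ref{lemma:contr-necc} would require (there one argues that the \emph{first} edge of any $(\succ - P^-)$-path from $x$ to $y$ would itself land in $P^-$). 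Your route only ever needs two-edge configurations rather than paths of arbitrary length, so it is marginally more elementary; the paper's route buys brevity by reusing machinery already established. Both arguments hinge on the same structural fact that $P^-$ collects all edges \emph{starting} a $CON$-detour.
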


\begin{proof}
  By construction of $P^-$, $CON \subseteq P^-$. Lemma \ref{lemma:contr-necc} implies
$(\pref - \,P^-)$ is an SPO. Hence, $(\pref - \,P^-)$ is a \contraction{} of $\pref$ by $CON$. \qed
\end{proof}

However, in the next example we show that such a \contraction{} 
is not always minimal. Recall that by Theorem \ref{thm:min-con-criterion}, for every edge of a 
\contraction{} there should be a $CON$-detour which only shares that edge with the contractor. However, 
it may be the case that an edge starting a $CON$-detour does not have to be discarded because 
the $CON$-detour is already disconnected.
 
  \begin{figure}[ht]
	      \begin{center}
		\subfigure[Preference relation $\pref$]{
			\begin{tikzpicture}[scale=0.55]
			 \tikzstyle{cir} = [draw=black,rounded corners,inner sep=2pt];
			 \tikzstyle{upperedge} = [bend left=60, ->];
			 \tikzstyle{loweredge} = [bend right=60, ->];

			 \node           at (-1, 0) {};
			 \node[cir] (x1) at (1, 0) {{\scriptsize $x_1$}};
			 \node[cir] (x2) at (2, 0) {{\scriptsize $x_2$}};
			 \node[cir] (x3) at (3, 0) {{\scriptsize $x_3$}};
			 \node[cir] (x4) at (4, 0) {{\scriptsize $x_4$}};
			 \node[cir] (x5) at (5, 0) {{\scriptsize $x_5$}};
			 \node           at (6, 0) {};
			  
			 \draw[upperedge] (x1) to (x5);
			 \draw[upperedge, dashed] (x1) to (x4);
			 \draw[upperedge] (x1) to (x3);			
			 \draw[upperedge] (x1) to (x2);

			 \draw[loweredge, dashed] (x2) to (x5);
			 \draw[loweredge] (x2) to (x4);
			 \draw[loweredge] (x2) to (x3);

			 \draw[loweredge] (x3) to (x4);
			 \draw[loweredge] (x3) to (x5);

			 \draw[loweredge] (x4) to (x5);
			\end{tikzpicture}		
		   \label{pic:not-min-con-2-1}
		}
		\subfigure[Preference relation $(\pref -\,P^-)$]{
			\begin{tikzpicture}[scale=0.55]
			 \tikzstyle{cir} = [draw=black,rounded corners,inner sep=2pt];
			 \tikzstyle{upperedge} = [bend left=60, ->];
			 \tikzstyle{loweredge} = [bend right=60, ->];

			 \node           at (-1, 0) { $ $};
			 \node[cir] (x1) at (1, 0) {{\scriptsize $x_1$}};
			 \node[cir] (x2) at (2, 0) {{\scriptsize $x_2$}};
			 \node[cir] (x3) at (3, 0) {{\scriptsize $x_3$}};
			 \node[cir] (x4) at (4, 0) {{\scriptsize $x_4$}};
			 \node[cir] (x5) at (5, 0) {{\scriptsize $x_5$}};
			 \node           at (7, 0) {$ $};
			  
			 \draw[upperedge] (x1) to (x5);
			 \draw[upperedge, dashed] (x1) to (x4);
			 \draw[upperedge, dashed] (x1) to (x3);			
			 \draw[upperedge, dashed] (x1) to (x2);

			 \draw[loweredge, dashed] (x2) to (x5);
			 \draw[loweredge, dashed] (x2) to (x4);
			 \draw[loweredge, dashed] (x2) to (x3);

			 \draw[loweredge] (x3) to (x4);
			 \draw[loweredge] (x3) to (x5);

			 \draw[loweredge] (x4) to (x5);
			\end{tikzpicture}

		   \label{pic:not-min-con-2-2}
		}
		\subfigure[Minimally contracted $\,\succ$]{
			\begin{tikzpicture}[scale=0.55]
			 \tikzstyle{cir} = [draw=black,rounded corners,inner sep=2pt];
			 \tikzstyle{upperedge} = [bend left=60, ->];
			 \tikzstyle{loweredge} = [bend right=60, ->];

			 \node              (-1, 0) {$ $};
			 \node[cir] (x1) at (1, 0) {{\scriptsize $x_1$}};
			 \node[cir] (x2) at (2, 0) {{\scriptsize $x_2$}};
			 \node[cir] (x3) at (3, 0) {{\scriptsize $x_3$}};
			 \node[cir] (x4) at (4, 0) {{\scriptsize $x_4$}};
			 \node[cir] (x5) at (5, 0) {{\scriptsize $x_5$}};
			  
			 \draw[upperedge] (x1) to (x5);
			 \draw[upperedge, dashed] (x1) to (x4);
			 \draw[upperedge, dashed] (x1) to (x3);			
			 \draw[upperedge] (x1) to (x2);

			 \draw[loweredge, dashed] (x2) to (x5);
			 \draw[loweredge, dashed] (x2) to (x4);
			 \draw[loweredge, dashed] (x2) to (x3);

			 \draw[loweredge] (x3) to (x4);
			 \draw[loweredge] (x3) to (x5);

			 \draw[loweredge] (x4) to (x5);
			\end{tikzpicture}

		   \label{pic:not-min-con-2-3}
		}	        
	      \end{center}    
     \vspace{-0.7cm}
     \caption{Preference contraction}
     \label{pic:non-min-con2}	
	\vspace{-5mm}
  \end{figure}
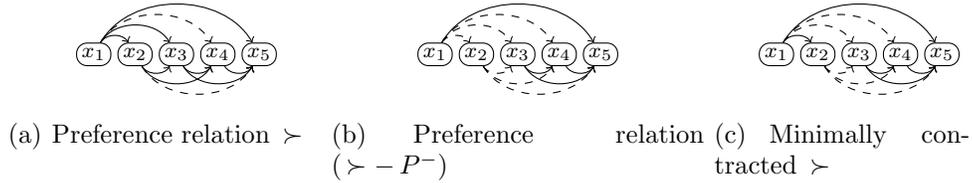

\begin{example}\label{ex:smaller-con-issues}
  Let a preference relation $\pref$ be a total order of $\{x_1, \ldots, x_5\}$
  (Figure \ref{pic:not-min-con-2-1}). 
  Let a \contractingrel{} 
  $CON$ be $\{x_1x_4, x_2x_5\}$.
  Let $P^-$ be defined as in Lemma \ref{lemma:smaller-con}. That is
  $P^- = \{x_1x_2, x_1x_3, x_1x_4, x_2x_3, x_2x_4, x_2x_5\}$.
  Then
  $(\,\succ -\: P^-)$ is shown in Figure \ref{pic:not-min-con-2-2} as the set of solid edges.
  $P^-$ is not minimal because 
  $(P^- - \{x_1x_2\})$ (Figure \ref{pic:not-min-con-2-3}) is also a \contraction{} of 
  $\,\succ$ by $CON$. In fact, $(P^- - \{x_1x_2\})$ is a \emph{minimal} \contraction{}
  of $\,\succ$ by $CON$.
  As we can see, having the edge $x_1x_2$ in $P^-$ is not necessary. 
  First, it is not a $CON$-edge. Second, 
  the edge $x_2x_4$ of the $CON$-detour $x_1 \succ x_2 \succ x_4$ is already in $P^-$.
\end{example}

As we have shown in Example \ref{ex:smaller-con-issues}, 
a minimal \contraction{} can be constructed by including in it only the edges which
start some $CON$-detour, if the detour is not already disconnected. 
Thus, before adding such an edge to a \contraction{}, we need to know if 
an edge \emph{not starting} that detour is already in the \contraction.
Here we propose the following idea of computing a minimal \contraction{}. 
Instead of contracting $\pref$ by $CON$ at once, split $CON$ into \emph{\layers}, and 
contract $\pref$ incrementally by the \layers of $CON$. A \layer of $CON$ consists
of only those edges whose detours can be disconnected simultaneously in a minimal way. 
The method of splitting a \contraction into \layers we propose to use is as follows.

\begin{definition}\label{def:k-dest}
  The \emph{\layerindex} of an edge $xy \in CON$ is
  \emph{the maximum length of a $\,\succ$-path started by $y$ and 
    consisting of the end nodes of $CON$-edges}. A \emph{\layer} is
    the set of all $CON$-edges with the same \emph{\layerindex}.
\end{definition}

This method of stratification has the following useful property. If a preference relation 
is contracted minimally by the \layers with indices of up to $n$, then contracting
that relation minimally by the \layer with the index $n+1$ minimally guarantees the minimality
of the entire contraction. 

Clearly, if a preference relation is infinite, a tuple can start $\succ$-paths of arbitrarily large lengths. 
Therefore, the \layerindex of $CON$-edge may be undefined. We exclude such cases 
here, so we can assume that for each edge of $CON$ relations, the \layerindex is defined.

\begin{definition}\label{def:bounded-rel}
 Let $CON$ be a \contractingrel{} of a preference relation $\pref$. Let $K_{CON} = \{y \ | \ \exists x\ .$ $\ xy \in CON\}$, 
 and $\succ_{CON} \ = \ \succ \cap \ K_{CON} \times K_{CON}$. 
 Then $CON$ is \emph{\divtolayers} iff for every 
	$y \in K_{CON}$ there is an integer $k$ such that all the paths started by $y$ in 
  $\succ_{CON}$ are of length at most $k$.
 $CON$ is \emph{\boundedlayer} iff there is a constant $k$ such that
	all paths in $\succ_{CON}$ are of length at most $k$.
\end{definition}

Definition \ref{def:bounded-rel} implies that for every edge 
of \divtolayers $CON$, the \layerindex is defined. 
 Since the shortest path 
in $\succ_{CON}$ is of length 0, the least \layerindex for \divtolayers relations is 0.
Instances of \boundedlayer \contractingrels are shown in 
Example \ref{ex:motiv} ($k=0$), Example \ref{ex:motiv-finite} ($k=1$), and Example \ref{ex:min-crit-inf} 
($k = 1$). 
Below we present an approach of constructing a minimal \contraction{} for a \emph{\divtolayers} relation $CON$.

\begin{theorem}\label{thm:bounded-rel-con}{\bf (Minimal \contraction{} construction).\ }
 Let $\pref$ be a preference relation, and $CON$ be a \divtolayers \contractingrel{} of $\pref$.
 Let $L_i$ be the set of the end nodes of all $CON$-edges of \layer $i$. Then $P^-$, defined as follows,
 is a \emph{minimal \contraction{}} of $\succ$ by $CON$

 $$P^- = \bigcup_{i \in 0}^{\infty} E_i,$$
 where 
	$$
	E_i = \{xy \ | \ \exists v \in L_i\ .\ xv \in CON \wedge x \succ y \wedge y \succeq v \wedge	
	yv \not \in ( P^-_{i-1} \cup CON)\}
	$$
	$$
	P^-_{-1} = \emptyset, 
	$$
	$$
	P^-_i = \bigcup_{j = 0}^i E_i
	$$
	
\end{theorem}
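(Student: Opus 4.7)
I would prove the theorem in three stages: (i) $CON \subseteq P^-$; (ii) $(\pref - P^-)$ is an SPO, so $P^-$ is a \contraction{} of $\pref$ by $CON$; and (iii) $P^-$ is minimal. Stage (i) is immediate: for every $xv \in CON$ with $v \in L_i$, instantiating the defining condition of $E_i$ at $y = v$ makes the clause $yv \notin P^-_{i-1} \cup CON$ vacuous (since $vv \notin \,\pref$), so $xv \in E_i \subseteq P^-$.

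Before tackling (ii) and (iii) I would establish the following key ``layer-descent'' lemma: for every $xy \in E_i$ with witness $v \in L_i$, one has $yv \in (\pref - P^-)$ whenever $y \neq v$. The defining condition already gives $yv \notin P^-_{i-1} \cup CON$, so only $yv \notin E_k$ for $k \geq i$ remains. Supposing $yv \in E_k$ with witness $v' \in L_k$ (so $yv' \in CON$ and $v \succeq v'$), the case $v = v'$ collapses to $yv = yv' \in CON$, contradicting $yv \notin CON$; while $v \succ v'$, with both $v,v' \in K_{CON}$, makes the \layerindex{} of $v$ strictly exceed that of $v'$, forcing $i > k$, contrary to $k \geq i$.

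Given this lemma, stage (iii) follows from Theorem \ref{thm:min-con-criterion}: for each $xy \in E_i$ with witness $v$, the $\pref$-path $x \succ y \prefeq v$ is a $CON$-detour (of length one if $y = v$, two otherwise) in which the only $P^-$-edge is $xy$. For stage (ii), Lemma \ref{lemma:contr-necc} reduces the SPO property of $(\pref - P^-)$ to the absence of $(\pref - P^-)$-paths from $x$ to $y$ whenever $xy \in P^-$. Using the layer-descent lemma, any such path extends by appending $yv$ to a $(\pref - P^-)$-path from $x$ to $v$ with $xv \in CON$, so it suffices to establish the sharper claim: \emph{for every $xv \in CON$, no $(\pref - P^-)$-path goes from $x$ to $v$}.

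I would prove this sharper claim by well-founded induction on the pair (\layerindex{} of $v$, path length $n$) ordered lexicographically. The base $n = 1$ is trivial since the single edge $xv$ lies in $CON \subseteq P^-$. For $n \geq 2$, examine $u_1 v \in \pref$. \textbf{(A)} If $u_1 v \in (\pref - P^-)$, all conditions for $xu_1 \in E_i$ with witness $v$ are met, forcing $xu_1 \in P^-$ and contradicting that the first path edge lies in $\pref - P^-$. \textbf{(B)} If $u_1 v \in CON$, the IH applied to the tail $u_1 \to \cdots \to v$ (same layer $i$, length $n - 1$) with $CON$-edge $u_1 v$ yields a contradiction. \textbf{(C)} If $u_1 v \in P^- - CON$, so $u_1 v \in E_j$ for some $j$, its witness $v''$ gives $u_1 v'' \in CON$ and $v \succ v''$, hence $j < i$ by stratification; the layer-descent argument again yields $vv'' \in (\pref - P^-)$, so the extended path $u_1 \to \cdots \to v \to v''$ of length $n$ violates the IH at strictly smaller layer $j$. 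The main obstacle is subcase (C): the path length does not decrease but the layer index strictly drops, which is precisely why the lex order on (layer, length) is the right measure and why the \divtolayers{} property of $CON$ (ensuring finite layer indices) is indispensable for the termination of the induction.
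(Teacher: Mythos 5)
Your proof is correct, and while it reuses the same two load-bearing ideas as the paper's argument, it organizes the hardest part differently. Like the paper, you establish minimality by exhibiting, for each $xy \in E_i$ with witness $v$, the $CON$-detour $x \succ y \succeq v$ whose only $P^-$-edge is $xy$, and then invoking Theorem \ref{thm:min-con-criterion}; and your ``layer-descent'' lemma ($yv \in (\pref - P^-)$ because any $E_k$ membership of $yv$ would force its witness $v'$ to satisfy $v \succ_{CON} v'$ and hence $k < i$) is exactly the observation the paper uses in its final paragraph to rule out $yv \in E_k$ for $k > n$. Where you diverge is the proof that $(\pref - P^-)$ is an SPO: the paper runs an induction on the \layerindex{} showing each partial remainder $(\pref - P^-_n)$ is transitive, with an inner case analysis on $zv$, and then needs a separate limit step to pass from the $P^-_n$ to $P^-$; you instead reduce, via Lemma \ref{lemma:contr-necc} and layer descent, to the single claim that no $(\pref - P^-)$-path joins the endpoints of a $CON$-edge, and prove that by a lexicographic induction on (\layerindex{} of the target node, path length). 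Your route handles the full union $P^-$ in one pass and makes explicit where stratifiability enters (well-foundedness of the measure, with your subcase (C) showing why the \layerindex{} must be the dominant coordinate); the paper's route keeps the induction purely on \layers{}, which aligns more directly with the incremental construction implemented in Algorithms \ref{alg:min-contr} and \ref{alg:minContrFinite}. Two small presentational points: in stage (i) the clause $yv \notin (P^-_{i-1} \cup CON)$ at $y=v$ is satisfied rather than ``vacuous'' (both sets are irreflexive as subsets of $\pref$), and your stage (iii) should be stated as conditional on stage (ii), since the minimality test of Theorem \ref{thm:min-con-criterion} presupposes that $P^-$ is already a \contraction{} — but you do prove (ii), so nothing is missing.
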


Intuitively, the set $E_i$ contains all the $CON$ edges of \layer $i$ along with the edges of $\pref$ which 
need to be discarded to contract the preference relation by that \layer. $P^-_i$ is the union of all such 
sets up to \layer $i$. 

\begin{proofofthm-bounded-rel-con}
 Every $E_i$ containts the $CON$-edges of \layer $i$. Thus, $P^-$ contains $CON$. Now we prove that
 $(\pref -\,P^-)$ is an SPO. Its irreflexivity follows from the irreflexivity of $\pref$. Transitivity
 is proved by induction on \layerindex.

 It is given that $\pref$ is transitive. Now assume $(\pref - \,P^-_{n})$ is transitive. Prove that 
 $(\pref - \,P^-_{n+1}) = (\pref - \,P^-_n - E_{n+1})$ is transitive. For the sake of contradiction, assume
	\begin{align*}\label{eq:bounded-rel-con-1}
	\exists x, y, z\ .\ xy \not \in (\pref - \,P^-_{n+1}) \wedge xz, zy \in (\pref - \,P^-_{n+1}) \tag{1}
	\end{align*}
 which implies 
	\begin{align*}\label{eq:bounded-rel-con-3}
	xz, zy \not \in E_{n+1} \cup P^-_n \tag{2}
	\end{align*}
 Transitivity of $(\pref - \,P^-_{n})$ and \eqref{eq:bounded-rel-con-1} imply $xy \in (\pref - \,P^-_{n})$ and thus
 $xy \in E_{n+1}$. Hence, 
	\begin{align*}\label{eq:bounded-rel-con-2}
 		\exists v \in L_n\ .\ xv \in CON \wedge x \succ y \wedge y \succeq v \wedge	
	yv \not \in ( P^-_{n} \cup CON) 	\tag{3}
	\end{align*}
 According to \eqref{eq:bounded-rel-con-2}, $y \succeq v$. If $y = v$, then \eqref{eq:bounded-rel-con-3} and \eqref{eq:bounded-rel-con-2} imply $xz \in E_{n+1}$ which is a contradiction.
 If $y \succ v$, then $xz \not \in E_{n+1}$ implies $zv \in P^-_{n} \cup CON$ by the construction of $E_{n+1}$. 
	Note that $zv \in CON$ implies $zv$ is a $CON$-edge of \layerindex $n+1$ and thus either $zy \in E_{n+1}$
	or $yv \in P^-_{n} \cup CON$, which contradicts \eqref{eq:bounded-rel-con-3} and \eqref{eq:bounded-rel-con-2}.
	If $zv \in P^-_n$, then $zy, yv \not \in P^-_n$ implies intransitivity of $(\pref - \,P^-_n)$, which contradicts 
	the inductive assumption. Thus, $P^-_{n+1}$ is a \contraction{} of $\succ$ by $CON$ by induction. 
	Now assume that $(\pref -\,P^-)$ is not transitive. Violation of transitivity means that there is
	an edge $xy \in P^-$ such that there exists a path from $x$ to $y$ none of whose edges is $P^-$
	(Lemma \ref{lemma:contr-necc}). Since $xy$ must be 
	in $P^-_n$ for some $n$, that implies intransitivity of $(\pref - \,P^-_n)$, which is a contradiction.
	Thus $P^-$ is a \contraction{} of $\pref$ by $CON$.

	Now we prove that
 	$P^-$ is a \emph{minimal} \contraction{}. If it is not, then by Theorem \ref{thm:min-con-criterion}, 
	there is $xy \in P^-$ for which there is no $CON$-detour which shares with $P^-$ only the edge $xy$.
        Note that $xy \in P^-$ implies $xy \in E_n$ for some $n$. 
	By definition of $E_{n}$, there is a $CON$-detour $x \succ y \succeq v$ which shares with $P^-_n$ only $xy$.
	Since all $CON$-detours which $xy$ belongs to have other $P^-$-edges,
	$yv \in P^-$. Since $yv \not \in P^-_{n}$, there must exist $k > n$ such that $yv \in E_k$. 
	However, that is impossible by construction: every $CON$-detour which may be started by $yv$ 
	must have the \layerindex not greater than $n$. \qed
\end{proofofthm-bounded-rel-con}

  \begin{figure}[ht]
\vspace{-0.7cm}
	      \begin{center}
		\subfigure[\small{$\pref$ and $CON$}]{

			\begin{tikzpicture}[scale=0.7]
			 \tikzstyle{cir} = [draw=black,rounded corners,inner sep=2pt];
			 \tikzstyle{upperedge} = [bend left=60, ->];
			 \tikzstyle{loweredge} = [bend right=60, ->];
			 \node[cir] (x1) at (1, 0) {{\scriptsize $x_1$}};
			 \node[cir] (x2) at (2, 0) {{\scriptsize $x_2$}};
			 \node[cir] (x3) at (3, 0) {{\scriptsize $x_3$}};
			 \node[cir] (x4) at (4, 0) {{\scriptsize $x_4$}};
			 \node[cir] (x5) at (5, 0) {{\scriptsize $x_5$}};
			  
			 \draw[upperedge] (x1) to (x2);
			 \draw[upperedge] (x2) to (x3);
			 \draw[upperedge] (x3) to (x4);
			 \draw[upperedge] (x4) to (x5);

			 \draw[upperedge, dashed] (x1) to (x4);
			 \draw[loweredge, dashed] (x2) to (x5);
			\end{tikzpicture}
			\label{pic:min-con-1}
		}
		\subfigure[\small{$P^-_0$}]{
			\begin{tikzpicture}[scale=0.7]
			 \tikzstyle{cir} = [draw=black,rounded corners,inner sep=2pt];
			 \tikzstyle{upperedge} = [bend left=60, ->];
			 \tikzstyle{loweredge} = [bend right=60, ->];
			 \node[cir] (x1) at (1, 0) {{\scriptsize $x_1$}};
			 \node[cir] (x2) at (2, 0) {{\scriptsize $x_2$}};
			 \node[cir] (x3) at (3, 0) {{\scriptsize $x_3$}};
			 \node[cir] (x4) at (4, 0) {{\scriptsize $x_4$}};
			 \node[cir] (x5) at (5, 0) {{\scriptsize $x_5$}};
			  
			 \draw[upperedge] (x1) to (x2);
			 \draw[upperedge] (x3) to (x4);
			 \draw[upperedge] (x4) to (x5);

			 \draw[loweredge, dashed] (x2) to (x3);
			 \draw[loweredge, dashed] (x2) to (x4);
			 \draw[loweredge, dashed] (x2) to (x5);
			\end{tikzpicture}

		   \label{pic:min-con-2}
		}	        
		\subfigure[\small{$P^-_1$}]{
			\begin{tikzpicture}[scale=0.7]
			 \tikzstyle{cir} = [draw=black,rounded corners,inner sep=2pt];
			 \tikzstyle{upperedge} = [bend left=60, ->];
			 \tikzstyle{loweredge} = [bend right=60, ->];
			 \node[cir] (x1) at (1, 0) {{\scriptsize $x_1$}};
			 \node[cir] (x2) at (2, 0) {{\scriptsize $x_2$}};
			 \node[cir] (x3) at (3, 0) {{\scriptsize $x_3$}};
			 \node[cir] (x4) at (4, 0) {{\scriptsize $x_4$}};
			 \node[cir] (x5) at (5, 0) {{\scriptsize $x_5$}};
			  
			 \draw[upperedge] (x1) to (x2);
			 \draw[upperedge] (x3) to (x4);
			 \draw[upperedge] (x4) to (x5);

			 \draw[loweredge, dashed] (x2) to (x3);
			 \draw[loweredge, dashed] (x2) to (x4);
			 \draw[loweredge, dashed] (x2) to (x5);

			 \draw[upperedge, dashed] (x1) to (x3);
			 \draw[upperedge, dashed] (x1) to (x4);
			\end{tikzpicture}
		   \label{pic:min-con-3}
		}	        
	      \end{center}
                   \vspace{-0.7cm}
     \caption{Using Theorem \ref{thm:bounded-rel-con} to compute a minimal \contraction}
     \label{pic:min-contr}
\vspace{-0.7cm}
  \end{figure}

\begin{example}\label{ex:min-pref-con}
  Let a preference relation $\pref$ be a total order of $\{x_1, \ldots, x_5\}$ 
(Figure \ref{pic:min-con-1}, the transitive 
edges are omitted for clarity). Let a \contracting{} $CON$ be $\{x_1x_4, x_2x_5\}$. 
We use Theorem \ref{thm:bounded-rel-con} to construct a minimal \contraction of $\pref$ by $CON$.
The relation $CON$ has two \layers: $L_0 = \{x_2x_5\}$, $L_1 = \{x_1x_4\}$. 
Then $E_0 = \{x_2x_3, x_2x_4, x_2x_5\}$, $P^-_0 = E_0$, $E_1 = \{x_1x_3, x_1x_4\}$, 
$P^-_1 = E_0 \cup E_1$, and a minimal \contraction of $\pref$ by $CON$ is $P^- = P^-_1$. 
\end{example}

It is easy to observe that the \contraction{} $P^-$ constructed in 
Theorem \ref{thm:bounded-rel-con} has the property that its every 
edge \emph{starts} at least one $CON$-detour in which $xy$ is the only $P^-$-edge.
Full contractors which have this property are
called \emph{prefix}. Prefix \contractions{} are minimal by Theorem \ref{thm:min-con-criterion}. 
It turns out that a prefix \contraction{} is unique for a given preference relation
and a given \contractingrel{}.

\begin{proposition}\label{prop:prefix-unique}
 Given a preference relation $\succ$ and a \contractingrel{} $CON$ \divtolayers, there exists 
a unique prefix \contraction{} $P^-$ of $\pref$ by $CON$. 
\end{proposition}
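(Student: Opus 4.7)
The plan is to treat existence and uniqueness separately. Existence is essentially free: the remark preceding the proposition observes that the \contraction{} $P^-$ produced by the construction of Theorem~\ref{thm:bounded-rel-con} is prefix, because for each $xy \in E_i$ the witnessing $CON$-edge $xv$ with $v \in L_i$ yields a detour $x \succ y \succeq v$ in which, by the $yv \not\in P^-_{i-1} \cup CON$ clause, $xy$ is the only $P^-$-edge (no later stratum $E_j$ will add $yv$, since any such edge would need a $CON$-edge ending at $v$ of strictly higher stratum, yet the stratum of every $CON$-edge ending at $v$ is fixed by $v$).

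For uniqueness, I would suppose $P_1^-$ and $P_2^-$ are two prefix \contractions{} and, for contradiction, pick any $xy$ in their symmetric difference; say $xy \in P_1^- \setminus P_2^-$. Applying the prefix property of $P_1^-$ produces $v_0 \in K_{CON}$ with $xv_0 \in CON$, $x \succ y \succeq v_0$, and $xy$ the only $P_1^-$-edge of this detour. Since $xv_0 \in CON \subseteq P_2^-$ but $xy \not\in P_2^-$, Lemma~\ref{lemma:contr-necc} forces either $y = v_0$ (which would make $xy = xv_0 \in CON \subseteq P_2^-$, a contradiction) or $y \succ v_0$ with $yv_0 \in P_2^-$. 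Because $xy$ is the \emph{only} $P_1^-$-edge of its witnessing detour and $CON \subseteq P_1^-$, it follows that $yv_0 \not\in P_1^-$ and $yv_0 \not\in CON$, so $yv_0 \in P_2^- \setminus P_1^-$.

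Now swap the roles of $P_1^-$ and $P_2^-$ and repeat the argument with $yv_0$. The prefix property of $P_2^-$ yields $v_1 \in K_{CON}$ with $yv_1 \in CON$, $y \succ v_0 \succeq v_1$, and $yv_0$ the only $P_2^-$-edge. The case $v_0 = v_1$ would place $yv_0$ in $CON$, which we just excluded, so $v_0 \succ v_1$; the same bookkeeping then gives $v_0 v_1 \in P_1^- \setminus P_2^-$ and $v_0 v_1 \not\in CON$. Iterating produces an infinite strictly descending chain $v_0 \succ v_1 \succ v_2 \succ \cdots$ with every $v_i \in K_{CON}$, i.e.\ an infinite $\succ_{CON}$-path starting at $v_0$, which directly contradicts the stratifiability of $CON$.

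The main obstacle is guaranteeing the strict descent $v_i \succ v_{i+1}$ at every step, since that strictness is what makes the contradiction with stratifiability bite. This is precisely what the inclusion $CON \subseteq P_1^- \cap P_2^-$ buys: each potential collapse $v_i = v_{i+1}$ would force the current bridging edge into $CON$ and hence into both \contractions{}, contradicting its membership in the symmetric difference that drives the iteration.
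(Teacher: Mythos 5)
Your proof is correct, and your uniqueness argument takes a genuinely different route from the paper's. The paper proves uniqueness by anchoring to the construction of Theorem~\ref{thm:bounded-rel-con}: by induction on the \layerindex it shows that each $P^-_n$ is contained in every prefix \contraction{} $P'$, and then uses the minimality of prefix \contractions{} (Theorem~\ref{thm:min-con-criterion}) to turn the inclusion $P^- \subseteq P'$ into equality. You instead compare two arbitrary prefix \contractions{} directly and never touch the \layers for the uniqueness part: an edge in the symmetric difference is bounced between the two \contractions{} by alternately applying the prefix property of the one containing it and Lemma~\ref{lemma:contr-necc} to the other (which contains the relevant $CON$-edge), and this generates a strictly descending chain $v_0 \succ v_1 \succ \cdots$ of elements of $K_{CON}$, i.e.\ arbitrarily long $\succ_{CON}$-paths from $v_0$, contradicting the assumption that $CON$ is \divtolayers. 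The key delicacy --- that each step is strict because a collapse $v_i = v_{i+1}$ would force the bridging edge into $CON$ and hence into both \contractions{} --- is exactly where the argument could break, and you handle it. The trade-off: the paper's proof is shorter given Theorem~\ref{thm:bounded-rel-con} and simultaneously identifies the unique prefix \contraction{} with that explicit construction, whereas yours is self-contained, symmetric in the two \contractions{}, needs no appeal to minimality, and makes visible the precise role of stratifiability (forbidding the infinite descent). Two details to spell out when writing this up: the reduction of the witnessing detour to the two-edge form $x \succ y \succeq v_0$ uses the transitivity of $(\pref - \,P_1^-)$, the same collapse as in Corollary~\ref{col:simple-cond}; and in the existence part your parenthetical is loosely worded --- the reason no later \layer adds $yv$ is that $yv \in E_j$ would require some $w \in L_j$ with $yw \in CON$ and $v \succeq w$, which is impossible for $j > i$ because $v \in L_i$ bounds the \layerindex of every such $w$ by $i$.
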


\begin{proof}
 The existence of a prefix \contraction{} follows from Theorem \ref{thm:bounded-rel-con}. The fact that
 every prefix \contraction{} is equal to $P^-$ constructed by Theorem \ref{thm:bounded-rel-con} can be 
 proved by induction in $CON$ \layerindex. Namely, we show that for every $n$, $P^-_n$ is contained
 in any prefix \contraction{} of $\succ$ by $CON$. 
 Clearly, the set $E_{0}$ contracting $\pref$ by the $0^{th}$ \layer
 of $CON$ has to be in any prefix \contraction{}. Assume every edge in $P^-_{n}$ is in any 
 prefix \contraction{} of $\succ$ by $CON$. If an edge $xy \in E_{n+1} - CON$, then there is a $CON$-detour
 $x \succ y \succ v$ in which $xy$ is the only $P^-$-edge (i.e., $yv \not \in P^-$). Hence if $xy$ is not 
 in some prefix \contraction{} $P'$, then $yv$ has to be in $P'$ by Lemma \ref{lemma:contr-necc}. 
 However, $P^-_n \subset P'$ is enough to disconnect every $CON$-detour with index up to $n$, and 
 $yv$ can only start a $CON$-detour with the \layerindex up to $n$. Hence $P'$ is not a minimal 
 \contraction{} and $P^-$ is a unique prefix \contraction{}. \qed
\end{proof}

\section{Contraction by \boundedlayer relations}

In this section, we consider practical issues of computing minimal \contractions{}. In particular,
we show how the method of constructing a prefix \contraction{} we have proposed in Theorem \ref{thm:bounded-rel-con}
can be adopted to various classes of preference and \contracting{} relations. 
Note that the definition of the minimal \contraction{} in Theorem \ref{thm:bounded-rel-con}
is recursive. Namely, to find the edges we need to discard for contracting the preference
relation by the \layer $n+1$ of $CON$, we need to know which edges to discard for contracting it by 
all the previous \layers. It means that for \contracting{} relations which are not \boundedlayer
(i.e., $CON$ has infinite number of \layers), the corresponding computation will never terminate. 

Now assume that $CON$ is a \boundedlayer relation. First we note that any \contractingrel{} of 
a finite
preference relation is \boundedlayer: all paths in such preference relations are not longer than 
the size of the relation, and \contractingrels{} are required to be subsets of the preference 
relations. At the same time, if $CON$ is a \contractingrel{} of an infinite preference relation, then 
the \boundedlayerprop of $CON$ does
not imply the finiteness of $CON$. In particular, it may be the case that the \emph{length} of 
all paths in $\succ_{CON}$ is bounded, but the \emph{number} of paths is infinite. This 
fact is illustrated in the next example.

\begin{example}\label{ex:k-layer-rel}
  Let a preference relation $\,\succ$ be defined as
\mbox{$o \succ o'\equiv o.price < o'.price$}. 
Let every tuple have two $\Q$-attributes: $price$ and $year$. 
Let also the \contracting{} relations $CON_1$ and $CON_2$ be 
  defined as 
  \begin{align*}
  CON_1(o,o') \equiv & \ o.price < 1 \wedge (o'.price = 2 \vee o'.price = 3),\\
  CON_2(o,o') \equiv & \ o.price < 1 \wedge o'.price \geq 2.
  \end{align*}
	then 
  \begin{align*}
   K_{CON_1} \equiv & \ \{ o \ | \ o.price = 2 \vee o.price = 3\}\\
   K_{CON_2} \equiv & \ \{ o \ | \ o.price \geq 2\}
  \end{align*}
	and
  \begin{align*}
   o \succ_{CON_1} o' \equiv & \ o.price = 2 \wedge o'.price = 3\\
   o \succ_{CON_2} o' \equiv & \ o.price \geq 2 \wedge o'.price > 2 \wedge o.price < o'.price
  \end{align*}

  Then $CON_1$ is \emph{\boundedlayer} since despite the fact that the number of 
  edges in $\succ_{CON_1}$ and in $CON_1$ is infinite (due to the infiniteness of the domain of $year$), the length of the longest
  path in $\succ_{CON_1}$ equals to $1$. Such paths are started by tuples
  with the value of $price$ equal to $2$ and ended by tuples with $price$ equal to $3$. At the same time,
  $CON_2$ is not \boundedlayer since $price$ is a $\Q$-attribute and thus
  there is no constant bounding the length of all paths in $\pref_{CON_2}$.
\end{example}

Below we consider the cases of finite and finitely representable \boundedlayer \contractingrels{} separately.
 
\subsection{Computing prefix \contraction{}: finitely representable relations}

Here we assume that the relations $CON$ and $\pref$ are represented by finite \ero-formulas
$F_{CON}$ and $F_{\pref}$. We aim to construct a finite \ero-formula $F_{P^-}$ which 
represents a prefix \contraction{} of $\succ$ by $CON$. The 
function {\tt minContr($F_{\pref}$, $F_{CON}$)} shown below exploits the method of constructing
prefix \contractions{} from Theorem \ref{thm:bounded-rel-con} adopted to formula 
representations of relations. All the intermediate variables
used in the algorithm store formulas. Hence, for example, any expression in the form $''F(x,y):= \ldots''$ 
means that the formula-variable $F$ is assigned the formula written in the right-hand side, which
has two free tuple variables $x$ and $y$.
The operator $QE$ used in the algorithm computes a quantifier-free formula equivalent to its argument formula. 
For \ero-formulas, the operator $QE$ runs in time polynomial in the size of its argument formula (if the number 
of attributes in $\A$ is fixed), and exponential in the number of attributes in $\A$.

To compute formulas representing different \layers of $CON$, {\tt getStratum} (Algorithm \ref{alg:min-contr-get-layer})
is used. 
It takes three parameters: the formula $F_{\succ_{CON}}$ representing the relation $\succ_{CON}$, 
the formula $F_{K_{CON}}$ representing the set of the end nodes of $CON$-edges, and 
the \layerindex $i$. It returns a formula which represents the set of the end nodes of $CON$-edges of \layer $i$ , 
or {\tt undefined} if the corresponding set is empty. That formula is computed
according to the definition of a \layer.

\begin{algorithm}[H]
  \caption{{\tt minContr}($F_{\succ}$, $F_{CON}$)}\label{alg:min-contr}
	\label{alg:minContr}
  \begin{algorithmic}[1]
\STATE $i = 0$
\STATE $F_{P^-_{-1}}(x, y) := false$
\STATE $F_{K_{CON}}(y) := QE( \exists x\ .\ F_{CON}(x, y))$
\STATE $F_{\succ_{CON}}(x,y) := F_{CON}(x, y) \wedge F_{K_{CON}}(x) \wedge F_{K_{CON}}(y)$
\STATE $F_{L_i}(y) := \mbox{{\tt getStratum}}(F_{\succ_{CON}}, F_{K_{CON}}, i)$ 
\WHILE{$F_{L_i}$ is defined}
   \STATE 
\begin{tabbing}
$F_{E_i}(x, y)$ $ := QE(\exists v \, .\, $ \= $F_{L_i}(v) \wedge F_{CON}(x, v) \wedge F_{\succ}(x, y) \wedge$\\
		             	  \> $(y = v \vee F_{\succ}(y, v) \wedge \neg (F_{P^-_{i-1}}(y, v) \vee F_{CON}(y,v))))$
              \end{tabbing} 

   \STATE $F_{P^-_i}(x,y) := F_{P^-_{i-1}}(x,y) \vee F_{E_i}(x,y)$
   \STATE $i$ := $i$ + 1;
   \STATE $F_{L_i}(y) := \mbox{{\tt getStratum}}(F_{\succ_{CON}}, F_{K_{CON}}, i)$ 
   \ENDWHILE

   \RETURN $P^-_i$
   \end{algorithmic}
\end{algorithm}

\begin{algorithm}[H]
  \caption{{\tt getStratum}($F_{\succ_{CON}}$, $F_{K_{CON}}$, $i$)}\label{alg:min-contr-get-layer}
  \begin{algorithmic}[1]
	\REQUIRE {$i \geq 0$}
	\IF{i\ =\ 0}
	\STATE \begin{tabbing}
	$F_{L_i}(y) := QE($ \= $F_{K_{CON}}(y) \wedge \neg \exists x_1 ( F_{\succ_{CON}}(y, x_1) ))$
       \end{tabbing}
	\ELSE
	\STATE \begin{tabbing}
	$F_{L_i}(y) := QE($ \= $\ \ \ \exists x_1, \ldots, x_i \ \ \ .\  F_{\succ_{CON}}(y, x_1) \wedge F_{\succ_{CON}}(x_1, x_2) \wedge \ldots $ \\ \>$ \wedge \ F_{\succ_{CON}}(x_{i-1}, x_i)) \wedge$ 
		       $\neg \exists x_1, \ldots, x_{i+1} \ .\  F_{\succ_{CON}}(y, x_1) $ \\ 
			\> $\wedge \ F_{\succ_{CON}}(x_1, x_2) \wedge \ldots \wedge
						F_{\succ_{CON}}(x_{i}, x_{i+1})))$
       \end{tabbing}
	\ENDIF
	\IF{$\exists y\ .\ F_{L_i}(y)$}
		\RETURN $F_{L_i}$
	\ELSE
		\RETURN {\tt undefined}
	\ENDIF
   \end{algorithmic}
\end{algorithm}

\vspace{-3mm}
\begin{proposition}\label{prop:min-contr-correctness}
 Let $CON$ be a \boundedlayer \contractingrel of a preference relation $\succ$. Then  
Algorithm \ref{alg:min-contr} terminates and computes a prefix \contraction of $\pref$ by $CON$. 
\end{proposition}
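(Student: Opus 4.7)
The plan is to decompose the claim into two pieces, termination and correctness, and to reduce both to Theorem \ref{thm:bounded-rel-con} by showing that Algorithm \ref{alg:min-contr} is essentially a syntactic rendering of the set-theoretic construction given there. Since the paper has already established that \ero-formulas are closed under quantifier elimination and that satisfiability of \ero-formulas is decidable, I will use those two facts freely; all formula-variables manipulated by the algorithm therefore remain finite \ero-formulas throughout, and the existential tests inside {\tt getStratum} are effective.

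For termination, I would use the \boundedlayerprop directly. By Definition \ref{def:bounded-rel}, there is a constant $k$ bounding the length of every path in $\succ_{CON}$. Consequently, the set $L_i$ from Definition \ref{def:k-dest} is empty for all $i > k$. I would then verify, by induction on $i$, that the formula $F_{L_i}$ returned (or rejected) by {\tt getStratum} faithfully represents $L_i$: the $i=0$ branch encodes ``$y$ is the end node of a $CON$-edge and starts no $\succ_{CON}$-path,'' and the $i>0$ branch encodes ``$y$ starts a $\succ_{CON}$-path of length exactly $i$'' by asserting existence of one of length $\geq i$ together with non-existence of one of length $\geq i+1$. Since $F_{L_i}(y)$ is unsatisfiable for $i > k$, the test $\exists y\,.\,F_{L_i}(y)$ eventually fails and the while loop exits after at most $k+1$ iterations.

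For correctness, I would again proceed by induction on $i$, showing that $F_{E_i}$ and $F_{P^-_i}$ are \ero-representations of the sets $E_i$ and $P^-_i$ defined in Theorem \ref{thm:bounded-rel-con}. The body of line 7 is a direct transliteration of the set-builder expression for $E_i$, with the condition $y \succeq v$ expanded as $y = v \vee F_\succ(y,v)$, and with $y v \notin (P^-_{i-1} \cup CON)$ rendered as the negation of $F_{P^-_{i-1}}(y,v) \vee F_{CON}(y,v)$. Granted these equalities at each stratum, the finite disjunction $F_{P^-_k}$ returned by the algorithm denotes precisely the prefix \contraction{} $P^- = \bigcup_{i=0}^{\infty} E_i$ of Theorem \ref{thm:bounded-rel-con} (the union is effectively finite because $E_i = \emptyset$ whenever $L_i = \emptyset$). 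That theorem then yields minimality, and Proposition \ref{prop:prefix-unique} together with the shape of the construction gives that $P^-$ is \emph{the} unique prefix \contraction.

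The main obstacle I anticipate is the stratum step: one has to be careful that the ``maximal length'' clause in Definition \ref{def:k-dest} is captured by the ``exists path of length $i$ and no path of length $i+1$'' encoding in {\tt getStratum}, and in particular that the well-foundedness needed to make this equivalence hold comes from the \boundedlayerprop rather than from stratifiability alone (otherwise a node could start paths of every finite length without starting one of maximal length). Once that subtlety is handled, the rest is routine bookkeeping about formula-level vs. set-level equivalence under $QE$.
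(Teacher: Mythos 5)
Your proposal is correct and follows essentially the same route as the paper, which justifies the proposition in a single sentence by observing that Algorithm \ref{alg:min-contr} is a formula-level transliteration of the construction in Theorem \ref{thm:bounded-rel-con}; you simply make explicit the two pieces the paper leaves implicit (termination after at most $k+1$ strata by the \boundedlayerprop, and the inductive faithfulness of $F_{L_i}$, $F_{E_i}$, $F_{P^-_i}$ under $QE$). Your cautionary remark about the ``exists length $i$, no length $i+1$'' encoding is well taken but already covered by the hypothesis, since finite stratifiability implies stratifiability and hence each node attains a maximal path length.
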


Proposition \ref{prop:min-contr-correctness} holds because Algorithm \ref{alg:min-contr} uses the 
construction from Theorem \ref{thm:bounded-rel-con}. 
Below we show an example of computing a prefix \contraction{} for a finitely representable preference relation.

\vspace{-3mm}
\begin{example}
 Let a preference relation $\succ$ be defined by the following formula
 $$\form{\succ}(o, o') \equiv o.m = BMW \wedge o'.m = VW \vee o.m = o'.m \wedge o.price < o'.price$$
and a \contracting{} $CON$ be defined by 
\begin{align*}
\form{CON}(o, o') \equiv o.m = o'.m \wedge ( & (11000 \leq o.price \leq 13000 \wedge o'.price = 15000) \vee \\
						   & (10000 \leq o.price \leq 12000 \wedge o'.price = 14000))
\end{align*}
where $m$ is a $\D$-attribute and $price$ is a $\Q$-attribute. Then $\form{K_{CON}}(o) \equiv o.price = 14000 \vee
o.price = 15000$ and $F_{\succ_{CON}}(o, o') \equiv \form{\succ}(o, o') \wedge \form{K_{CON}}(o) \wedge \form{K_{CON}}(o')$.
The end nodes of the $CON$ \layers are defined by the following formulas:
\begin{align*}
 \form{L_0}(o) \equiv & o.price = 15000 \wedge o.m \neq BMW\\
 \form{L_1}(o) \equiv & o.price = 15000 \wedge o.m = BMW \vee o.price = 14000 \wedge o.m \neq BMW\\
 \form{L_2}(o) \equiv & o.price = 14000 \wedge o.m = BMW.
\end{align*}
The relations contracting all $CON$ \layers are defined by the following formulas
\begin{align*}
 \form{E_0}(o, o') \equiv & o.m = o'.m \neq BMW \wedge 11000 \leq o.price \leq 13000 \wedge 13000 < o'.price \leq 15000\\
 \form{E_1}(o, o') \equiv & o.m = o'.m = BMW \wedge 11000 \leq o.price \leq 13000 \wedge 13000 < o'.price \leq 15000 \vee \\
			  & o.m = o'.m \neq BMW \wedge 10000 \leq o.price < 11000 \wedge 13000 < o'.price \leq 14000 \\
 \form{E_2}(o, o') \equiv & o.m = o'.m = BMW \wedge 10000 \leq o.price \leq 11000 \wedge 13000 < o'.price \leq 14000
\end{align*}
Finally, a \contraction{} $P^-$ of $\pref$ by $CON$ is defined by 
\begin{align*}
 \form{P^-}(o, o') \equiv  o.m = o'.m \wedge (& 11000 \leq o.price \leq 13000 \wedge 13000 < o'.price \leq 15000 \vee \\
			  		       & 10000 \leq o.price < 11000 \wedge 13000 < o'.price \leq 14000)
\end{align*}

\end{example}

The \boundedlayerprop of $CON$ is crucial for the termination of the algorithm: the algorithm 
does not terminate for relations not \boundedlayer. Hence, given a \contracting{} relation, it is 
useful to know if it is \boundedlayer or not. Let us consider the formula $F_{\succ_{CON}}$. 
Without loss of generality, we assume it is represented in DNF.
By definition, $CON$ is a \boundedlayer relation if and only if there is a constant $k$ such that
all $\succ_{CON}$ paths are of length at most $k$. In the next theorem, we show that
this property can be checked by a single evaluation of the quantifier elimination operator. 

\begin{theorem}\label{thm:disjuncts}{\bf (Checking \boundedlayerprop).\ }
 Let $\form{R}$ be an \ero-formula, representing an SPO relation $R$, in the following form
 $$F_R(o, o') = F_{R_1}(o, o') \vee \ldots \vee F_{R_l}(o, o'),$$
 where $\form{R_i}$ is a conjunction of atomic formulas. 
 Then checking if there is a constant $k$ such that the length of all $R$-paths is at most $k$ 
 can be done by a single evaluation of $QE$ over a formula of size linear in $|F_R|$. 
\end{theorem}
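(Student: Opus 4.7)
The plan is to reduce boundedness of $R$ to a single closed condition that inspects each DNF disjunct separately. Define
\[
  \Phi \;\equiv\; \bigwedge_{i=1}^{l}\neg\exists x,y,z \, . \, F_{R_i}(x,y)\wedge F_{R_i}(y,z),
\]
i.e., ``no single disjunct $R_i$ contains a $2$-edge chain.'' The formula $\Phi$ is closed and its size is $\sum_i(2|F_{R_i}|+O(1))=O(|F_R|)$, so a single evaluation of $QE$ reduces it to $\mathit{true}$ or $\mathit{false}$. It then suffices to prove that $\Phi$ holds iff there is a uniform bound on the length of all $R$-paths.

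For the direction $\neg\Phi\Rightarrow R$ unbounded, suppose $F_{R_i}$ admits a 2-chain $x_{0} R_{i} x_{1} R_{i} x_{2}$. By induction on $n$ I would show that $R_i$ contains $R_i$-chains of every length, by inserting a fresh node $w$ between two consecutive nodes $u,v$ of an existing chain. For each atomic subformula of $F_{R_i}$ the resulting per-attribute constraint on $w$ is one of: (i) $w.A$ must equal a specific value (from $o.A=c$, $o'.A=c$, or $o.A=o'.A$); (ii) $w.A$ must avoid a finite set of values (disequality atoms); or (iii) $w.A$ must lie in a nonempty open interval of $\mathbb{Q}$ (from strict or non-strict rational-order atoms). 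All per-attribute constraints are jointly satisfiable because the middle node of the original 2-chain already satisfies every source \emph{and} every target requirement simultaneously; and since the 2-chain endpoints are pairwise distinct by irreflexivity of $R$, at least one attribute offers a non-singleton choice set, so $w$ can be chosen distinct from every previously chosen node. Hence $R_i$, and therefore $R$, contains chains of arbitrary length.

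For the direction $\Phi\Rightarrow R$ bounded, suppose $R$ admits a chain $x_{0},\ldots,x_{n}$. Transitivity of $R$ yields $(x_{j},x_{k})\in R$ for every $j<k$, so $(x_{j},x_{k})$ satisfies some $F_{R_{m(j,k)}}$, giving an $l$-coloring of the edges of the complete graph on $\{0,\ldots,n\}$. By Ramsey's theorem, once $n$ exceeds the $l$-color Ramsey number for triangles there must exist $p<q<r$ with $m(p,q)=m(q,r)=m(p,r)=i^{*}$ for some $i^{*}$, which produces a 2-chain $(x_{p},x_{q}),(x_{q},x_{r})\in R_{i^{*}}$ and contradicts $\Phi$. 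So $n$ is uniformly bounded. The main obstacle is the insertion argument in the first direction: one must check on a case-by-case basis over the four kinds of ero atoms that no combination can pin $w$ to an already-used tuple, using as the enabling fact that any pair of atoms that would do so (e.g., $o.A=c$ together with $o'.A=c'$ for $c\neq c'$) would already preclude a 2-chain in $R_i$, so whenever $\neg\Phi$ holds the insertion has room to proceed and the density of $\mathbb{Q}$ (or the infiniteness of the $C$-domain) supplies a distinct $w$.
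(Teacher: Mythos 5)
Your architecture is genuinely different from the paper's, and where it is complete it is attractive. The paper proceeds through three reductions: (i) $R$ is bounded iff every disjunct $R_i$ is bounded, proved by an explicit ``major subsequence'' extraction with concrete bounds; (ii) $R_i$ is bounded iff its restriction $\Lambda_A$ to some single attribute $A$ is bounded; (iii) $\Lambda_A$ is bounded iff it admits no path of three edges, proved by a case analysis over the shapes of the per-attribute constraints. Your Ramsey argument for the direction $\Phi\Rightarrow{}$bounded is correct and replaces step (i) cleanly: it is the same pigeonhole idea as the major-subsequence lemma, packaged more slickly, at the cost of an inexplicit bound, which the theorem does not require. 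Your test formula --- ``no disjunct contains a two-edge chain'' --- turns out to be equivalent to the paper's ``for each disjunct, some attribute slice contains no three-edge chain,'' and it is linear in $|F_R|$, so the criterion itself is sound.

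The gap is in the direction $\neg\Phi\Rightarrow R$ unbounded, and it sits exactly where the paper does its real work (its lemma on $\Lambda_A$-paths). You must show that a single two-edge chain $x_0\,R_i\,x_1\,R_i\,x_2$ forces $R_i$-chains of every length, and your insertion argument does not establish that the constraints on the inserted node $w$ are satisfiable. Writing the attribute-$A$ part of $F_{R_i}$ as $\phi_L\wedge\phi_R\wedge\phi_M$ (source range $r_L$, target range $r_R$, cross constraint), the node $w$ must satisfy not only $\phi_L\wedge\phi_R$ --- which is all that ``the middle node already satisfies every source and every target requirement'' witnesses --- but also $\phi_M(u.A,w.A)\wedge\phi_M(w.A,v.A)$. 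When $\phi_M$ is an order atom this demands a point of $r_L\cap r_R$ strictly between $u.A$ and $v.A$, and nonemptiness of that set does not follow from $r_L\cap r_R\neq\emptyset$ alone; it needs the observation that a two-edge chain already forces $r_L$ and $r_R$ to be non-singletons, hence open intervals minus finitely many holes whose common part around $x_1.A$ is infinite and dense. That is precisely the case analysis over atom shapes that the paper carries out, and your sketch defers it rather than doing it. Your freshness argument is also misdirected: distinctness of $w$ from earlier nodes is automatic, since a repeated node in an $R_i$-chain would create a cycle in the SPO $R$, whereas the claim that ``at least one attribute offers a non-singleton choice set'' does not follow from the pairwise distinctness of $x_0,x_1,x_2$. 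A cleaner repair avoids insertion altogether: a two-edge chain in $R_i$ restricts to a two-edge chain in each $\Lambda_A$; show each such $\Lambda_A$ is unbounded by the case analysis above; then recombine the independent per-attribute witnesses into an $R_i$-chain of any length, which is legitimate because every atomic formula mentions exactly one attribute. With that lemma supplied, your proof is complete.
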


In Theorem \ref{thm:disjuncts}, we assume that each atomic formula using the operators $\leq, \geq$
is transformed to disjunction of two formulas: one which uses the strict comparison operator and the other
using the equality operator. The proof of Theorem \ref{thm:disjuncts} and the details of the 
corresponding \boundedlayerprop test are provided in Appendix A.

\subsection{Computing prefix \contraction{}: finite relations}

In this section, we consider finite relations $\pref$ and $CON$. We assume that the relations
are stored in separate tables: a preference relation table \tname{R} and a \contractingrel{}
table \tname{C}, each having two columns \tname{X} and \tname{Y}. Every tuple in a table
corresponds to an element of the corresponding binary relation. Hence, \tname{R} has to be an SPO 
and \tname{C} $\subseteq$ \tname{R}.
Here we present an algorithm of computing a prefix \contraction{} of a preference relation $\pref$
by $CON$ represented by such tables. Essentially, the algorithm is an adaptation of 
Theorem \ref{thm:bounded-rel-con}. 

The function {\tt minContrFinite} takes two arguments:  \tname{R} and \tname{C}. 
The function is implemented in terms of relational algebra operators. First, it constructs two tables: 
\tname{EC} storing the end nodes of all \tname{C}-edges, and \tname{RC} storing a restriction of the original preference relation 
\tname{R} to \tname{EC}. These two tables are needed for obtaining the \layers of \tname{C}. After that, the function 
picks all \layers of \tname{C} one by one and contracts the original preference relation by each \layer in turn, as
shown in Theorem \ref{thm:bounded-rel-con}. 

The extraction of the \layers of $CON$ in the order of the \layerindex is performed as follows.
It is clear that the nodes ending $CON$-edges of \layer $0$ do 
not start any edge in \tname{RC}. 
The set \tname{E} computed in line 8 is a difference of the set \tname{EC} of the nodes ending 
\tname{C}-edges and the nodes starting some edges in \tname{RC}. Hence, \tname{E} stores all the nodes ending
\tname{C}-edges of \layer $0$.
To get the end nodes of the next \layer of \tname{C}, we need remove all the edges from \tname{RC} which end in 
members of \tname{E}, 
and remove \tname{E} from \tname{EC}. After the \layer with the highest index is obtained, the relation 
\tname{EC} becomes empty. 

\begin{proposition}\label{prop:minContrFinite-corr}
 Algorithm \ref{alg:minContrFinite} computes a prefix \contraction of $R$ by $C$. Its running time is
 $\mathcal{O}(|C|^2 \cdot |R| \cdot log|R|)$. 
\end{proposition}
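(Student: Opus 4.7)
The plan is to argue correctness by matching the algorithm step by step against the construction of Theorem~\ref{thm:bounded-rel-con}, and to establish the running time bound by counting the number of iterations of the main loop and bounding the per-iteration cost of the relational algebra expressions that produce each \layer contribution.

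For correctness, I would first prove by induction on $i$ that after the $i$-th iteration of the while loop, the table \tname{EC} contains exactly those end nodes of $C$-edges whose \layerindex is at least $i$, and the table \tname{RC} stores the restriction of $\succ_{CON}$ to those surviving end nodes. The base case ($i = 0$) is immediate from the initialization of \tname{EC} and \tname{RC}. For the inductive step, the key observation is that a node $v \in K_{CON}$ has \layerindex $i$ exactly when all its outgoing $\succ_{CON}$-edges lead to end nodes of strictly smaller \layerindex. Hence, once the previous \layers' nodes have been removed from \tname{EC} and the corresponding edges from \tname{RC}, the nodes in \tname{EC} that do not start any \tname{RC}-edge are precisely the end nodes of \layer $i$. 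This identifies the quantity \tname{E} computed in line~8 with $L_i$ from Theorem~\ref{thm:bounded-rel-con}. The relational algebra expression used to extend $P^-$ is then the finite analog of the set-builder definition of $E_i$, so by the theorem and by Proposition~\ref{prop:prefix-unique} the loop returns the unique prefix \contraction{} of $R$ by $C$.

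For the running time, I would first argue that the number of iterations of the while loop is at most $|C|$: the loop terminates when \tname{EC} becomes empty, each iteration removes at least one node, and $|\tname{EC}| \le |C|$ throughout. Per iteration, the dominant cost is the three-way join that computes the edges to be added to $P^-$, which combines \tname{R}, \tname{C}, and (the complement of) the current $P^-$, anchored by the at-most-$|C|$ edges of the current \layer. Using sort-merge joins over tables of size at most $|R|$ gives $\mathcal{O}(|C| \cdot |R| \cdot \log |R|)$ work per iteration; the stratum bookkeeping operations run on tables of size at most $|C|^2 \le |C| \cdot |R|$ and are therefore absorbed. Multiplying by the $|C|$ iterations yields the claimed $\mathcal{O}(|C|^2 \cdot |R| \cdot \log |R|)$ bound.

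The main obstacle is the inductive claim that the maintained \tname{RC} faithfully represents the subgraph in which the next \layer is characterized by having no outgoing edges; one must verify that deleting all \tname{RC}-edges ending in \tname{E} is equivalent to moving to the induced subgraph on the nodes not yet stratified, so that the cheap ``no outgoing edge'' test still correctly identifies the next \layer. The remaining steps are a routine translation of the expressions in Theorem~\ref{thm:bounded-rel-con} into the relational algebra operations of the algorithm and a standard cost analysis of sort-merge joins.
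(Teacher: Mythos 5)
Your proof is correct and takes essentially the same route as the paper, whose own argument is just the observation that Algorithm~\ref{alg:minContrFinite} implements the construction of Theorem~\ref{thm:bounded-rel-con}, with the running time obtained by keeping \tname{R}, \tname{RC}, \tname{EC} and \tname{P} sorted and maintaining \tname{P} as a marked copy of \tname{R}. Your inductive verification that \tname{EC} and \tname{RC} faithfully track the not-yet-stratified end nodes, and your explicit bound of $|C|$ on the number of loop iterations, merely fill in details the paper leaves implicit.
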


Proposition \ref{prop:minContrFinite-corr} holds because Algorithm \ref{alg:minContrFinite} uses the construction from Theorem \ref{thm:bounded-rel-con}. 
The stated running time may be obtained by applying some simple optimizations: 
(i) sorting \tname{EC} after constructing
it (line 3), (ii) sorting on \tname{X}, \tname{Y} the table \tname{R} and the table \tname{RC} right after its construction 
(line 5), (iii) keeping these relations sorted after every change. 
In addition to that, we store the relation \tname{P} containing
the intermediate \contraction{} edges as a copy of \tname{R}, in which the edges which 
belong to the prefix \contraction{} are marked. By doing so, \tname{P} is maintained in the sorted state throughout
the algorithm.

\begin{algorithm}[H]
  \caption{{\tt minContrFinite}(\tname{R}, \tname{C})}\label{alg:minContrFinite}
  \begin{algorithmic}[1]
    \REQUIRE{ \tname{R} is transitive, \tname{C} $\subseteq$ \tname{R}}
	\STATE \tname{P} $\gets$ \tname{C}
	\STATE /* Get the end nodes of all \tname{C}-edges */
	\STATE \tname{EC}  $\gets$ $\pi_{\mbox{{\scriptsize \tname{Y}}}}(\mbox{\tname{C}})$
  	\STATE{/*\ \ \tname{RC} is related to \tname{R} as $\succ_{CON}$ to $\succ$ in Definition \ref{def:bounded-rel} */}
	\STATE \tname{RC}  $\gets$ $\pi_{\mbox{{\tname{R.X}, \tname{R.Y}}}}$ (\tname{EC}$_1$ $\join{{\tname{EC}_1.\tname{Y} = \tname{R}.\tname{X}}}$ \tname{R} $\join{{\tname{EC}_2.\tname{Y} = 
	\tname{R}.\tname{Y}}}$ \tname{EC}$_2$)
	\WHILE{\tname{EC} not empty}
		\STATE /* Get the end nodes of the next \layer \tname{C}-edges */
		\STATE \tname{E} $\gets$ \tname{EC} $-$ $\pi_{X}$(\tname{RC})
		\STATE /* Prepare \tname{EC} and \tname{RC} for the next iteration */
		\STATE \tname{EC} $\gets$ \tname{EC} $-$ \tname{E}
		\STATE \tname{RC} $\gets$ \tname{RC} $-$ \tname{RC} $\join{{\ \tname{RC}.\tname{Y} = \tname{E}.\tname{Y}}}$ \tname{E}
		\STATE /* Add to \tname{P} the \tname{R}-edges contracting the current \layer of \tname{C}*/
		\STATE \tname{P} $\gets$ \tname{P} $\cup$ $\pi_{\mbox{{\scriptsize \tname{R}$_1$.\tname{X}, \tname{R}$_1$.\tname{Y}}}}$ 
		(\tname{R}$_1$ $\join{\mbox{{\scriptsize \tname{R}$_1$.\tname{Y} = \tname{R}$_2$.\tname{X}}}}$ 
			(\tname{R}$_2$ $-$ \tname{P}) $\join{\ \mbox{{\scriptsize \tname{R}$_1$.\tname{X} = \tname{C}.\tname{X}, \tname{R}$_2$.\tname{Y} = \tname{C}.\tname{Y}}}}$ 
		(\tname{C} $\join{\ \mbox{{\scriptsize \tname{C}.\tname{Y} = \tname{E}.\tname{Y}}}}$ \tname{E}))
	\ENDWHILE
    \RETURN \tname{P}
  \end{algorithmic}
\end{algorithm}

\section{Preference-protecting contraction}

Consider the operation of minimal preference contraction described above. 
In order to contract a preference relation, a user has to specify a \contractingrel{} $CON$. 
The main criteria we use to define a contracted preference relation is \emph{minimality 
of preference change}. However,
a minimal \contraction $P^-$ may contain additional preferences which are not in $CON$. So far, we have not paid
attention to the contents of $P^-$, assuming that any minimal \contraction{} is equally good
for a user. However, this may not be the case in real life. Assume that an original preference
relation $\pref$ is combined from two preference relations $\pref \ = \ \pref_{old} \cup \pref_{recent}$, 
where $\pref_{old}$ describes user preferences introduced by the user a long time ago, and
$\pref_{recent}$ describes more recent preferences. Now assume that the user wants to contract
$\pref$ by $CON$, at least two minimal \contractions{} are possible: $P^-_1$ which consists of
$CON$ and some preferences of $\pref_{old}$, and $P^-_2$ consisting of $CON$ and some preferences of $\pref_{recent}$.
Since $\pref_{recent}$ has been introduced recently, discarding members of $\pref_{old}$ 
may be more reasonable then members of $\pref_{recent}$. 
Hence, sometimes there is a need to compute \contractions{} which \emph{protect} 
some existing preferences from removal.

Here we propose an operator of \emph{preference-protecting} contraction. 
In addition to a \contracting{} $CON$, a subset $P^+$ of the original preference relation
to be \emph{protected from removal} in the contracted preference relation may also be specified. 
Such a relation is complementary with respect to the \contracting{}:
the relation $CON$ defines the preferences to discard, whereas
the relation $P^+$ defines the preferences to protect.

\begin{definition}\label{def:min-cons-contr}
  Let $\pref$ be a preference relation and $CON$ be a \contractingrel{} of $\pref$. 
  Let a relation $P^+$ be such that $P^+ \subseteq\, \succ$.  
  A \contraction{} $P^-$ of $\pref$ by $CON$ such that $P^+ \cap P^- = \emptyset$ is called a 
  \emph{$P^+$-protecting \contraction{} of $\pref$ by $CON$}. 
  A minimal \contraction{} $P^-$ of $\pref$ by $CON$ such that $P^+ \cap P^- = \emptyset$ is called a 
  \emph{$P^+$-protecting minimal \contraction{} of $\pref$ by $CON$}. 
\end{definition}

Given any \contraction{} $P^-$ of $\,\succ$ by $CON$, 
by Lemma \ref{lemma:contr-necc}, $P^-$ must contain at least
one edge from every $CON$-detour. Thus, if $P^+$ contains
an entire $CON$-detour, protecting $P^+$ while contracting 
$\,\succ$ by $CON$ is not possible. 

\begin{theorem}\label{thm:min-con-cons-criterion}
  Let $CON$ be a \divtolayers \contracting{} relation of a preference relation $\succ$
  such that 
  $P^+\subset \,\succ$. There exists a minimal \contraction{} 
  of $\,\succ$ by $CON$ that protects $P^+$ if and only if
  \mbox{$\:P^+_{TC} \cap CON = \emptyset$},
  where $P^+_{TC}$ is the transitive closure of $\:P^+$.
\end{theorem}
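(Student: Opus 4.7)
My plan is to recast the question in terms of the \emph{complementary} subrelation $R = (\pref - P^-)$. Observe that $P^-$ is a $P^+$-protecting \contraction of $\pref$ by $CON$ if and only if $R$ is transitive, satisfies $P^+ \subseteq R \subseteq\: \pref$, and satisfies $R \cap CON = \emptyset$ (irreflexivity of $R$ is automatic from $R \subseteq\: \pref$). Under this correspondence, containment is inverted, so a minimal $P^+$-protecting \contraction $P^-$ corresponds to a maximal element of
\[
\mathcal{R} \;=\; \{R' \subseteq\: \pref \mid R' \text{ transitive},\; P^+ \subseteq R',\; R' \cap CON = \emptyset\}.
\]

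For the $(\Rightarrow)$ direction, I will take any $P^+$-protecting \contraction $P^-$ and set $R = (\pref - P^-)$, so that $R \in \mathcal{R}$. Since $R$ is transitive and contains $P^+$, it contains $P^+_{TC}$. The disjointness $R \cap CON = \emptyset$ then immediately yields $P^+_{TC} \cap CON = \emptyset$.

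For the $(\Leftarrow)$ direction, I will assume $P^+_{TC} \cap CON = \emptyset$ and exhibit a maximal element of $\mathcal{R}$. The hypothesis guarantees that $P^+_{TC}$ itself lies in $\mathcal{R}$: it is transitive by construction, contained in $\pref$ by transitivity of $\pref$, and disjoint from $CON$ by assumption. Hence $\mathcal{R}$ is nonempty. Any chain in $\mathcal{R}$ has its union again in $\mathcal{R}$, since any two composable edges $xy$, $yz$ of the union both lie in a single chain member (namely the larger of the two). Zorn's lemma then produces a maximal $R^* \in \mathcal{R}$, and $P^- := (\pref - R^*)$ is the desired minimal $P^+$-protecting \contraction.

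The main subtle point is the infinite case, where the argument invokes Zorn's lemma. A more constructive alternative, exploiting the \divtolayers hypothesis, would adapt the layer-by-layer prefix construction of Theorem \ref{thm:bounded-rel-con} to prefer non-$P^+$ edges when populating each $E_i$; the assumption $P^+_{TC} \cap CON = \emptyset$ would ensure that every $CON$-detour contains at least one edge outside $P^+$, since otherwise the transitivity of $P^+_{TC}$ would force the corresponding $CON$-edge into $P^+_{TC}$. For proving mere existence, however, the Zorn-based argument is the cleanest route.
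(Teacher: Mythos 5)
Your proof is correct, and for the sufficiency direction it takes a genuinely different route from the paper's. The necessity argument is essentially the paper's: a $CON$-edge in $P^+_{TC}$ yields a $CON$-detour lying entirely in $P^+$, which every \contraction{} must intersect by Lemma \ref{lemma:contr-necc}; your complement-side phrasing (the contracted relation is transitive, contains $P^+$, hence contains $P^+_{TC}$, hence misses $CON$) is the same fact. For sufficiency, however, the paper does not argue abstractly: it exhibits an explicit witness, namely the prefix \contraction{} of $\succ$ by $CON \cup Q$, where $Q$ collects the edges forced into every $P^+$-protecting \contraction{} by two-edge detours, and Theorem \ref{thm:pref-prot-con} shows this witness is a $P^+$-protecting minimal \contraction{}. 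That construction is why the \divtolayers{} hypothesis appears in the statement --- it is needed for the stratum-by-stratum construction of Theorem \ref{thm:bounded-rel-con} --- and it is what makes the result computable. Your Zorn-based argument buys generality at the price of effectivity: it never uses stratifiability, so it in fact establishes the equivalence for arbitrary \contractingrels{}, but it yields no finitely representable description of the witness, which is the paper's main concern. One step you should make explicit: a maximal element of your $\mathcal{R}$ gives a \contraction{} that is minimal \emph{among the $P^+$-protecting ones}, whereas Definition \ref{def:min-cons-contr} asks for a \contraction{} that is minimal among \emph{all} \contractions{} and also protects $P^+$. The two notions coincide because protection is inherited by subsets --- any \contraction{} properly contained in a $P^+$-protecting one is itself $P^+$-protecting --- but that one-line observation is doing real work in translating ``maximal in $\mathcal{R}$'' into the conclusion of the theorem, and it deserves to be stated.
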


As we noted, the necessary condition of the theorem above follows from Lemma \ref{lemma:contr-necc}.
The sufficient condition follows from Theorem \ref{thm:pref-prot-con} we prove further.

A naive way of computing a preference-protecting minimal \contraction{} is
by finding a minimal \contraction{} $P^-$ of \mbox{$(\,\succ -\ P^+)$} and then 
adding $P^+$ to $P^-$. However, \mbox{$(\,\succ -\ P^+)$} is not an SPO in general, 
thus obtaining SPO of $\pref - \ (P^- \cup P^+)$ becomes problematic.

The solution we propose here uses the following idea. 
First, we find 
a \contracting{} $CON'$ such that minimal contraction of $\,\succ$ by $CON'$ 
is equivalent to minimal contraction of $\,\succ$ by $CON$ with 
protected $P^+$. After that, we compute a minimal \contraction{} of $\pref$ by $CON'$ 
using Theorem \ref{thm:bounded-rel-con}. 

Recall that minimal \contractions{} constructed in Theorem \ref{thm:bounded-rel-con} 
are \emph{prefix}, i.e., every edge $xy$ in such a \contraction{} starts some 
$CON$-detour in which $xy$ is the only edge of the contractor. Thus, if no member of $P^+$ 
starts a $CON$-detour in $\pref$, then the minimal \contraction{} and $P^+$ have no common edges.
Otherwise assume that an edge $xy \in P^+$ starts 
a $CON$-detour in $\pref$. By Lemma \ref{lemma:contr-necc}, any
$P^+$-protecting \contraction $P^-$ has to contain an edge different from $xy$
which belongs to $CON$-detours started by $xy$. Moreover, for $CON$-detours
of length two started by $xy$, $P^-$ has to contain the \emph{edges ending those $CON$-detours}.
Such a set of edges is defined as follows:

$$Q = \{xy\ |\ \exists u: u \succ x \succ y \wedge uy \in CON \wedge ux \in P^+\}.$$

It turns out that the set $Q$ is not only contained in any $P^+$-protecting \contraction{}, but it 
can also be used to construct a $P^+$-protecting minimal \contraction{} as shown in the 
next theorem.

\begin{theorem}\label{thm:pref-prot-con}
 Let $\pref$ be a preference relation, and $CON$ be a \divtolayers \contracting{} of $\pref$. 
 Let also $P^+$ be a transitive relation such that $P^+ \subseteq \pref$ and $P^+ \cap CON = \emptyset$.
 Then the prefix \contraction{} of $\pref$ by $CON \cup Q$ is a 
 $P^+$-protecting minimal \contraction{} of $\pref$ by $CON$.
\end{theorem}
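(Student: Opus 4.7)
Let $P^-$ denote the prefix \contraction{} of $\pref$ by $CON \cup Q$, which will turn out to be the desired object. First I would check that $P^-$ is well-defined: every $Q$-edge $xy$ has $y$ equal to the end of some $CON$-edge $uy$, so $K_{CON \cup Q} = K_{CON}$ and $\succ_{CON \cup Q}\, = \,\succ_{CON}$; hence $CON \cup Q$ inherits \divtolayers{}-ness from $CON$, and by Theorem \ref{thm:bounded-rel-con} together with Proposition \ref{prop:prefix-unique} the prefix \contraction{} $P^-$ exists and is unique. Because $CON \subseteq CON \cup Q \subseteq P^-$ and $(\pref - P^-)$ is an SPO, $P^-$ is automatically a \contraction{} of $\pref$ by $CON$.

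The two remaining claims are (a) $P^- \cap P^+ = \emptyset$ and (b) $P^-$ is a \emph{minimal} \contraction{} of $\pref$ by $CON$. For (a) I would argue by contradiction: suppose $xy \in P^- \cap P^+$. Combining the prefix property with Corollary \ref{col:simple-cond} one obtains $v$ and an edge $xv \in CON \cup Q$ such that either $y = v$, or $y \succ v$ and $yv \in \pref - P^-$. I split into four cases. If $xv \in CON$ and $y = v$, then $xy \in CON \cap P^+$, contradicting $CON \cap P^+ = \emptyset$. If $xv \in CON$ and $y \neq v$, taking $u := x$ in the definition of $Q$ places $yv \in Q \subseteq P^-$, contradicting $yv \in \pref - P^-$. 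If $xv \in Q$ with $Q$-witness $u$ (so $u \succ x$, $uv \in CON$, $ux \in P^+$), transitivity of $P^+$ applied to $ux, xy$ gives $uy \in P^+$: in the subcase $y = v$ this forces $uv \in CON \cap P^+$; in the subcase $y \neq v$ the same witness $u$ establishes $yv \in Q \subseteq P^-$, again contradicting $yv \in \pref - P^-$.

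For (b) I would apply Corollary \ref{col:simple-cond}: for each $xy \in P^-$ the prefix property supplies $v$ with $xv \in CON \cup Q$ and a detour $x \to y \to v$ (possibly $y = v$) whose edges other than $xy$ lie in $\pref - P^-$. If $xv \in CON$ this is already a $CON$-detour of length at most two in which $xy$ is the only $P^-$-edge. Otherwise $xv \in Q$ has a witness $u$ with $ux \in P^+$ and $uv \in CON$; prepending $ux$ yields the $CON$-detour $u \to x \to y \to v$ of length at most three, and $ux \in P^+$ lies outside $P^-$ by part (a), so $xy$ remains the only $P^-$-edge.

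The main obstacle I anticipate is the case analysis in part (a): the definition of $Q$ interacts with transitivity of $P^+$ in several ways, and one must be careful to exhaust all four combinations of (terminal edge in $CON$ vs.\ $Q$) and ($y = v$ vs.\ $y \neq v$) and to use the ``only $P^-$-edge'' clause of the prefix property precisely to drive each contradiction; once that bookkeeping is done, part (b) is essentially a clean corollary of part (a) together with the $Q$-witness construction.
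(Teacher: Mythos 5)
Your proposal is correct and follows essentially the same route as the paper's proof: pass to the prefix \contraction{} by $CON \cup Q$, use the prefix property to extract a short detour, split on whether the terminal $CON'$-edge lies in $CON$ or in $Q$, and use transitivity of $P^+$ to produce a $Q$-edge that contradicts the ``only $P^-$-edge'' clause; the minimality argument by prepending the protected edge $ux$ is likewise identical. Your treatment is in fact slightly more careful than the paper's in spelling out the $y = v$ subcases and in justifying that $CON \cup Q$ remains \divtolayers{}.
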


\begin{proof}
Let $P^-$ be a prefix \contraction{} of $\pref$ by $CON' = CON \cup Q$. We prove that 
$P^- \cap P^+ = \emptyset$, i.e., $P^-$ protects $P^+$. For the sake of contradiction, assume there is $xy \in P^+ \cap P^-$. 
We show that this contradicts the prefix property of $P^-$.
Since $P^-$ is a prefix \contraction{}, there is a $CON'$-detour from $x$ to some $v$ in $\pref$, 
started by $xy$ and having only the edge $xy$ in $P^-$.
We have two choices: either it is a $CON$-detour or a $Q$-detour. Consider the first case. 
Clearly, $y \neq v$, otherwise $P^+ \cap CON \neq \emptyset$. Thus, $xv \in CON$ and 
$x \succ y \succ v$ (Figure \ref{pic:thm-pref-prot-con-1}). 
$yv \in Q$ follows from $xy \in P^+$, $xv \in CON$ and the construction of $Q$. Note that every path from 
$y$ to $v$ in $\pref$ contains a $P^-$-edge because $P^-$ is a \contraction{} of $\pref$ by $CON \cup Q$. 
That implies that no $CON$-detour from $x$ to $v$ started by $xy$ has only $xy$ in $P^-$ which 
contradicts the initial assumption.

Consider the second case, i.e.,
there is a $Q$-detour from $x$ to some $v$ started by $xy$ and having only the edge $xy$ in $P^-$. 
Since $xv \in Q$, there
is $uv \in CON$ such that $ux \in P^+$ (Figure \ref{pic:thm-pref-prot-con-2}). 
$ux, xy \in P^+$ imply $uy \in P^+$ by transitivity of $P^+$. $uy \in P^+$ and $uv \in CON$ imply
$yv \in Q$. That along with the fact that $P^-$ is a \contraction{} of $\pref$ by $CON \cup Q$ implies
 that every path in $\pref$ from $y$ to $v$ contains a $P^-$-edge. Hence, there is no $Q$-detour
from $x$ to $v$ started by $xy$ and having only $xy$ in $P^-$. That contradicts the initial assumption
about $xy$.

Now we prove that $P^-$ is a minimal \contraction{} of $\pref$ by $CON$. The fact that it is a \contraction{}
of $\pref$ by $CON$ follows from the fact that it is a \contraction{} of $\pref$ by a superset $CON'$ of $CON$.
We prove now its minimality. Since $P^-$ is a prefix \contraction{} of $\pref$ by $CON'$, for every $xy \in P^-$, 
there is $xv \in CON'$ such that there is a corresponding detour $T$ in which $xy$ is the only $P^-$-edge. 
If it is a $CON$-detour, then $xy$ satisfies
the minimality condition from Theorem \ref{thm:min-con-criterion}. If it is a $Q$-detour, 
then there is a $CON$-edge $uv$ such that $ux \in P^+$. We showed above that $P^-$ protects $P^+$. Hence,
the $CON$-detour obtained by joining the edge $ux$ and $T$ has only $xy$ in $P^-$. Therefore, 
$P^-$ is a minimal \contraction{} of $\pref$ by $CON$. \qed

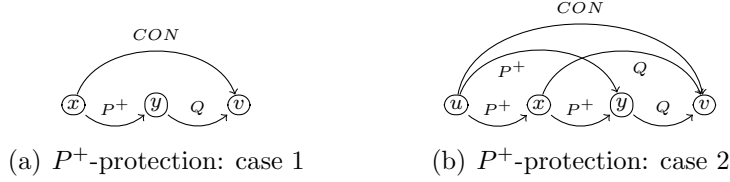
\begin{figure}
\begin{center}
	\subfigure[$P^+$-protection: case 1]{
		\label{pic:thm-pref-prot-con-1}
		\begin{tikzpicture}[scale=1.1]
			 \tikzstyle{cir} = [draw=black,rounded corners,inner sep=2pt];
			 \tikzstyle{lupperedge} = [bend left=80, ->];
			 \tikzstyle{mupperedge} = [bend left=70, ->];
			 \tikzstyle{supperedge} = [bend right=40, ->];
			 \tikzstyle{loweredge} = [bend right=60, ->];

\pgfkeyssetvalue{/edge/con}{{\tiny $CON$}}
\pgfkeyssetvalue{/edge/pp}{{\tiny $P^+$}}
\pgfkeyssetvalue{/edge/q}{{\tiny $Q$}}

			\node[cir] (x) at (1, 0)  {{\scriptsize $x$}};
			\node[cir] (y) at (2, 0)  {{\scriptsize $y$}};
			\node[cir] (v) at (3, 0)  {{\scriptsize $v$}};

			\node at (0,0) {};
			\node at (4,0) {};

			\draw[supperedge] (x) to node [above] {\pgfkeysvalueof{/edge/pp}} (y);
			\draw[supperedge] (y) to node [above] {\pgfkeysvalueof{/edge/q}} (v);

			\draw[mupperedge] (x) to node [above] {\pgfkeysvalueof{/edge/con}} (v);
			
			\node at ( 0.5, 0) {};
			\node at ( 3.5, 0) {};
		\end{tikzpicture}
	}
	\vspace{1cm}
	\subfigure[$P^+$-protection: case 2]{
		\label{pic:thm-pref-prot-con-2}
		\begin{tikzpicture}[scale=1.1]
			 \tikzstyle{cir} = [draw=black,rounded corners,inner sep=2pt];
			 \tikzstyle{lupperedge} = [bend left=80, ->];
			 \tikzstyle{mupperedge} = [bend left=70, ->];
			 \tikzstyle{supperedge} = [bend right=40, ->];
			 \tikzstyle{loweredge} = [bend right=60, ->];

\pgfkeyssetvalue{/edge/con}{{\tiny $CON$}}
\pgfkeyssetvalue{/edge/pp}{{\tiny $P^+$}}
\pgfkeyssetvalue{/edge/q}{{\tiny $Q$}}

			\node[cir] (u) at (0, 0)  {{\scriptsize $u$}};
			\node[cir] (x) at (1, 0)  {{\scriptsize $x$}};
			\node[cir] (y) at (2, 0)  {{\scriptsize $y$}};
			\node[cir] (v) at (3, 0)  {{\scriptsize $v$}};

			\node at (-1,0) {};
			\node at (4,0) {};

			\draw[supperedge] (u) to node [above] {\pgfkeysvalueof{/edge/pp}}  (x);
			\draw[supperedge] (x) to node [above] {\pgfkeysvalueof{/edge/pp}} (y);
			\draw[supperedge] (y) to node [above] {\pgfkeysvalueof{/edge/q}} (v);

			\draw[mupperedge] (u) to node [below left] {\pgfkeysvalueof{/edge/pp}} (y);
			\draw[mupperedge] (x) to node [below right] {\pgfkeysvalueof{/edge/q}} (v);

			\draw[lupperedge] (u) to node [above] {\pgfkeysvalueof{/edge/con}} (v);
		\end{tikzpicture}
	}
\end{center}
	\vspace{-8mm}
	\caption{Proof of Theorem \ref{thm:pref-prot-con}}
	\label{pic:thm-pref-prot-con}
\end{figure}
\end{proof}

Note that the sets of the end nodes of $(CON \cup Q)$-edges and the end nodes of $CON$-edges coincide by the construction 
of $Q$. Therefore, $(CON \cup Q)$ is \divtolayers or \boundedlayer if and only if
$CON$ is \divtolayers or \boundedlayer, correspondingly. Hence, if $CON$ is a \boundedlayer relation with
respect to $\pref$, Algorithms \ref{alg:minContr} and \ref{alg:minContrFinite} can be used 
to compute a preference-protecting minimal \contraction{} of $\pref$ by $CON$.
If the relations $\pref$ and $CON$ are finite, then $Q$ can be constructed
in polynomial time in the size of $\pref$ and $CON$ by a relational algebra expression constructed from its definition. If the relations
are finitely representable, then $Q$ may be computed using the quantifier elimination operator $QE$. 

For Theorem \ref{thm:pref-prot-con} to apply, the relation $P^+$ has to be transitive. 
Non-transitivity of $P^+$ implies that there are two edges $xy, yz \in P^+$ which should
be protected while transitive edge $xz$ is not critical. However, a 
relation obtained as a result of preference-protecting contraction is a preference relation (i.e., SPO).
Hence, the edge $xz$ will also be protected in the resulting preference relation. 
This fact implies that protecting any relation is equivalent to protecting its 
minimal transitive extension: its transitive closure. Therefore, if $P^+$ is not 
transitive, one needs to compute its transitive closure to use Theorem \ref{thm:pref-prot-con}.
For finite relations, transitive closure can be computed in polynomial time \cite{cormen}. 
For finitely representable relations, \emph{Constraint Datalog} \cite{kanellakis95} can be used to 
compute transitive closure. 

  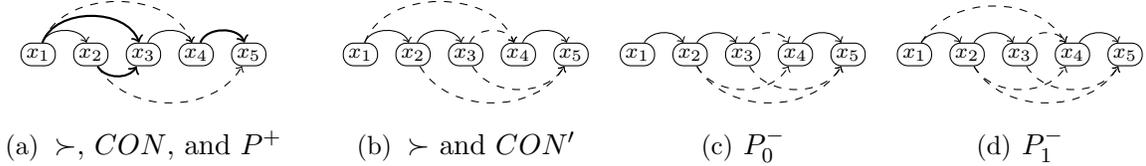
\begin{figure}[ht]
	      \begin{center}
		\subfigure[\small{$\pref$, $CON$, and $P^+$}]{
			\begin{tikzpicture}[scale=0.7]
			 \tikzstyle{cir} = [draw=black,rounded corners,inner sep=2pt];
			 \tikzstyle{upperedge} = [bend left=70, ->];
			 \tikzstyle{loweredge} = [bend right=60, ->];
			 \node[cir] (x1) at (1, 0) {{\scriptsize $x_1$}};
			 \node[cir] (x2) at (2, 0) {{\scriptsize $x_2$}};
			 \node[cir] (x3) at (3, 0) {{\scriptsize $x_3$}};
			 \node[cir] (x4) at (4, 0) {{\scriptsize $x_4$}};
			 \node[cir] (x5) at (5, 0) {{\scriptsize $x_5$}};

			\node at (0,0) {};
			\node at (6,0) {};
			  
			 \draw[upperedge] (x1) to (x2);
			 \draw[upperedge] (x3) to (x4);
			 \draw[upperedge, thick] (x4) to (x5);

			 \draw[upperedge, dashed] (x1) to (x4);
			 \draw[loweredge, dashed] (x2) to (x5);

			 \draw[upperedge, thick] (x1) to (x3);
			 \draw[loweredge, thick] (x2) to (x3);
			\end{tikzpicture}
		   \label{pic:min-pref-prot-con-1}
		}
		\subfigure[\small{$\pref$ and $CON'$}]{
			\begin{tikzpicture}[scale=0.7]
			 \tikzstyle{cir} = [draw=black,rounded corners,inner sep=2pt];
			 \tikzstyle{upperedge} = [bend left=70, ->];
			 \tikzstyle{loweredge} = [bend right=60, ->];
			 \node[cir] (x1) at (1, 0) {{\scriptsize $x_1$}};
			 \node[cir] (x2) at (2, 0) {{\scriptsize $x_2$}};
			 \node[cir] (x3) at (3, 0) {{\scriptsize $x_3$}};
			 \node[cir] (x4) at (4, 0) {{\scriptsize $x_4$}};
			 \node[cir] (x5) at (5, 0) {{\scriptsize $x_5$}};
			  
			 \draw[upperedge] (x1) to (x2);
			 \draw[upperedge] (x2) to (x3);
			 \draw[upperedge, dashed] (x3) to (x4);
			 \draw[upperedge] (x4) to (x5);

			 \draw[upperedge, dashed] (x1) to (x4);
			 \draw[loweredge, dashed] (x2) to (x5);

			 \draw[loweredge, dashed] (x3) to (x5);	
			\end{tikzpicture}
		   \label{pic:min-pref-prot-con-2}
		}
		\subfigure[\small{$P^-_0$}]{
			\begin{tikzpicture}[scale=0.7]
			 \tikzstyle{cir} = [draw=black,rounded corners,inner sep=2pt];
			 \tikzstyle{upperedge} = [bend left=60, ->];
			 \tikzstyle{loweredge} = [bend right=60, ->];
			 \node[cir] (x1) at (1, 0) {{\scriptsize $x_1$}};
			 \node[cir] (x2) at (2, 0) {{\scriptsize $x_2$}};
			 \node[cir] (x3) at (3, 0) {{\scriptsize $x_3$}};
			 \node[cir] (x4) at (4, 0) {{\scriptsize $x_4$}};
			 \node[cir] (x5) at (5, 0) {{\scriptsize $x_5$}};
			  
			 \draw[upperedge] (x1) to (x2);
			 \draw[upperedge, dashed] (x3) to (x4);
			 \draw[upperedge] (x4) to (x5);

			 \draw[upperedge] (x2) to (x3);
			 \draw[loweredge, dashed] (x2) to (x4);
			 \draw[loweredge, dashed] (x2) to (x5);
			 \draw[loweredge, dashed] (x3) to (x5);
			\end{tikzpicture}
		   \label{pic:min-pref-prot-con-3}
		}	        
		\subfigure[\small{$P^-_1$}]{
			\begin{tikzpicture}[scale=0.7]
			 \tikzstyle{cir} = [draw=black,rounded corners,inner sep=2pt];
			 \tikzstyle{upperedge} = [bend left=60, ->];
			 \tikzstyle{loweredge} = [bend right=60, ->];
			 \node[cir] (x1) at (1, 0) {{\scriptsize $x_1$}};
			 \node[cir] (x2) at (2, 0) {{\scriptsize $x_2$}};
			 \node[cir] (x3) at (3, 0) {{\scriptsize $x_3$}};
			 \node[cir] (x4) at (4, 0) {{\scriptsize $x_4$}};
			 \node[cir] (x5) at (5, 0) {{\scriptsize $x_5$}};
			  
			 \draw[upperedge] (x1) to (x2);
			 \draw[upperedge, dashed] (x3) to (x4);
			 \draw[upperedge] (x4) to (x5);

			 \draw[upperedge] (x2) to (x3);
			 \draw[loweredge, dashed] (x2) to (x4);
			 \draw[loweredge, dashed] (x2) to (x5);
			 \draw[loweredge, dashed] (x3) to (x5);
			 \draw[upperedge, dashed] (x1) to (x4);
			\end{tikzpicture}
		   \label{pic:min-pref-prot-con-4}
		}	        
	      \end{center}
                   \vspace{-0.2in}
     \label{pic:min-pref-prot-con}
     \caption{Using Theorem \ref{thm:pref-prot-con} to compute a preference-protecting minimal \contraction}
  \end{figure}

Another important observation here is that the $P^+$-protecting minimal \contraction of $\pref$ by $CON$
computed according to Theorem \ref{thm:pref-prot-con} is not necessary a \emph{prefix} \contraction
of $\pref$ by $CON$. This fact is illustrated in the following example.

\begin{example}\label{ex:min-pref-prot-con}
  Let a preference relation $\pref$ be a total order of $\{x_1, \ldots, x_5\}$ 
(Figure \ref{pic:min-pref-prot-con-1}, the transitive 
edges are omitted for clarity). Let a \contracting{} $CON$ be $\{x_1x_4, x_2x_5\}$, and 
$P^+ = \{x_1x_3, x_2x_3, x_4x_5\}$. 

The existence of a minimal $P^+$-protecting \contraction of $\pref$ by $CON$
follows from Theorem \ref{thm:min-con-cons-criterion}. 
We use Theorem \ref{thm:pref-prot-con} to construct it. 
The set $Q$ is equal to $\{x_3x_4, x_3x_5\}$ and $CON' = \{x_1x_4, x_2x_5, x_3x_4, x_3x_5\}$. 
We construct a prefix \contraction of $\pref$ by $CON'$.
The relation $CON'$ has two \layers: $L_0 = \{x_2x_5, x_3x_5\}$, $L_1 = \{x_1x_4, x_3x_4\}$. 
Then $E_0 = \{x_2x_5, x_3x_5, x_2x_4, x_3x_4\}$, $P^-_0 = E_0$, $E_1 = \{x_1x_4, x_3x_4\}$, 
$P^-_1 = E_0 \cup E_1$, and $P^- = P^-_1$. By Theorem  \ref{thm:pref-prot-con}, $P^-$ is a 
$P^+$-protecting minimal \contraction of $\pref$ by $CON$. However, $P^-$ is not a
prefix \contraction of $\pref$ by $CON$, because the edges $x_3x_4$, $x_3x_5$ do not start any $CON$-detour.
\end{example}

\section{Meet preference contraction}

In this section, we consider the operation of \emph{meet preference contraction}. In contrast to the preceding sections,
where the main focus was the minimality of preference relation change, the contraction operation considered here
changes a preference relation not necessarily in a minimal way. A \meetcontraction of a preference relation 
is semantically a \emph{union} of all minimal sets of reasons of discarding a given set preferences. 
When a certain set of preferences is required to be protected while contracting a preference relation, 
the operation of \emph{preference-protecting meet contraction} may be used. 

\begin{definition}\label{def:meet-con}
 Let $\pref$ be a preference relation, $CON$ a \contractingrel{} of $\pref$, and $P^+ \subseteq \pref$.
The relation $P^m$ is a {\emph{\meetcontraction} of $\pref$ by $CON$} iff
$$P^m = \bigcup_{P^- \in \mathcal{P}^m}P^-,$$
for the set $\mathcal{P}^m$ of all minimal \contractions{} of $\pref$ of $CON$.
The relation $P^m_{P^+}$ is a \emph{\meetprotcontraction{$P^+$} of $\pref$ by $CON$} iff
$$P^m_{P^+} = \bigcup_{P^- \in \mathcal{P}^m_{P^+}}P^-,$$
for the set $\mathcal{P}^m_{P^+}$ of all $P^+$-protecting minimal \contractions{} of $\pref$ of $CON$.
\end{definition}

Note that the relations $(\pref - \,P^m)$ and $(\pref - \,P^m_{P^+})$ can be represented as intersections 
of preference (i.e., SPO) relations and thus are also preference (i.e., SPO) relations. Let us first consider 
the problem of constructing \meetcontractions. 

By the definition above, an edge $xy$ is in the \meetcontraction of a preference 
relation $\pref$ by $CON$ if there is a minimal \contraction{} of $\pref$ by $CON$ which contains $xy$. 
Theorem \ref{thm:min-con-criterion} implies that if there is no $CON$-detour in $\pref$ containing $xy$, 
then $xy$ is not in the corresponding \meetcontraction. However, the fact that $xy$ belongs to 
a $CON$-detour is not a sufficient condition for $xy$ to be in the corresponding \meetcontraction. 

\begin{figure}
 \centering
	\subfigure[$\pref$ and $CON_1$]{
	\label{pic:ex-meet-con-intro-2}
 \begin{tikzpicture}
			 \tikzstyle{cir} = [draw=black,rounded corners,inner sep=2pt];
			 \tikzstyle{upperedge} = [bend left=60, ->];
			 \tikzstyle{loweredge} = [bend right=60, ->];

			\node[cir] (u) at (0, 0)  {{\scriptsize $u$}};
			\node[cir] (x) at (1, 0)  {{\scriptsize $x$}};
			\node[cir] (y) at (2, 0)  {{\scriptsize $y$}};
			\node[cir] (v) at (3, 0)  {{\scriptsize $v$}};

			\draw[upperedge] (u) to (x);
			\draw[upperedge] (x) to (y);
			\draw[upperedge] (x) to (y);
			\draw[upperedge] (y) to (v);

			\draw[upperedge] (u) to (y);
			\draw[upperedge] (x) to (v);
			\draw[upperedge, dashed] (u) to (v);  
 \end{tikzpicture}
}
\hspace{1cm}
	\subfigure[$\pref$ and $CON_2$]{
	\label{pic:ex-meet-con-intro-3}
 \begin{tikzpicture}
			 \tikzstyle{cir} = [draw=black,rounded corners,inner sep=2pt];
			 \tikzstyle{upperedge} = [bend left=60, ->];
			 \tikzstyle{loweredge} = [bend right=60, ->];

			\node[cir] (u) at (0, 0)  {{\scriptsize $u$}};
			\node[cir] (x) at (1, 0)  {{\scriptsize $x$}};
			\node[cir] (y) at (2, 0)  {{\scriptsize $y$}};
			\node[cir] (v) at (3, 0)  {{\scriptsize $v$}};

			\draw[upperedge] (u) to (x);
			\draw[upperedge] (x) to (y);
			\draw[upperedge, dashed] (y) to (v);

			\draw[upperedge] (u) to (y);
			\draw[upperedge] (x) to (v);
			\draw[upperedge, dashed] (u) to (v);  
 \end{tikzpicture}
}

	\caption{Example \ref{ex:meet-con-intro}}
	\label{pic:ex-meet-con-intro}
\end{figure}

\begin{example}\label{ex:meet-con-intro}
 Let a preference relation $\pref$ be a total order of $\{u, x, y, v\}$. Let 
 also $CON_1 = \{uv\}$ (Figure \ref{pic:ex-meet-con-intro-2}) and 
 $CON_2 = \{uv, yv\}$ (Figure \ref{pic:ex-meet-con-intro-3}). There is only one $CON_1$- and $CON_2$-detour containing $xy$: 
 $u \succ x \succ y \succ v$. 
 There is also a minimal \contraction{} of $\pref$ by $CON_1$ which contains $xy$: $P^-_1 = \{uy, xv, xy, uv\}$. 
 However, there is no minimal \contraction{} of $\pref$ by $CON_2$ which contains $xy$ because the edge $yv$ of 
  the $CON_2$-detour $u \succ x \succ y \succ v$ is in $CON_2$.
\end{example}

In Theorem \ref{thm:meet-contr}, we show how \meetcontractions can be constructed in the case of \emph{\boundedlayer 
\contractingrels}.  
According to that theorem, a $\pref$-edge $xy$ is in the \meetcontraction of $\pref$ by $CON$ if and only if
there is a \contraction $P^-$ of $\pref$ by $CON$ such that $xy$ is the only $P^-$-edge in some $CON$-detour. 
We use Theorem \ref{thm:min-con-cons-criterion} to show that there is a minimal \contraction of $\pref$ by 
$CON$ which contains $xy$ while the other edges of the detour are protected.

\begin{theorem}\label{thm:meet-contr}
 Let $CON$ be a \boundedlayer \contractingrel{} of a preference relation $\pref$. Then
 the \meetcontraction of $\pref$ by $CON$ is
\begin{align*}
 P^m = \{xy\ |\ \exists uv \in CON\ .\  & u \succeq x \succ y \succeq v 
	\wedge \\ 
				      & (ux \in (\pref - \,CON) \vee u = x) \wedge (yv \in (\pref - \,CON) \vee y = v) \}
\end{align*}
\end{theorem}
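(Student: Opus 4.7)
The plan is to prove two inclusions: every edge in $P^m$ satisfies the stated condition, and conversely every edge satisfying the condition lies in some minimal \contraction{} of $\pref$ by $CON$. The $\boundedlayer$ hypothesis enters by guaranteeing, via Theorem \ref{thm:bounded-rel-con} and its preference-protecting refinement in Theorem \ref{thm:pref-prot-con}, that the minimal \contractions{} we need actually exist.

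For the inclusion $P^m \subseteq \text{RHS}$, I would take any $xy \in P^m$, so $xy$ belongs to some minimal \contraction{} $P^-$ of $\pref$ by $CON$. By Corollary \ref{col:simple-cond}, there is a $CON$-detour of length at most three in which $xy$ is the only $P^-$-edge. Reading off the endpoints of that detour yields $uv \in CON$ with $u \succeq x \succ y \succeq v$. It remains to verify $ux \in (\pref - CON)$ whenever $u \ne x$, and likewise $yv \in (\pref - CON)$ whenever $y \ne v$; here $ux \in \pref$ from the detour, and $ux \notin P^-$ by the choice of detour, so in particular $ux \notin CON$ since $CON \subseteq P^-$. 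The symmetric reasoning handles $yv$.

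For the inclusion $\text{RHS} \subseteq P^m$, I would fix $xy$ satisfying the condition together with its witnesses $u,v$, and construct a suitable protected set
\[
P^+ \;=\; \{ux : u \ne x\} \;\cup\; \{yv : y \ne v\}.
\]
This $P^+$ is contained in $\pref$ by assumption, disjoint from $CON$ since each of its edges was assumed to lie in $\pref - CON$, and transitive because its at most two edges share no common node (recall $x \ne y$). Hence $P^+_{TC} = P^+$ is disjoint from $CON$, so Theorem \ref{thm:min-con-cons-criterion} produces a $P^+$-protecting minimal \contraction{} $P^-$ of $\pref$ by $CON$. Since $uv \in CON \subseteq P^-$, Lemma \ref{lemma:contr-necc} forbids any $(\pref - P^-)$-path from $u$ to $v$; in particular the $\pref$-path $u \succeq x \succ y \succeq v$ must contribute at least one edge to $P^-$. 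The edges $ux$ and $yv$ (when they exist) lie in $P^+$ and hence outside $P^-$, which forces $xy \in P^-$ and therefore $xy \in P^m$. The corner case $u = x$ and $y = v$ is immediate because then $xy = uv \in CON$ is contained in every \contraction.

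The main obstacle I anticipate is the second direction: one must simultaneously demand that $xy$ appear in the chosen minimal \contraction{} while excluding the other edges of the detour, and the cleanest way to enforce both constraints is to route the argument through preference-protecting minimal contraction rather than build such a $P^-$ directly. The $\boundedlayer$ assumption is used exactly at this step to invoke Theorem \ref{thm:pref-prot-con} (since $CON$ $\boundedlayer$ implies $\divtolayers$), and the small extra care needed for the degenerate subcases $u = x$ or $y = v$ should be handled uniformly by allowing $P^+$ to shrink accordingly.
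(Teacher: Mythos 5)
Your proposal is correct and follows essentially the same route as the paper: one direction via Corollary \ref{col:simple-cond}, and the other by protecting the set $\{ux,yv\}$ (restricted to the edges that exist, which is exactly the paper's $P'$) and invoking Theorem \ref{thm:min-con-cons-criterion} to obtain a protecting minimal \contraction{} that is then forced to contain $xy$.
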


\begin{proof}
  By Corollary \ref{col:simple-cond}, an edge $xy$ is in a minimal \contraction{} $P^-$ of $\pref$ by $CON$, 
  if there is $CON$-detour of at most three edges in $\pref$ in which $xy$ is the only $P^-$-edge. Hence any 
  minimal \contraction{} is a subset of $P^m$. Now take every edge $xy$ of $P^m$ and show there is a
  minimal \contraction{} of $\pref$ by $CON$ which contains $xy$. Let $u \succeq x \succ y \succeq v$ 
  for $uv \in CON$. Let us construct a set $P'$ as follows:

$$ P' = \left\lbrace 
\begin{array}{ll}
 \{ux, yv\}  & \mbox{ if } u \succ x \wedge y \succ v\\
 \{ux\}      & \mbox{ if } u \succ x \wedge y = v\\
 \{yv\}      & \mbox{ if } u = x \wedge y \succ v\\
 \emptyset   & \mbox{ if } u = x \wedge y = v\\
\end{array}
   \right.
$$
$P'$ is transitive, $P' \cap CON = \emptyset$, and $P' \subseteq \pref$. 
Theorem \ref{thm:min-con-cons-criterion} implies that there is a $P'$-protecting minimal \contraction{} $P^-$ of 
$\pref$ by $CON$. Since $P^-$ protects $P'$, there is a $CON$-detour in $\pref$ from $u$ to $v$ in which 
$xy$ is the only $P^-$-edge. This implies that $xy \in P^-$. \qed
\end{proof}

Now consider the case of $P^+$-protecting \meetcontractions. A naive solution is 
to construct it as the difference of $P^m$ defined above and $P^+$. However, in the next example we show that
such solution does not work in general.

\begin{figure}
 \centering
 \begin{tikzpicture}
			 \tikzstyle{cir} = [draw=black,rounded corners,inner sep=2pt];
			 \tikzstyle{upperedge} = [bend left=60, ->];
			 \tikzstyle{loweredge} = [bend right=60, ->];

			\node[cir] (u) at (0, 0)  {{\scriptsize $u$}};
			\node[cir] (x) at (1, 0)  {{\scriptsize $x$}};
			\node[cir] (y) at (2, 0)  {{\scriptsize $y$}};
			\node[cir] (v) at (3, 0)  {{\scriptsize $v$}};

			\draw[upperedge, thick] (u) to (x);
			\draw[upperedge] (x) to (y);
			\draw[upperedge] (x) to (y);
			\draw[upperedge] (y) to (v);

			\draw[upperedge, dashed] (u) to (y);
			\draw[upperedge, dashed] (x) to (v);
			\draw[upperedge] (u) to (v);  
 \end{tikzpicture}
	\caption{$\pref$, $CON$, and $P^+$ from Example \ref{ex:prot-meet-con}}
	\label{pic:ex-meet-con-prot}
\end{figure}

\begin{example}\label{ex:prot-meet-con}
 Let a preference relation $\pref$ be a total order of $\{u, x, y, v\}$ (Figure \ref{pic:ex-meet-con-prot}). Let 
 also $CON = \{uy, xv\}$ and $P^+ = \{ux\}$. Note that $yv \not \in P^+$, and by Theorem \ref{thm:meet-contr}, 
$yv \in P^m$. Hence, $yv \in (P^m - P^+)$. However, note that $ux \in P^+$ implies that $xy$ must be 
a member of every $P^+$-protecting \contraction{} in order to disconnect the path from $u$ to $y$. Hence, there is 
no $CON$-detour in which $yv$ is the only edge of the \contraction, and $yv$ is not a member of any $P^+$-protecting 
\contraction{}.
\end{example}

The next theorem shows how a $P^+$-protecting \contraction{} may be constructed. The idea is similar 
to Theorem \ref{thm:meet-contr}. However, to construct a \meetcontraction, we used the set $CON$
as a common part of all minimal \contractions{}. In the case of $P^+$-protecting \meetcontraction, 
a superset $C_{P^+}$ of $CON$ is contained in all of them. Such a set $C_{P^+}$ may be viewed
as a union of $CON$ and the set of all edges of $\succ$ that \emph{must be discarded due to the protection 
of $P^+$}.

\begin{theorem}\label{thm:meet-contr-prot}
 Let $CON$ be a \boundedlayer \contractingrel{} of a preference relation $\pref$, and $P^+$ a transitive relation 
 such that $P^+ \subseteq \pref$ and $P^+ \cap CON = \emptyset$. Then
 the $P^+$-protecting \meetcontraction of $\pref$ by $CON$ is
\begin{align*}
 P^m_{P^+} = \{xy\ |\ &xy \not \in P^+ \wedge \exists uv \in CON \ .\   u \succeq x \succ y \succeq v 
	\ \wedge \\ 
				      & (ux \in (\pref - \,C_{P^+}) \vee u = x) \wedge (yv \in 
					(\pref - \,C_{P^+}) \vee y = v) \},
\end{align*}
for
\begin{align*}
 C_{P^+} = \{xy\ |\ \exists uv \in CON\ .\  u \succeq x \succ y \succeq v 
	\wedge 
				      (ux \in P^+ \vee u = x) \wedge (yv \in P^+ \vee y = v) \}
\end{align*}
\end{theorem}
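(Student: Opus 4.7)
My plan is to mirror the proof of Theorem \ref{thm:meet-contr}, with $C_{P^+}$ playing the role there played by $CON$: I will show that $C_{P^+}$ is exactly the set of edges contained in every $P^+$-protecting minimal \contraction{} of $\pref$ by $CON$. The two inclusions in Definition \ref{def:meet-con} are treated separately, using Corollary \ref{col:simple-cond} to extract short $CON$-detours from any such \contraction{} and Theorem \ref{thm:min-con-cons-criterion} to construct the witnessing \contractions{}.

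The easy direction is to show that every edge $xy$ of every $P^+$-protecting minimal \contraction{} $P^-$ belongs to $P^m_{P^+}$. I would first prove the auxiliary inclusion $C_{P^+} \subseteq P^-$: given $ab \in C_{P^+}$ with witness $u'v' \in CON$, the edges $u'a$ and $bv'$ are identities or lie in $P^+$, hence outside $P^-$ by protection, so $u'v' \in CON \subseteq P^-$ together with Lemma \ref{lemma:contr-necc} forces $ab \in P^-$. Corollary \ref{col:simple-cond} then supplies a $CON$-detour of length at most three in which $xy$ is the only $P^-$-edge; its outer edges lie in $\pref - P^- \subseteq \pref - C_{P^+}$, and $xy \notin P^+$ by protection, so $xy$ fits the defining condition of $P^m_{P^+}$. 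For the reverse direction, given $xy \in P^m_{P^+}$ with witness $u_0 v_0 \in CON$ and outer edges $u_0 x, yv_0 \in \pref - C_{P^+}$ (or identities), I would apply Theorem \ref{thm:min-con-cons-criterion} to the augmented protected set $P' = P^+ \cup \{u_0 x, yv_0\}$ with the degenerate entries dropped; the resulting $P'$-protecting minimal \contraction{} is automatically $P^+$-protecting, and since $u_0 x, yv_0$ are absent from it while $u_0 v_0 \in CON$ is present, Lemma \ref{lemma:contr-necc} forces $xy$ into it.

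The main obstacle is verifying the hypothesis of Theorem \ref{thm:min-con-cons-criterion}, namely $P'_{TC} \cap CON = \emptyset$. I would carry out a case analysis on how an edge of $P'_{TC}$ factors as a concatenation of $P^+$-edges with at most one copy of $u_0 x$ and at most one of $yv_0$. Simultaneous use of both is excluded: the order $u_0 x$ then $yv_0$ would force $xy \in P^+$ by transitivity of $P^+$ applied to an intermediate $P^+$-path from $x$ to $y$, contradicting $xy \notin P^+$ from the definition of $P^m_{P^+}$; the reverse order is blocked by antisymmetry of $\pref$ combined with $u_0 \succ v_0$. In each remaining factoring pattern (pure $P^+$, or $P^+$ composed with $u_0 x$ or $yv_0$ on one or both sides), if the resulting edge lay in $CON$, unfolding the definition of $C_{P^+}$ with the surrounding $P^+$-edges supplying the outer witnesses would place $u_0 x$ or $yv_0$ in $C_{P^+}$, contradicting their choice from $\pref - C_{P^+}$; the pure $P^+$ case is ruled out by the standing hypothesis $P^+ \cap CON = \emptyset$.
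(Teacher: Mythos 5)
Your proposal is correct and follows essentially the same route as the paper's proof: first establishing $C_{P^+}\subseteq P^-$ for every $P^+$-protecting \contraction{}, then using Corollary \ref{col:simple-cond} for the inclusion of each such minimal \contraction{} in $P^m_{P^+}$, and finally invoking Theorem \ref{thm:min-con-cons-criterion} with the augmented protected set (the paper phrases this via $TC(P^+\cup P')$ and Theorem \ref{thm:pref-prot-con}, which is the same device) together with the identical three-way case analysis (pure $P^+$, one of $u_0x,yv_0$, both) to rule out a $CON$-edge in the transitive closure. Your explicit dismissal of the reverse ordering $yv_0$ before $u_0x$ is a small point of extra care the paper leaves implicit, but it does not change the argument.
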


\begin{proof}
 First, it is easy to observe that $C_{P+}$ is a subset of any $P^+$-protecting \contraction{} of 
$\pref$ by $CON$. It is constructed from the edges $xy$ which participate in $CON$-detours of length 
at most three where all the other edges have to be protected. Since every $CON$-detour has to have
at least one edge in a \contraction{}, $xy$ has to be a member of every \contraction{}.

We show that every $P^+$-protecting minimal \contraction{} $P^-$ of $\pref$ by $CON$ is a
subset of $P^m_{P^+}$. If some $xy \in P^-$, then by Corollary \ref{col:simple-cond}
there is an edge $uv \in CON$ such that 
$u\succeq x \succ y \succeq v$
and 
$ux, yv \not \in P^-$. We show that $xy \in P^m_{P^+}$. That holds if $xy \not \in P^+$ (which holds for $P^-$ by 
definition) and
$$(ux \in (\pref - \,C_{P^+}) \vee u = x)
\wedge (yv \in (\pref - \,C_{P^+}) \vee y = v)$$
If both $u = x$ and $y = v$ hold then the expression above holds. Now assume $u \succ x$ (the case $y\succ v$
is similar). If $ux \in C_{P^+}$ then, as we showed above, $ux \in P^-$ which is a contradiction. Hence, 
$ux \in (\pref - \ C_{P^+})$ and $xy \in P^m_{P^+}$. Finally, $P^- \subseteq P^m_{P^+}$. 

Now we show that every $xy \in P^m_{P^+}$ is contained in every $P^+$-protecting minimal 
\contraction{} of $\pref$ by $CON$. The proof is similar to the proof of Theorem \ref{thm:meet-contr}. 
By definition of $P^m_{P^+}$, take $xy$ such that $u \succeq x \succ y \succeq v$. 
Construct the set $P'$ for $xy$ as in the proof of
Theorem \ref{thm:meet-contr}. We show that for the set $P'' = TC(P^+ \cup P')$ we have $P'' \cap CON = \emptyset$.
For the sake of contradiction, assume $P'' \cap CON \neq \emptyset$. 
This implies that there is a $CON$-detour consisting of $P^+$ and $P'$ edges. Having only
$P^+$-edges in the detour contradicts the initial assumption that $P^+ \cap CON = \emptyset$. Having a single edge
of $P'$ in the detour implies that the edge (either $ux$ or $yv$) is in $C_{P^+}$, which contradicts
the definition of $P^m_{P^+}$. Having both $ux$ and $yv$ in the detour implies that $xy \in P^+$
which also contradicts the definition of $P^m_{P^+}$. Hence, $P'' \cap CON = \emptyset$, 
and by Theorem \ref{thm:pref-prot-con}, there is a $P''$-protecting minimal \contraction $P^-$
of $\pref$ by $CON$ which is also a $P^+$-protecting minimal
\contraction. Since there is a $CON$-detour in which $xy$ is unprotected by $P^-$, $xy \in P^-$.\qed
\end{proof}

We note that given the expressions for the meet and $P^+$-protecting \meetcontractions in Theorems 
\ref{thm:meet-contr} and \ref{thm:meet-contr-prot}, one can easily
obtain such contractors for finite and finitely representable relations: by evaluation of 
a relational algebra query in the former case and by quantifier elimination in the latter case.

  \begin{figure}[ht]
	      \begin{center}
		\subfigure[\small{$\pref$ and $CON$}]{
			\begin{tikzpicture}[scale=0.7]
			 \tikzstyle{cir} = [draw=black,rounded corners,inner sep=2pt];
			 \tikzstyle{upperedge} = [bend left=70, ->];
			 \tikzstyle{loweredge} = [bend right=60, ->];
			 \node[cir] (x1) at (1, 0) {{\scriptsize $x_1$}};
			 \node[cir] (x2) at (2, 0) {{\scriptsize $x_2$}};
			 \node[cir] (x3) at (3, 0) {{\scriptsize $x_3$}};
			 \node[cir] (x4) at (4, 0) {{\scriptsize $x_4$}};
			 \node[cir] (x5) at (5, 0) {{\scriptsize $x_5$}};
			  
			 \draw[upperedge] (x1) to (x2);
			 \draw[upperedge] (x3) to (x4);
			 \draw[upperedge] (x4) to (x5);

			 \draw[upperedge, dashed] (x1) to (x3);
			 \draw[loweredge, dashed] (x2) to (x5);

			 \draw[upperedge, dashed] (x1) to (x3);
			 \draw[loweredge, dashed] (x2) to (x3);
			 \node at (1, -1.3) {};
			\end{tikzpicture}
		   \label{pic:meet-con-1}
		}
		\hspace{3mm}
		\subfigure[\small{$\pref - \,P^m$}]{
			\begin{tikzpicture}[scale=0.7]
			 \tikzstyle{cir} = [draw=black,rounded corners,inner sep=2pt];
			 \tikzstyle{upperedge} = [bend left=70, ->];
			 \tikzstyle{loweredge} = [bend right=60, ->];
			 \node[cir] (x1) at (1, 0) {{\scriptsize $x_1$}};
			 \node[cir] (x2) at (2, 0) {{\scriptsize $x_2$}};
			 \node[cir] (x3) at (3, 0) {{\scriptsize $x_3$}};
			 \node[cir] (x4) at (4, 0) {{\scriptsize $x_4$}};
			 \node[cir] (x5) at (5, 0) {{\scriptsize $x_5$}};
			  
			 \draw[upperedge] (x1) to (x2);
			 \draw[loweredge] (x3) to (x5);
			 \draw[loweredge] (x1) to (x5);
			 \draw[upperedge] (x1) to (x4);
			 \node at (1, -1) {};
			\end{tikzpicture}
		   \label{pic:meet-con-2}
		}
		\hspace{3mm}
		\subfigure[\small{$\pref$, $CON$, and $P^+$}]{
			\begin{tikzpicture}[scale=0.7]
			 \tikzstyle{cir} = [draw=black,rounded corners,inner sep=2pt];
			 \tikzstyle{upperedge} = [bend left=60, ->];
			 \tikzstyle{loweredge} = [bend right=60, ->];
			 \node[cir] (x1) at (1, 0) {{\scriptsize $x_1$}};
			 \node[cir] (x2) at (2, 0) {{\scriptsize $x_2$}};
			 \node[cir] (x3) at (3, 0) {{\scriptsize $x_3$}};
			 \node[cir] (x4) at (4, 0) {{\scriptsize $x_4$}};
			 \node[cir] (x5) at (5, 0) {{\scriptsize $x_5$}};
			  
			 \draw[upperedge] (x1) to (x2);
			 \draw[loweredge, thick] (x2) to (x4);
			 \draw[upperedge] (x3) to (x4);
			 \draw[upperedge] (x4) to (x5);

			 \draw[upperedge, dashed] (x1) to (x3);
			 \draw[loweredge, dashed] (x2) to (x3);
			 \draw[loweredge, dashed] (x2) to (x5);

			 \node at (1, -1.3) {};
			\end{tikzpicture}
		   \label{pic:meet-con-3}
		}	        
		\hspace{3mm}
		\subfigure[\small{$\pref - \,P^m_{P^+}$}]{
			\begin{tikzpicture}[scale=0.7]
			 \tikzstyle{cir} = [draw=black,rounded corners,inner sep=2pt];
			 \tikzstyle{upperedge} = [bend left=70, ->];
			 \tikzstyle{loweredge} = [bend right=60, ->];
			 \node[cir] (x1) at (1, 0) {{\scriptsize $x_1$}};
			 \node[cir] (x2) at (2, 0) {{\scriptsize $x_2$}};
			 \node[cir] (x3) at (3, 0) {{\scriptsize $x_3$}};
			 \node[cir] (x4) at (4, 0) {{\scriptsize $x_4$}};
			 \node[cir] (x5) at (5, 0) {{\scriptsize $x_5$}};
			  
			 \draw[upperedge] (x1) to (x2);
			 \draw[loweredge] (x3) to (x5);
			 \draw[loweredge] (x1) to (x5);
			 \draw[loweredge] (x2) to (x4);
			 \draw[upperedge] (x3) to (x4);
			 \draw[upperedge] (x1) to (x4);
			 \node at (1, -1) {};
			\end{tikzpicture}
		   \label{pic:meet-con-4}
		}	        
	      \end{center}
                   \vspace{-0.2in}
     \caption{Computing \meetcontraction and $P^+$-protecting \meetcontraction}
     \label{pic:meet-con}
  \end{figure}

\begin{example}\label{ex:meet-con}
Let a preference relation $\pref$ be a total order of $\{x_1, \ldots, x_5\}$ 
(Figure \ref{pic:meet-con-1}, the transitive 
edges are omitted for clarity). Let a \contracting{} $CON$ be $\{x_1x_3, x_2x_3, x_2x_5\}$, and 
$P^+ = \{x_2x_4\}$.

A \meetcontraction $P^m$ of $\pref$ by $CON$ is $\{x_1x_3, x_2x_3, x_2x_5, x_2x_4, x_3x_4, x_4x_5\}$.
The resulting contracted preference relation is shown on Figure \ref{pic:meet-con-2}. 
A 
$P^+$-protecting \meetcontraction of $\pref$ by $CON$ is 
$\{ x_1x_3, x_2x_3, x_2x_5, x_4x_5\}$. The resulting contracted
preference relation is shown on Figure \ref{pic:meet-con-4}. Note that 
$C_{P^+}$ here is $CON \cup \{x_4x_5\}$.

\end{example}

\section{Querying with contracted preferences}

When dealing with preferences, the two most common tasks are 1) given two tuples, find
the more preferred one, and 2) find the most preferred tuples in a set. 
In this section, we assume that preference and \contractingrels{} are represented as
\emph{preference formulas}. 
Out of the two problems above, the first can be solved easily by the evaluation of
the corresponding preference formula for the given pair of tuples. 
To solve the latter problem, the operators of 
\emph{winnow} \cite{Chomicki2003} and \emph{BMO} \cite{kiesling2002} are proposed. 
The winnow picks from a given
set of tuples the undominated tuples according to a given 
preference relation. A special case of the winnow operator is called \emph{skyline}
\cite{Borz01theskyline}. It operates with preference relations representing 
\emph{Pareto improvement}. 
A number of
evaluation optimization methods for queries
involving winnow have been proposed \cite{Chomicki2007opt,Hafenrichter2005,DBLP:conf/icde/ChomickiGGL03,
DBLP:journals/vldb/GodfreySG07,DBLP:journals/tods/PapadiasTFS05}. 

\begin{definition}\label{def:winnow}
  Let $\U$ be a universe of tuples each having a set
  of attributes $\A$. Let $\,\succ$ be a preference relation over $\U$.
Then the \emph{winnow operator} is written as 
  $w_{\succ}(\U)$, and for every finite subset $r$ of $\ \U$:
$$
w_\succ(r) = \{t \in r\: |\: \neg \exists t' \in r . t' \succ t\}
$$
\end{definition}

In this section, we show some new techniques which can be used to optimize
the evaluation of the winnow operator under contracted preferences.

In user-guided preference modification frameworks 
\cite{Chomicki2007,Balke2006}, it is assumed that users alter their preferences after
examining sets of the most preferred tuples returned by winnow. 
Thus, if preference contraction is 
incorporated into such frameworks, there is a need to 
compute winnow under contracted preference relations. Here we
show how the evaluation of winnow can be optimized in such cases.

Let $\pref$ be a preference relation, $CON$ be a \contracting{}
of $\pref$, $P^-$ be a \contraction{} of $\,\succ$ by $CON$,
and the contracted preference relation $\pref' = (\,\succ -\,P^-)$.
Denote the set of the starting and the ending tuples of $R$-edges
for a binary relation $R$ as $S({R})$ and $E({R})$ correspondingly.
\begin{align*}
S({R}) = \{x\: |\: \exists y\ .\  xy \in R\}\\
E({R}) = \{y\: |\: \exists x\ .\  xy \in R\}
\end{align*}
Let us also define the set $M({CON})$ of the tuples which participate in $CON$-detours in $\,\succ$
\begin{align*}
M({CON}) = \{y\: |\: \exists x,z\ .\ x \succ y \wedge xz \in CON \wedge y \succeq z  \}
\end{align*}
Assume we also know quantifier-free formulas $\form{S(P^-)}$, $\form{E(P^-)}$,  $\form{M(CON)}$,
and $\form{S(CON)}$
representing these sets for $P^-$ and $CON$. Then the following holds.

\begin{proposition}\label{prop:winnow-opt}
Given a finite set of tuples $r$
  \begin{enumerate}
    \item $w_{\succ}(r) \subseteq w_{\succ'}(r)$
    \item If $\sigma_{\form{S(P^-)}}(w_{\succ}(r)) = \emptyset$, then $w_{\succ}(r) = w_{\succ'}(r)$.
    \item If $P^-$ is a prefix \contraction{}, then 
        \mbox{$\sigma_{\form{S(P^-)}}(r) = \sigma_{\form{S(CON)}}(r) $}

    \item
        $w_{\succ'}(r) = $  $w_{\succ'}(w_{\succ}(r) \cup \sigma_{\form{E(P^-)}}(r))$
  \end{enumerate}
\end{proposition}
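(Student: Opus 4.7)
Proof plan for Proposition \ref{prop:winnow-opt}.

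\medskip
\noindent\textbf{Part 1.} Since $P^-\subseteq\,\succ$, we have $\succ'\,=\,(\succ -\,P^-)\subseteq\,\succ$. So if $t\in w_\succ(r)$, no $t'\in r$ satisfies $t'\succ t$, hence certainly no $t'\in r$ satisfies $t'\succ' t$, and $t\in w_{\succ'}(r)$. This is a one-line monotonicity argument.

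\medskip
\noindent\textbf{Part 2.} The nontrivial direction is $w_{\succ'}(r)\subseteq w_\succ(r)$ under the hypothesis. Take $t\in w_{\succ'}(r)$ and suppose for contradiction that $t\notin w_\succ(r)$. Then some $t'\in r$ has $t'\succ t$. The key step is to lift $t'$ to a $\succ$-maximal element of $r$: because $r$ is finite and $\succ$ is an SPO, we can pick $t^*\in w_\succ(r)$ with either $t^*=t'$ or $t^*\succ t'$, so by transitivity $t^*\succ t$. Since $t\in w_{\succ'}(r)$, $t^*\not\succ' t$, hence $t^*t\in P^-$, which puts $t^*\in S(P^-)\cap w_\succ(r)$ -- contradicting $\sigma_{\form{S(P^-)}}(w_\succ(r))=\emptyset$. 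Lifting to a $\succ$-maximal dominator is the step to be careful about; everything else is bookkeeping.

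\medskip
\noindent\textbf{Part 3.} The two inclusions use the two halves of the prefix property: $CON\subseteq P^-$ gives $S(CON)\subseteq S(P^-)$ hence $\sigma_{\form{S(CON)}}(r)\subseteq\sigma_{\form{S(P^-)}}(r)$; for the reverse, if $x\in S(P^-)$ then some $xy\in P^-$, and by the prefix property $xy$ starts a $CON$-detour, so there exists $v$ with $xv\in CON$, giving $x\in S(CON)$.

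\medskip
\noindent\textbf{Part 4.} Let $A=w_\succ(r)\cup\sigma_{\form{E(P^-)}}(r)$. I will first show $w_{\succ'}(r)\subseteq A\subseteq r$: if $t\in w_{\succ'}(r)\setminus w_\succ(r)$, then some $t'\in r$ has $t'\succ t$ but $t'\not\succ' t$, so $t't\in P^-$ and $t\in E(P^-)$. Next I will invoke the standard winnow restriction lemma: for an SPO $\succ'$ and a finite $r$, whenever $w_{\succ'}(r)\subseteq A\subseteq r$ we have $w_{\succ'}(A)=w_{\succ'}(r)$. The forward containment is immediate (an element of $w_{\succ'}(r)$ lies in $A$ and stays undominated when the ground set shrinks). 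The reverse containment is the one that needs the $\succ$-maximality lift again: if $t\in w_{\succ'}(A)$ but $t\notin w_{\succ'}(r)$, then some $s\in r\setminus A$ has $s\succ' t$, and we lift $s$ to a $\succ'$-maximal $s^*\in w_{\succ'}(r)\subseteq A$ with $s^*\succ' t$, contradicting $t\in w_{\succ'}(A)$.

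\medskip
\noindent The main obstacle throughout is the same technical move used in Parts 2 and 4: replacing an arbitrary dominator by a maximal one in a finite SPO. Once that step is isolated as a lemma, each of the four claims reduces to short set-theoretic bookkeeping using the definitions of $S(\cdot)$, $E(\cdot)$, $P^-$, and the prefix property.
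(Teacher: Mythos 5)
Your proposal is correct and follows essentially the same route as the paper's own proof: monotonicity for Part 1, the maximal-dominator argument for Part 2, unwinding the prefix definition for Part 3, and the containment $w_{\succ'}(r)\subseteq w_\succ(r)\cup\sigma_{\form{E(P^-)}}(r)$ plus the winnow restriction lemma for Part 4. You merely make explicit two steps the paper leaves implicit (lifting a dominator to a $\succ$-maximal one in a finite SPO, and the restriction lemma $w_{\succ'}(A)=w_{\succ'}(r)$ for $w_{\succ'}(r)\subseteq A\subseteq r$), which is a sound elaboration rather than a different argument.
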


\begin{proof}$ $

  \begin{enumerate}
    \item By definition, $w_{\succ}(r)$ contains the set of undominated tuples
      w.r.t the preference relation $\succ$. Thus, $\succ' \subset \succ$ implies
      that if a tuple $o$ was undominated w.r.t $\succ$, it will be undominated
      w.r.t $\succ'$, too. Hence, $w_{\succ}(r) \subseteq w_{\succ'}(r)$.
    \item the SPO of $\pref$ implies that for every tuple $o$ not in $w_{\succ}(r)$, there is a tuple
	$o' \in w_{\succ}(r)$ such that $o' \succ o$. Hence, if no edges going from $w_{\succ}(r)$ are contracted
	by $P^-$, every $o \not \in w_{\succ}(r)$ will still be dominated according to $\succ'$.

     \item Follows from the definition of the \emph{prefix} contraction. 

	\item From 1 we know that $w_{\succ}(r) \subseteq w_{\succ'}(r)$. 
	For every tuple $o \in w_{\succ'}(r) - w_{\succ}(r)$, at least one edge going
	to it in $\pref$ has been contracted by $P^-$. Thus, 
	$w_{\succ'}(r) \subseteq w_{\succ}(r) \cup \sigma_{F_{E(P^-)}}(r)$ and
	$w_{\succ'}(r) = w_{\succ'}(w_{\succ}(r) \cup \sigma_{F_{E(P^-)}}(r))$. \qed
  \end{enumerate}
\end{proof}

According to Proposition \ref{prop:winnow-opt},
the result of winnow under a contracted preference is always
a superset of the result of winnow under the original preference.
The second property shows when the contraction does not change
the result of winnow. Running the winnow query is generally expensive,
thus one can first evaluate $\sigma_{\form{S(P^-)}}$ 
(or $\sigma_{\form{S(CON)}}$, if $P^-$ is a prefix contraction) over the
computed result of the original winnow.
If the result is empty, then computing the winnow 
under the contracted preference relation is not needed.

The last statement of the proposition
is useful when the set $r$ is large and 
thus running $w_{\succ'}$ 
over the entire set $r$ is expensive. 
Instead, one can compute $\sigma_{\form{E(P^-)}}(r)$ 
and then evaluate $w_{\succ'}$ over 
$(w_{\succ}(r) \cup \sigma_{\form{E(P^-)}}(r))$
(assuming that $w_{\succ}(r)$ is already known). 

\begin{example}
 Let a preference relation $\pref$ be defined by $\form{\succ}(o, o') \equiv o.p < o'.p$, and 
 a \contracting{} $CON$ of $\pref$ be defined by $\form{CON}(o, o') \equiv o.p = 0 \wedge o'.p = 3$,
 where $p$ is an $\Q$-attribute. Take set of tuples $r = \{1, 2, 3, 4\}$ in which every tuple has 
 a single attribute $p$. Then $w_{\succ}(r) = \{1\}$.
 Take two minimal \contractions{} $P^-_1$ and $P^-_2$ of $\pref$ by $CON$ defined by the following formulas
\begin{align*}
 \form{P^-_1}(o, o') \equiv o.p = 0 \wedge 0 < o'.p \leq 3 \\
 \form{P^-_2}(o, o') \equiv 0 \leq o.p < 3 \wedge o'.p = 3
\end{align*}
The corresponding contracted preference relations $\pref_1$ and $\pref_2$ are defined by 
 $\form{\succ_1}(o, o') \equiv \form{\succ}(o, o') $ $\wedge \neg \form{P^-_1}(o, o')$ and
 $\form{\succ_2}(o, o') \equiv \form{\succ}(o, o') \wedge \neg \form{P^-_2}(o, o')$.
The \contraction $P^-_1$ is prefix, thus $\form{S(P^-_1)}(o) \equiv \form{S(CON)} \equiv o.p = 0$.
The \contraction $P^-_2$ is not prefix, and $\form{S(P^-_2)}(o) \equiv 0 \leq o.p < 3$. 

First, $\sigma_{\form{S(P^-_1)}}(w_\succ(r)) = \emptyset$ implies $w_{\succ_1}(r) = w_{\succ}(r)$. Second, 
$\sigma_{\form{S(P^-_2)}}(w_\succ(r))$ is not empty and equal to $\{1\}$. Note that
$\form{E(P^-_2)}(o) \equiv o.p = 3$. Hence, $\sigma_{\form{E(P^-_2)}}(r) = \{3\}$ and 
$w_{\succ_2}(r) = w_{\succ_2}(w_{\succ}(r) \cup \sigma_{\form{E(P^-_2)}}(r)) = \{1, 3\}$.

\end{example}

\section{Experimental evaluation}

\pgfcreateplotcyclelist{\myblackwhitelist}{%
    {black,mark=*},
    {black,mark=square},
    {black,mark=o},
    {black,mark=otimes},
    {black,mark=triangle}
}

In this section, we present the results of experimental evaluation 
of the preference contraction framework proposed here. We implemented the following operators of preference contraction: 
prefix contraction (denoted as PREFIX), 
preference-protecting minimal contraction ($P^+$-MIN), 
meet contraction (MEET), and preference-protecting meet contraction
($P^+$-MEET).
PREFIX was implemented using Algorithm \ref{alg:minContrFinite}, 
$P^+$-MIN according to Theorem \ref{thm:pref-prot-con}, MEET 
according to Theorem \ref{thm:meet-contr}, and $P^+$-MEET according to Theorem \ref{thm:meet-contr-prot}.
We used these operators to contract finite preference relations stored
in a database table $R(X,Y)$. The preference relations used in the experiments were finite
\emph{skyline preference relations} \cite{Borz01theskyline}. Such relations are often used in 
database applications. We note that such relations are generally not materialized (as database tables)
when querying databases with skylines. However, they may be materialized in scenarios of preference
elicitation \cite{DBLP:conf/dasfaa/BalkeGL07}. 
To generate such 
relations, we used the NHL 2008 Player Stats dataset \cite{nhl} of 852 tuples. 
Each tuple has 18 different attributes out of which we used 5. All algorithms used in the experiments were implemented in 
Java 6. We ran the experiments on Intel Core 2 Duo CPU 2.1 GHz with 2.0 GB RAM. All tables were
stored in a PostgreSQL 8.3 database.

\medskip

In the first experiment, we modeled the scenario in which a user manually selects preferences to contract. 
Here we used preference relations consisting of $2000$, $3000$, and $5000$ edges. 
The sizes of \contractingrels
used here range from 1 to 35 edges. We do not pick more than 35 edges assuming that in this scenario a user unlikely 
provides a large set of preferences to discard. For every \contractingrel size, we randomly generated 10 different 
\contractingrels and computed the average time spent to compute \contractions and 
the average size of them. 
The relations $P^+$ storing preferences to protect contained $25\%$ of edges of the corresponding
preference relation. 

Figure \ref{pic:contr-exp-time-vs-consize} shows how the running times of contraction operators
depend on the size a preference relation to contract and the size of a \contractingrel. 
As we can observe, PREFIX has the best performance among all operators, regardless of 
the size of the preference relation and the \contractingrel relation.
Note also that the running times of preference-protecting operators are significantly larger then 
the running times of their unconstrained counterparts. These running times 
predominantly depend on the time spent to compute the transitive closure of $P^+$. 

\begin{figure}[ht]
	\subfigure[$|\succ|=2000$]{
 \begin{tikzpicture}[scale=0.85]
	\pgfplotsset{every axis legend/.append style={
	    	at={(0.5,1.03)},
    		anchor=south,
		},
		every axis/.append style={
		font=\scriptsize
		}
	}

     	\begin{semilogyaxis}[
        	height=5cm,
          	width=5cm,
          	cycle list name=\myblackwhitelist,
		ylabel={running time (ms)},
		xlabel={$|CON|$},
		legend columns=2,
		minor x tick num=1
      	]

        \addplot coordinates {
		(1, 5.9052000000000000)
		(3, 7.0700000000000000)
		(5, 9.6006000000000000)
		(10, 11.3609000000000000)
		(15, 14.1016000000000000)
		(20, 16.9723000000000000)
		(25, 19.7204000000000000)
		(30, 24.2398000000000000)
		(35, 25.9463000000000000)
	};
        \addlegendentry{{\small PREFIX}};

        \addplot coordinates {
		(1, 234.9372000000000000)
		(3, 237.9211000000000000)
		(5, 230.0415000000000000)
		(10, 232.3039000000000000)
		(15, 236.5687000000000000)
		(20, 241.7534000000000000)
		(25, 239.7723000000000000)
		(30, 255.0569000000000000)
		(35, 257.2339000000000000)
	};
        \addlegendentry{{\small $P^+$-MIN}};

        \addplot coordinates {
(		1, 0.86180000000000000000)
(		3, 2.4081000000000000)
(		5, 4.1391000000000000)
(		10, 8.4309000000000000)
(		15, 18.9592000000000000)
(		20, 16.4025000000000000)
(		25, 22.2481000000000000)
(		30, 24.6686000000000000)
(		35, 28.1393000000000000)

	};
        \addlegendentry{{\small MEET}};

        \addplot coordinates {
(1, 255.1663000000000000)
(3, 281.0575000000000000)
(5, 342.4532000000000000)
(10, 411.7167000000000000)
(15, 478.2876000000000000)
(20, 559.5078000000000000)
(25, 604.4270000000000000)
(30, 772.5439000000000000)
(35, 722.4933000000000000)
	};
        \addlegendentry{{\small $P^+$-MEET}};

	\end{semilogyaxis}
 \end{tikzpicture}
	\label{pic:contr-exp-time-vs-consize-2000}
}
\subfigure[$|\succ| = 3000$]{
 \begin{tikzpicture}[scale=0.85]
	\pgfplotsset{every axis legend/.append style={
	    	at={(0.5,1.03)},
    		anchor=south,
		},
		every axis/.append style={
		font=\scriptsize
		}
	}

     	\begin{semilogyaxis}[
        	height=5cm,
          	width=5cm,
          	cycle list name=\myblackwhitelist,
		ylabel={running time (ms)},
		xlabel={$|CON|$},
		legend columns=2,
		minor x tick num=1
      	]

        \addplot coordinates {
		(1, 9.6533000000000000)
		(3, 13.9911000000000000)
		(5, 12.9200000000000000)
		(10, 16.9135000000000000)
		(15, 21.4553000000000000)
		(20, 26.4622000000000000)
		(25, 30.4900000000000000)
		(30, 37.6715000000000000)
		(35, 44.4290000000000000)
	};
        \addlegendentry{{\small PREFIX}};

        \addplot coordinates {
		(1, 481.5897000000000000)
		(3, 476.1029000000000000)
		(5, 478.7727000000000000)
		(10, 476.7377000000000000)
		(15, 485.7221000000000000)
		(20, 497.2794000000000000)
		(25, 503.0842000000000000)
		(30, 502.5633000000000000)
		(35, 513.1363000000000000)
	};
        \addlegendentry{{\small $P^+$-MIN}};

        \addplot coordinates {
		(1, 1.3492000000000000)
		(3, 3.8672000000000000)
		(5, 7.1176000000000000)
		(10, 13.2227000000000000)
		(15, 18.6338000000000000)
		(20, 24.3004000000000000)
		(25, 32.0524000000000000)
		(30, 37.7382000000000000)
		(35, 45.4036000000000000)
	};
        \addlegendentry{{\small MEET}};

        \addplot coordinates {
(1, 496.6230000000000000)
(3, 539.5989000000000000)
(5, 785.0971000000000000)
(10, 859.7701000000000000)
(15, 969.9022000000000000)
(20, 1066.7659000000000000)
(25, 1241.8696000000000000)
(30, 1367.0609000000000000)
(35, 1440.7846000000000000)
	};
        \addlegendentry{{\small $P^+$-MEET}};

	\end{semilogyaxis}
 \end{tikzpicture}
	\label{pic:contr-exp-time-vs-consize-3000}
}
\subfigure[$|\succ| = 5000$]{
 \begin{tikzpicture}[scale=0.85]
	\pgfplotsset{every axis legend/.append style={
	    	at={(0.5,1.03)},
    		anchor=south,
		},
		every axis/.append style={
		font=\scriptsize
		}
	}

     	\begin{semilogyaxis}[
        	height=5cm,
          	width=5cm,
          	cycle list name=\myblackwhitelist,
		ylabel={running time (ms)},
		xlabel={$|CON|$},
		legend columns=2,
		minor x tick num=1
      	]

        \addplot coordinates {
		(1, 16.9557000000000000)
		(3, 21.2175000000000000)
		(5, 23.4796000000000000)
		(10, 33.8474000000000000)
		(15, 36.5972000000000000)
		(20, 49.3023000000000000)
		(25, 49.6625000000000000)
		(30, 59.9019000000000000)
		(35, 74.6545000000000000)
	};
        \addlegendentry{{\small PREFIX}};

        \addplot coordinates {
(1, 1073.6141000000000000)
(3, 1057.5116000000000000)
(5, 1079.6400000000000000)
(10, 1097.4738000000000000)
(15, 1085.9495000000000000)
(20, 1096.3924000000000000)
(25, 1124.9653000000000000)
(30, 1084.4411000000000000)
(35, 1104.9442000000000000)
	};
        \addlegendentry{{\small $P^+$-MIN}};

        \addplot coordinates {
		(1, 2.1168000000000000)
		(3, 6.3472000000000000)
		(5, 10.7771000000000000)
		(10, 20.2219000000000000)
		(15, 32.0397000000000000)
		(20, 43.9596000000000000)
		(25, 53.3231000000000000)
		(30, 61.7763000000000000)
		(35, 74.3983000000000000)
	};
        \addlegendentry{{\small MEET}};

        \addplot coordinates {
		(1, 204.7027000000000000)
		(3, 403.8810000000000000)
		(5, 715.5919000000000000)
		(10, 921.9467000000000000)
		(15, 1634.6978000000000000)
		(20, 1977.9117000000000000)
		(25, 2171.9408000000000000)
		(30, 2482.0794000000000000)
		(35, 2444.0134000000000000)
	};
        \addlegendentry{{\small $P^+$-MEET}};

	\end{semilogyaxis}
 \end{tikzpicture}
	\label{pic:contr-exp-time-vs-consize-5000}
}
\caption{Contraction performance. Small \contractingrels} 
\label{pic:contr-exp-time-vs-consize}
\end{figure}

Figure \ref{pic:contr-exp-size-vs-consize} shows the dependence of the \contraction size on 
the size of preference relation and the size of \contracting. For every value of the \contractingrel size, the charts show the average size of the corresponding \contraction. As we can see, the sizes of minimal \contractions (PREFIX and $P^+$-MIN) are the least among all \contractions. This supports the intuition that a minimal set of reasons for preferences
not to hold is smaller than the set of all such reasons. Another important observation here is that
due to the comparatively large size of $P^+$, 
the size of a \meetprotcontraction{$P^+$} is generally half the size of the corresponding \meetcontraction. 

\begin{figure}[ht]
	\subfigure[$|\succ|=2000$]{
 \begin{tikzpicture}[scale=0.85]
	\pgfplotsset{every axis legend/.append style={
	    	at={(0.5,1.03)},
    		anchor=south,
		},
		every axis/.append style={
		font=\scriptsize
		}
	}

     	\begin{axis}[
        	height=5cm,
          	width=5cm,
          	cycle list name=\myblackwhitelist,
		ylabel={$|P^-|$},
		xlabel={$|CON|$},
		minor x tick num=1, 
		legend columns = 2
      	]

        \addplot coordinates {
		(1, 22.5000000000000000)
		(3, 36.1000000000000000)
		(5, 75.9000000000000000)
		(10, 127.4000000000000000)
		(15, 176.1000000000000000)
		(20, 238.5000000000000000)
		(25, 285.0000000000000000)
		(30, 379.2000000000000000)
		(35, 394.6000000000000000)
	};
        \addlegendentry{{\small PREFIX}};

        \addplot coordinates {
		(1, 24.2000000000000000)
		(3, 38.1000000000000000)
		(5, 73.9000000000000000)
		(10, 132.4000000000000000)
		(15, 172.1000000000000000)
		(20, 244.5000000000000000)
		(25, 292.0000000000000000)
		(30, 378.2000000000000000)
		(35, 398.6000000000000000)
	};
        \addlegendentry{{\small $P^+$-MIN}};

        \addplot coordinates {
(1, 263.1000000000000000)
(3, 275.3000000000000000)
(5, 607.5000000000000000)
(10, 793.5000000000000000)
(15, 893.0000000000000000)
(20, 1100.6000000000000000)
(25, 1236.4000000000000000)
(30, 1458.7000000000000000)
(35, 1448.5000000000000000)
	};
        \addlegendentry{{\small MEET}};

        \addplot coordinates {
		(1, 61.8000000000000000)
		(3, 110.0000000000000000)
		(5, 214.1000000000000000)
		(10, 360.9000000000000000)
		(15, 448.9000000000000000)
		(20, 594.4000000000000000)
		(25, 655.1000000000000000)
		(30, 897.4000000000000000)
		(35, 825.8000000000000000)
	};
        \addlegendentry{{\small $P^+$-MEET}};

	\end{axis}
 \end{tikzpicture}
	\label{pic:contr-exp-size-vs-consize-2000}
}
\subfigure[$|\succ| = 3000$]{
 \begin{tikzpicture}[scale=0.85]
	\pgfplotsset{every axis legend/.append style={
	    	at={(0.5,1.03)},
    		anchor=south,
		},
		every axis/.append style={
		font=\scriptsize
		}
	}

     	\begin{axis}[
        	height=5cm,
          	width=5cm,
          	cycle list name=\myblackwhitelist,
		ylabel={$|P^-|$},
		xlabel={$|CON|$},
		minor x tick num=1,
		minor y tick num=1,
		legend columns = 2
      	]

        \addplot coordinates {
(1, 15.8000000000000000)
(3, 39.1000000000000000)
(5, 91.9000000000000000)
(10, 168.9000000000000000)
(15, 211.3000000000000000)
(20, 273.7000000000000000)
(25, 344.7000000000000000)
(30, 451.7000000000000000)
(35, 522.0000000000000000)
	};
        \addlegendentry{{\small PREFIX}};

        \addplot coordinates {
(1, 18.8000000000000000)
(3, 49.1000000000000000)
(5, 100.9000000000000000)
(10, 175.9000000000000000)
(15, 228.3000000000000000)
(20, 289.7000000000000000)
(25, 367.7000000000000000)
(30, 465.7000000000000000)
(35, 524.0000000000000000)
	};
        \addlegendentry{{\small $P^+$-MIN}};

        \addplot coordinates {
(1, 120.8000000000000000)
(3, 273.5000000000000000)
(5, 913.8000000000000000)
(10, 1127.6000000000000000)
(15, 1314.7000000000000000)
(20, 1415.7000000000000000)
(25, 1639.9000000000000000)
(30, 1958.6000000000000000)
(35, 2114.5000000000000000)
	};
        \addlegendentry{{\small MEET}};

        \addplot coordinates {
(1, 43.0000000000000000)
(3, 124.5000000000000000)
(5, 361.7000000000000000)
(10, 490.3000000000000000)
(15, 580.8000000000000000)
(20, 694.7000000000000000)
(25, 870.7000000000000000)
(30, 1136.4000000000000000)
(35, 1263.2000000000000000)
	};
        \addlegendentry{{\small $P^+$-MEET}};

	\end{axis}
 \end{tikzpicture}
	\label{pic:contr-exp-size-vs-consize-3000}
}
\subfigure[$|\succ| = 5000$]{
 \begin{tikzpicture}[scale=0.85]
	\pgfplotsset{every axis legend/.append style={
	    	at={(0.5,1.03)},
    		anchor=south,
		},
		every axis/.append style={
		font=\scriptsize
		}
	}

     	\begin{axis}[
        	height=5cm,
          	width=5cm,
          	cycle list name=\myblackwhitelist,
		ylabel={$|P^-|$},
		xlabel={$|CON|$},
		minor x tick num=1,
		minor y tick num=1,
		legend columns = 2
      	]

        \addplot coordinates {
(1, 24.1000000000000000)
(3, 57.4000000000000000)
(5, 116.7000000000000000)
(10, 183.9000000000000000)
(15, 289.2000000000000000)
(20, 408.5000000000000000)
(25, 466.0000000000000000)
(30, 532.1000000000000000)
(35, 613.6000000000000000)
	};
        \addlegendentry{{\small PREFIX}};

        \addplot coordinates {
(1, 32.1000000000000000)
(3, 68.4000000000000000)
(5, 128.7000000000000000)
(10, 198.9000000000000000)
(15, 310.2000000000000000)
(20, 428.5000000000000000)
(25, 482.0000000000000000)
(30, 546.1000000000000000)
(35, 629.6000000000000000)
	};
        \addlegendentry{{\small $P^+$-MIN}};

        \addplot coordinates {
(1, 354.5000000000000000)
(3, 674.1000000000000000)
(5, 1263.4000000000000000)
(10, 1523.1000000000000000)
(15, 2259.5000000000000000)
(20, 2609.6000000000000000)
(25, 2553.2000000000000000)
(30, 2849.1000000000000000)
(35, 2901.0000000000000000)
	};
        \addlegendentry{{\small MEET}};

        \addplot coordinates {
(1, 137.4000000000000000)
(3, 283.0000000000000000)
(5, 493.1000000000000000)
(10, 654.2000000000000000)
(15, 1046.7000000000000000)
(20, 1266.4000000000000000)
(25, 1404.0000000000000000)
(30, 1590.6000000000000000)
(35, 1536.7000000000000000)
	};
        \addlegendentry{{\small $P^+$-MEET}};

	\end{axis}
 \end{tikzpicture}
	\label{pic:contr-exp-size-vs-consize-5000}
}
\caption{Full contractor size. Small \contractingrels} 
\label{pic:contr-exp-size-vs-consize}
\end{figure}

\medskip

In the next experiment, we assume that \contractingrels are elicited automatically based on 
indirect user feedback. Hence, they may be of large size. We construct such relations 
here from \emph{similar edges}. Two edges $xy$ and $x'y'$ are considered similar if 
the tuples $x$, $x'$ and $y$, $'y$ are similar. We use the cosine similarity measure to compute 
similarity of tuples. Here we fixed the size of the preference relation to $5000$. The sizes of 
\contractingrels range from $10\%$ to $50\%$ of preference relation size. The size of every $P^+$ 
is $25\%$ of the corresponding preference relation size. Similarly to the previous
experiment, we computed the performance of the contraction operators and the sizes of generated \contractions.
The results are shown in Figure \ref{pic:contr-exp-large}.

\begin{figure}
	\centering
	\subfigure[Contraction performance]{
 	\begin{tikzpicture}
	\pgfplotsset{every axis legend/.append style={
	    	at={(0.5,1.03)},
    		anchor=south,
		},
		every axis/.append style={
		font=\scriptsize
		}
	}

     	\begin{axis}[
        	height=5cm,
          	width=5cm,
          	cycle list name=\myblackwhitelist,
		ylabel={running time (ms)},
		xlabel={$\frac{|CON|}{|\succ|}$},
		legend columns=2,
		minor x tick num=1,
		minor y tick num=1
      	]

 	 	\addplot coordinates{
(0.1, 643.3242000000000000)
(0.2, 1261.9972000000000000)
(0.3, 2035.1748000000000000)
(0.4, 2581.2116000000000000)
(0.5, 3382.7923000000000000)
		};
		\addlegendentry{{\small PREFIX}};

 	 	\addplot coordinates{
(0.1, 1607.4499000000000000)
(0.2, 2214.9417000000000000)
(0.3, 3038.6724000000000000)
(0.4, 3631.4255000000000000)
(0.5, 4306.2362000000000000)
		};
		\addlegendentry{{\small $P^+$-MIN}};

 	 	\addplot coordinates{
(0.1, 839.2135000000000000)
(0.2, 1635.7405000000000000)
(0.3, 2769.4070000000000000)
(0.4, 3657.0518000000000000)
(0.5, 4416.0570000000000000)
		};
		\addlegendentry{{\small MEET}};

 	 	\addplot coordinates{
(0.1, 2946.2443000000000000)
(0.2, 3081.1540000000000000)
(0.3, 3984.9963000000000000)
(0.4, 4782.8790000000000000)
(0.5, 5004.0301000000000000)
		};
		\addlegendentry{{\small $P^+$-MEET}};
		\end{axis}
 	\end{tikzpicture}
	\label{pic:contr-exp-time-vs-consize-large}
	}
	\subfigure[Full contractor size]{
 	\begin{tikzpicture}
	\pgfplotsset{every axis legend/.append style={
	    	at={(0.5,1.03)},
    		anchor=south,
		},
		every axis/.append style={
		font=\scriptsize
		}
	}

     	\begin{axis}[
        	height=5cm,
          	width=5cm,
          	cycle list name=\myblackwhitelist,
		ylabel={$\frac{|P^-|}{|\succ|}$},
		xlabel={$\frac{|CON|}{|\succ|}$},
		ymin=0,
		legend columns = 2,
		minor x tick num=1,
		minor y tick num=1
      	]

 	 	\addplot coordinates{
(0.1, 0.21)
(0.2, 0.28)
(0.3, 0.43)
(0.4, 0.49)
(0.5, 0.57)
		};
		\addlegendentry{{\small PREFIX}};
	 	\addplot coordinates{
(0.1, 0.22)
(0.2, 0.30)
(0.3, 0.44)
(0.4, 0.49)
(0.5, 0.59)
		};
		\addlegendentry{{\small $P^+$-MIN}};
 	 	\addplot coordinates{
(0.1, 0.36)
(0.2, 0.45)
(0.3, 0.6)
(0.4, 0.63)
(0.5, 0.67)
		};
		\addlegendentry{{\small MEET}};

 	 	\addplot coordinates{
(0.1, 0.30)
(0.2, 0.44)
(0.3, 0.55)
(0.4, 0.61)
(0.5, 0.67)
		};
		\addlegendentry{{\small $P^+$-MEET}};
		\end{axis}
 	\end{tikzpicture}
	\label{pic:contr-exp-size-vs-consize-large}
}
\caption{Large \contractingrels}
\label{pic:contr-exp-large}
\end{figure}

First, we note that here the difference between running times of the contraction algorithms 
is not as large as in the previous experiment. Next, consider the value of the function $aux(CON, P^-) = \frac{|P^-|}{|CON|}-1$
in this and the previous experiment. $aux(CON, P^-)$ is equal to the average number of edges
contracted to contract one edge of $CON$. 
Due to the similarity of edges in 
$P^-$, the value of $aux(CON, P^-)$ is significantly smaller in this experiment 
than in the previous one. For instance, $aux(CON, P^-)$ according to 
Figure \ref{pic:contr-exp-size-vs-consize-large} ranges from to $1.1$ to $0.1$ for PREFIX. 
For the same algorithm in Figure \ref{pic:contr-exp-size-vs-consize-5000}, $aux(CON, P^-)$ ranges
from 16 to 23. 

\medskip

Note that in all experiments, the time spent to compute any \contraction did not go beyond 5 seconds. 
If the \contractingrel is small and preference protection is not used, then 
these times are even less than 100ms. Hence we conclude that the algorithms we proposed to contract
finite relations are efficient and may be used in real-life database applications. 

\section{Related work}

\subsection{Relationships with other operators of preference relation change}

A number of operators of preference relation change have been proposed so far. 
An operator of preference revision is defined in \cite{Chomicki2007}. 
A preference relation there is \emph{revised} by another preference relation called
a \emph{revising relation}. The result of revision is still another preference relation. 
\cite{Chomicki2007} defines three semantics of preference revision -- 
union, prioritized, and Pareto -- which are different in the way an original and 
a revising preference relations are composed. For all these
semantics, \cite{Chomicki2007} identifies cases (called \emph{$0$-, $1$-, and $2$-conflicts}) when the revision fails, i.e.,
when there is no SPO preference relation satisfying the operator semantics. 
This work consideres revising preference relations only by preference relations. 
Although it does not address the problem of discarding subsets of preference relations
explicitly, revising a preference relation using Pareto and prioritized revision operators may 
result in discarding a subset of the original preference relation. It has been shown here
that the revised relation is an SPO for limited classes of the composed relations.

Another operator of preference relation change is defined in \cite{Balke2006}. 
This work deals with a special class of preference relations called \emph{skyline}
\cite{Borz01theskyline}. Preference relations in \cite{Balke2006} are changed by \emph{equivalence relations}. In particular, 
a modified preference relation is an extension of the original relation in which 
specified tuples are \emph{equivalent}. This change operator is defined for only
those tuples which are \emph{incomparable} or already \emph{equivalent} according
to the original preference relation. This preference change operator only adds new edges
to the original preference relation, and thus, preference relation contraction cannot
be expressed using this operator.

In \cite{DBLP:conf/aaai/MindolinC08}, we introduced the operation of minimal preference 
contraction for preference relations. We studied properties of this operation and proposed 
algorithms for computing \contractions and preference-protecting \contractions for \boundedlayer \contractingrels. 
In the current paper, we generalize this approach and 
we develop a method of checking the \boundedlayerprop for finitely representable \contractingrels. 
We 
introduce the operations of meet and meet preference-protecting contraction, and
propose methods for computing them. We also provide experimental evaluation of the 
framework and a comprehensive discussion of related work.

\subsection{Relationships with the belief revision theory}

Preferences can be considered as a special form of human \emph{beliefs}, and thus
their change may be modeled in the context of the belief change theory. The approach here is to
represent beliefs as truth-functional logical sentences. A \emph{belief set} is a set of 
the sentences that are believed by an agent. A common assumption is that
belief sets are closed under logical consequence. The most common operators of belief
set change are \emph{revision} and \emph{contraction} \cite{agm1985}. A number of versions of those operators have
been proposed \cite{hansson-book-chapter} to capture various real life scenarios.

This approach is quite different from the preference relation approach. First, the language
of truth functional sentences is rich and allows for rather complex statements
about preferences: conditional preferences ($a > b \rightarrow c > d$), 
ambiguous preferences ($a > b \vee c > d$) etc. In contrast to that, preferences in the preference
relation framework used in this paper are certain: given a preference relation $\succ$, it is only possible to
check if a tuple is preferred or not to another tuple. Another important difference of these
two frameworks is that the belief revision theory exploits the open-world assumption, while
the preference relation framework uses the closed-world assumption. In addition to that, 
belief revision is generally applicable in the context of finite domains. However, 
the algorithms we have proposed here can be applied to finite and infinite preference relations.

\subsection{Relationships with the preference state framework}

Another preference representation and change framework close to the belief revision theory
is the \emph{preference state} framework \cite{Hansson1995}. As in belief revision,
a preference state is a logically closed sets of sentences describing preferences of an agent.
However, every preference state has an underlying set of preference \emph{relations}. The connection
between states and relations is as follows. A preference relation (which is an order of tuples)
is an unambiguous description of an agent preference. A preference relation induces a set of logical 
sentences which describe the relations. However, it is not always the case
that people's preferences are unambiguous. Hence, every preference \emph{state} is associated with a \emph{set}
of possible preference relations.

Here we show an adaptation of the preference state framework to the preference
relation framework. As a result, we obtain a framework that encompasses preference contraction and 
restricted preference revision.

\begin{definition}\label{def:alt-lang}
 An \emph{alternative} is an element of $\U$. Nonempty subsets of $\U$ are called 
 \emph{sets of alternatives}. The tuple language $\lang_{\U}$ is defined as
\begin{itemize}
 \item if $X, Y \in \U$ then $X > Y \in L_{\U}$
 \item if $X > Y \in L_{\U}$ then $\neg (X > Y) \in L_{\U}$.
\end{itemize}
\end{definition}

A subset of $\lang_{\U}$ is called a \emph{restricted preference set}. The language defined
above is a very restricted version of the language in \cite{Hansson1995} since the only Boolean operator
allowed is negation. Throughout the discussion, we assume that the set of alternatives is fixed to 
a subset $\U_r$ of $\U$.

\begin{definition}
 Let $R$ be a subset of $\U_r \times \U_r$. The set $[R]$ of sentences is defined as follows:
\begin{itemize}
 \item $x > y \in [R]$ iff $xy \in R$
 \item $\neg (x > y) \in [R]$ iff $x, y \in \U_r$ and $x > y \not \in [R]$
\end{itemize}
\end{definition}

\begin{definition}
 A binary relation $R \subset \U_r \times \U_r$ is 
	a \emph{restricted preference model} iff it is a strict partial order. Given a restricted preference model 
	$R$, the corresponding $[R]$ is called a \emph{restricted preference state}.
\end{definition}

In contrast to the definition above, the preference model in \cite{Hansson1995} is defined as a \emph{set} of SPO relations,
and a preference state is an intersection of $[R]$ for all members $R$ of the corresponding preference model.

We define two operators of change of restricted preference states: revision and contraction. 
Restricted states here are changed by sets of statements. 
In \cite{Hansson1995}, a change of a preference state by a set of sentences is defined
as the corresponding change by the conjunction of the corresponding statements. Moreover, 
change by any set of sentences is allowed. 
In the adaptation
of that framework we define here, conjunctions of statements are not a part of the language. Moreover, 
preference revision \cite{Chomicki2007} only allows for adding new preferences, 
and preference relation contraction we have proposed in this paper allows only discarding
existing preferences. Here we aim to define the operator of restricted preference set
revision which captures the semantics of those two operators.

\begin{definition}
 A restricted preference set $S$ is called \emph{positive} iff all sentences it contains are
 in form
 $A > B$ for some $A, B \in \U_r$. Analogously, $S$ is \emph{negative} iff it only contains sentences
 in form $\neg (A > B)$ for some $A, B \in \U_r$. 

 A restricted preference set is a \emph{complement} of $S$ (denoted as $\overline{S}$) if
 for all $A, B \in \U_r$, $A > B \in S$ iff $\neg(A > B) \in S$ and $\neg(A > B) \in S$ iff $A > B \in S$.

 A relation $R_S$ is a \emph{minimal representation} of a restricted preference state $S$
 iff $R_S$ is a minimal relation such that $S \subseteq [R_S]$.
\end{definition}

Positive and negative restricted preference sets are used to change restricted preference states. 
Intuitively, a positive preference set represents the existence of preferences while a negative
set represents a lack of preferences. 

\begin{definition}
 Let $R$ be a restricted preference model. 
Then the operator $*$ on $R$ is a \emph{restricted preference revision on $R$} if and only if for all
positive/negative restricted preference sets $S$, $R \setrev S = \cap \{R'\}$ for all $R'$ such that
\begin{enumerate}
 \item $S \subseteq [R']$
 \item $R'$ is an SPO
 \item there is no SPO $R''$ with $S \subseteq [R'']$ such that
	$R \subseteq R'' \subset R'$ (if $S$ is positive) or 
	$R' \subset R'' \subseteq R$ (if $S$ is negative).
\end{enumerate}
\end{definition}

The last condition in the definition above expresses the minimality of restricted preference state change. 
This condition is different for positive and negative sets: when we add positive statements, we do not
want to discard any existing positive sentences, and when negative statements are added, no new positive
sentences should be added. 
The restricted preference revision operator defined above is different from preference state revision 
in \cite{Hansson1995}. First, preference state revision allows for revision by (finite) sets of arbitrary
sentences, not only positive and negative sentences, as here. Second, the minimality condition here is defined using set containment
while in \cite{Hansson1995} it is defined as a function of symmetric set difference of the original preference 
relations and $R'$. As a result, revising by preference state by a positive/negative sentence may result in 
losing an existing positive/negative sentence. The last difference is based on preference state representation: 
the result of preference revision in \cite{Hansson1995} is a union of relations $R''$ while in our case it is 
an intersection. 

Below we define the operator of contraction for restricted preference states which is similar 
to the contraction of preference states. 

\begin{definition}
 Let $R$ be a restricted preference model. 
Then the operator $\setcon$ on $R$ is \emph{restricted preference contraction on $R$} if and only if for all
positive/negative restricted preference sets $S$, $R \setcon S = R \setrev \overline{S}$.
\end{definition}

Given the operators on restricted preference states we have defined here, their relationships 
with the preference change framework are straightforward.

\begin{proposition}\label{prop:set-rev-ops}
 Let $R$ be a restricted preference model, $S$ be a positive or negative restricted preference set, 
and $R_S$ be a minimal representation of $S$.
 Then $R \setrev S$ is
 \begin{enumerate}
  \item $\emptyset$, if $S$ is a positive restricted preference set and $R \cup R_S$ has cyclic path,
  \item $TC(R \cup R_S)$, if $S$ is a positive restricted preference set and $R \cup R_S$ has no
	cyclic paths,
  \item $\cap \{ R - P^-\ |\ P^-\mbox{ is a minimal \contraction{} of $R$ by $\overline{R_S}$}\}$, 
	if $S$ is a negative restricted preference set,
 \end{enumerate}
 where $TC$ is the transitive closure operator.
\end{proposition}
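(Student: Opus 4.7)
The plan is to verify each of the three cases by unfolding the definition of restricted preference revision and matching it against the corresponding combinatorial object. The key translation step is that for a positive $S$ the condition $S \subseteq [R']$ becomes $R_S \subseteq R'$, while for a negative $S$ (reading $\overline{R_S}$ as the minimal representation of the complementary positive set $\overline{S}$, i.e.\ the underlying edges of the negated statements in $S$) the condition $S \subseteq [R']$ becomes $R' \cap \overline{R_S} = \emptyset$. Clause 3 of the definition, restricted to SPOs $R''$ with $R \subseteq R''$ in the positive case and $R'' \subseteq R$ in the negative case, then forces $R'$ to be either a minimal SPO containing $R \cup R_S$ or a maximal SPO sub-relation of $R$ avoiding $\overline{R_S}$.

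For Cases 1 and 2, every admissible $R'$ must contain $R \cup R_S$ and be an SPO. In Case 1 a cyclic path in $R \cup R_S$ is incompatible with irreflexivity plus transitivity of any SPO extension, so the family of admissible $R'$ is empty and $R \setrev S = \emptyset$ by convention. In Case 2, $TC(R \cup R_S)$ is an SPO: transitive by construction, and irreflexive because $R \cup R_S$ has no cyclic paths so no element can reach itself. Any SPO containing $R \cup R_S$ must already contain its transitive closure, so $TC(R \cup R_S)$ is the unique minimal admissible $R'$ and the intersection in the definition of $\setrev$ collapses to it.

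For Case 3 I would set up a bijection between admissible $R'$ and minimal \contractions{} of $R$ by $\overline{R_S}$ via $R' = R - P^-$. Clause 1 translates to $\overline{R_S} \subseteq P^-$, clause 2 is exactly the SPO requirement on $R - P^-$ that defines a \contraction{}, and the maximality of $R'$ under inclusion demanded by clause 3 corresponds to minimality of $P^-$ under inclusion. Intersecting over all admissible $R'$ therefore yields the stated formula $\cap \{R - P^- \mid P^- \text{ a minimal \contraction{} of } R \text{ by } \overline{R_S}\}$.

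The main obstacle is the notational convention: the generic definition of minimal representation applied to a purely negative $S$ yields the empty relation, so in Case 3 the symbol $\overline{R_S}$ must be interpreted as the minimal representation of the complementary positive set $\overline{S}$. Once this is resolved, each case reduces to checking that the ``minimal SPO extension'' and ``maximal SPO restriction'' characterizations of restricted preference revision coincide exactly with the transitive closure and minimal \contraction{} notions established earlier in the paper.
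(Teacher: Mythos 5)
Your proposal is correct and takes essentially the same route as the paper's own proof: unfold the definition of $\setrev$, identify the admissible $R'$ for positive $S$ as the minimal SPO extensions of $R \cup R_S$ (none when cyclic, uniquely $TC(R\cup R_S)$ otherwise), and for negative $S$ match the maximal SPO sub-relations of $R$ avoiding the forbidden edges with the minimal \contractions{}, yielding the stated intersection. Your explicit remark that $\overline{R_S}$ must be read as $R_{\overline{S}}$ is a fair observation about the paper's notation and matches what the paper's proof actually uses.
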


\begin{proof}

 When a restricted preference model is revised by a positive preference set, the resulting relation 
 $R \setrev S$ is the intersection of all minimal SPO extensions $R'$ of $R$ and $R_S$
 (i.e., $R'$ has to contain an edge from $A$ to $B$ if $A > B \in S$). Such an extension $R'$ does not
 exist if there is an cyclic
 path in $R \cup R_S$. However, if no cyclic paths exist, then there is only one such a minimal extension 
 $R'$ which is equal to the transitive closure of $R \cup R_S$. Hence, $R \setrev S = TC(R \cup R_S)$.
 We note that this result is equivalent to the 
	result of the \emph{union preference revision} \cite{Chomicki2007}.
 
 When a restricted preference model is revised by a negative preference set, the resulting relation $R \setrev S$
 has to be a subset of $R$. Moreover, for all $\neg(A > B) \in S$, there should be no edge from $A$ to $B$
 in $R \setrev S$. Hence, $R \setrev S$ is an intersection of minimally contracted $R$ by $R_{\overline{S}}$,
 which is a result of the full meet contraction of $R$ by $R_{\overline{S}}$. \qed
\end{proof}

Below we list some properties of the revision and contraction operators of restricted preference states. 

\begin{proposition}\label{prop:rev-set-props}
 Let $R$ be a restricted preference model and $S$ be a positive/negative restricted preference set. Then
 \begin{enumerate}
  \item $R \setrev S$ is an SPO (closure)
  \item $S \subseteq [R \setrev S]$ unless $S$ is positive and $R_S \cup R$ has a cyclic path (limited success)
  \item If $S \subseteq [R]$, then $R = R \setrev S$ (vacuity)
 \end{enumerate}
\end{proposition}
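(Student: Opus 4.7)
The plan is to verify each of the three properties by case analysis on whether $S$ is positive or negative, leveraging the explicit description of $R \setrev S$ given in Proposition \ref{prop:set-rev-ops}.

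For closure, I would handle three subcases. If $S$ is positive and $R \cup R_S$ contains a cyclic path, then $R \setrev S = \emptyset$ by Proposition \ref{prop:set-rev-ops}(1), and the empty relation is trivially an SPO. If $S$ is positive and $R \cup R_S$ is acyclic, then $R \setrev S = TC(R \cup R_S)$, which is transitive by construction and irreflexive because any reflexive pair $xx$ in the transitive closure would correspond to a cyclic path in $R \cup R_S$. If $S$ is negative, then $R \setrev S$ is an intersection of SPOs $(R - P^-)$ ranging over minimal contractions $P^-$ of $R$ by $R_{\overline{S}}$; here I would invoke the elementary fact that the intersection of an arbitrary family of transitive irreflexive relations is transitive and irreflexive.

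For limited success, I would argue that when $S$ is positive and $R \cup R_S$ is acyclic, $R_S \subseteq TC(R \cup R_S) = R \setrev S$ directly gives every $A > B \in S$ a corresponding edge in $R \setrev S$, so $S \subseteq [R \setrev S]$. When $S$ is negative, the defining condition (1) of the revision forces $S \subseteq [R']$ for each $R'$ in the intersection, i.e., $AB \notin R'$ whenever $\neg (A > B) \in S$; hence $AB$ is absent from the intersection $R \setrev S$ as well, which is exactly $S \subseteq [R \setrev S]$. The only excluded case is the positive-with-cycle case where the revision is empty and thus cannot witness $S$.

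For vacuity, assume $S \subseteq [R]$. I would show $R$ itself is the unique $R'$ satisfying the three conditions of the revision definition. In the positive case, $R$ clearly satisfies (1) and (2); for (3), if some other SPO $R'$ with $R \subseteq R'$ and $S \subseteq [R']$ were chosen, then taking $R'' := R$ would give an SPO with $R \subseteq R'' \subset R'$ and $S \subseteq [R'']$, contradicting minimality. In the negative case, $R$ again satisfies (1) and (2), and any candidate $R' \subsetneq R$ would be dominated by $R'' := R$ with $R' \subset R'' \subseteq R$ and $S \subseteq [R'']$, again contradicting (3). Thus $R$ is the sole element of the intersection, yielding $R \setrev S = R$.

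The main obstacle I anticipate is being precise about which direction of the minimality condition applies in each polarity: the positive case uses $R \subseteq R'' \subset R'$ (minimal extension) while the negative case uses $R' \subset R'' \subseteq R$ (maximal restriction). Once this asymmetry is cleanly tracked, the rest of the proof consists of straightforward verifications from the definitions and the explicit formulas of Proposition \ref{prop:set-rev-ops}.
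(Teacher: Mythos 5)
Your proposal is correct and follows essentially the same route as the paper: the paper's proof also derives all three properties directly from the explicit characterization of $R \setrev S$ in Proposition \ref{prop:set-rev-ops}, with your version simply spelling out the case analysis in more detail. The only cosmetic difference is that for vacuity you argue from the minimality clause in the definition of $\setrev$ that $R$ is the unique candidate, whereas the paper observes that $R_S \subseteq R$ makes $TC(R \cup R_S) = R$ in the positive case and that contraction by absent edges is trivial in the negative case; both are immediate verifications of the same fact.
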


\begin{proof}
 All the properties here follow from Proposition \ref{prop:set-rev-ops}. Namely, property 1 follows from
 the fact that the result of $R \setrev S$ is an SPO in every case of Proposition \ref{prop:set-rev-ops}.
 Property 2 follows from Proposition \ref{prop:set-rev-ops} and the definition of $[R \setrev S]$. 
 Property 3 follows from Proposition \ref{prop:set-rev-ops} and 1) $S \subseteq [R]$ implies
 $R_S \subseteq R$ (if $S$ is positive), and 2) a minimally contracted preference relation is equal to itself 
 if contracted by non-existent edges (if $S$ is negative). \qed
\end{proof}

\begin{proposition}\label{prop:con-set-props}
 Let $R$ be a restricted preference model and $S$ be a restricted positive/negative preference set. Then
 \begin{enumerate}
  \item $R \setcon S$ is an SPO (closure)
  \item $S \subseteq [R \setcon S]$ unless $S$ is negative and $R_{\overline{S}} \cup R$ has a cyclic path (limited success)
  \item If $S \cap [R] = \emptyset$, then $R = R \setcon S$ (vacuity)
  \item $R \setrev S = (R \setcon \overline{S}) \setrev S$ unless $S$ is positive and $R_{S} \cup R$ has a cyclic path (limited Levi identity)
  \item $R \setcon S = R \setrev \overline{S}$ (Harper identity, by definition)
 \end{enumerate} 
\end{proposition}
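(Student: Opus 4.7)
The plan is to exploit the Harper identity $R \setcon S = R \setrev \overline{S}$, which is property (5) and which holds by the very definition of $\setcon$; with this identity in hand, every remaining property reduces, essentially by substitution, to the corresponding property of $\setrev$ already established in Proposition \ref{prop:rev-set-props}. The one piece of bookkeeping that will recur throughout is that complementation swaps polarity: $\overline{S}$ is positive iff $S$ is negative, and vice versa, so each ``positive/negative'' clause flips when invoked on $\overline{S}$.

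For closure (1), the expression $R \setcon S = R \setrev \overline{S}$ is an SPO by Proposition \ref{prop:rev-set-props}(1) applied to $\overline{S}$, which is itself a positive or negative restricted preference set. For limited success (2), I would apply Proposition \ref{prop:rev-set-props}(2) to $\overline{S}$, obtaining the success conclusion about $\overline{S} \subseteq [R \setrev \overline{S}] = [R \setcon S]$ with exception clause ``$\overline{S}$ is positive and $R_{\overline{S}} \cup R$ has a cyclic path''; since $\overline{S}$ is positive exactly when $S$ is negative, this rewrites to the stated exception.

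For vacuity (3), the one genuine bit of work is translating the hypothesis $S \cap [R] = \emptyset$ into $\overline{S} \subseteq [R]$, after which Proposition \ref{prop:rev-set-props}(3) applied to $\overline{S}$ gives $R = R \setrev \overline{S} = R \setcon S$. I would split cases on the polarity of $S$: if $S$ is positive, then every $A > B \in S$ is absent from $[R]$, so $AB \notin R$, so $\neg(A > B) \in [R]$ by the definition of $[R]$, giving $\overline{S} \subseteq [R]$; the case $S$ negative is symmetric (every $\neg(A > B) \in S$ missing from $[R]$ forces $AB \in R$, hence $A > B \in [R]$). For the limited Levi identity (4), I would first use the involutivity of complement to rewrite $R \setcon \overline{S} = R \setrev \overline{\overline{S}} = R \setrev S$, so that $(R \setcon \overline{S}) \setrev S = (R \setrev S) \setrev S$; then by Proposition \ref{prop:rev-set-props}(2), $S \subseteq [R \setrev S]$ holds unless $S$ is positive and $R_S \cup R$ has a cyclic path (exactly the stated exception), and in the non-exceptional case Proposition \ref{prop:rev-set-props}(3) collapses $(R \setrev S) \setrev S$ to $R \setrev S$, as required. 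Property (5) requires no proof, being the definition.

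The main obstacle, such as it is, is the case analysis in the vacuity step, since this is the only place where one must unpack the definition of $[R]$ rather than simply invoke a property of $\setrev$; everything else is straightforward bookkeeping around polarity flips under complementation and repeated appeals to the Harper identity.
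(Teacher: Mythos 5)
Your proof is correct and takes essentially the same route as the paper's, which likewise reduces properties 1--4 to Proposition \ref{prop:rev-set-props} via the Harper identity (5); the paper's version is just a two-line statement of this reduction, whereas you supply the polarity bookkeeping and the translation of $S \cap [R] = \emptyset$ into $\overline{S} \subseteq [R]$ explicitly. The only caveat is in property 2, where what actually follows from Proposition \ref{prop:rev-set-props}(2) is $\overline{S} \subseteq [R \setcon S]$ rather than the literal $S \subseteq [R \setcon S]$ --- but that mismatch is inherited from the statement itself (success of a contraction should mean the contracted sentences are absent), and the paper's own proof glosses over it in exactly the same way.
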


\begin{proof}
 Properties 1, 2, and 3 follow from Proposition \ref{prop:rev-set-props}. Property 4 follows from the fact
that $R \setcon \overline{S} = R \setrev S$ by definition, and Proposition \ref{prop:rev-set-props} implies $R \setrev S = (R\setrev S) \setrev S$ 
when either $S$ is negative or $S$ is positive but $R_{S} \cup R$ has no cyclic path. \qed
\end{proof}

An important difference between the restricted preference-set change operators 
and the corresponding change operators from \cite{Hansson1995}
is that the restricted versions are not always successful (property 2 in Proposition \ref{prop:set-rev-ops}),
and Levi identity holds for a certain class of restricted preference sets.
In addition to that, the operator of preference set contraction in \cite{Hansson1995}
has the property of inclusion ($R \subseteq R \setcon S$) and recovery (if $S \subseteq [R]$, then 
$R = (R \setcon S) \setrev S$). As for the restricted framework defined here, inclusion does not hold 
due to the representation of a preference model as a single SPO relation. Recovery does not hold 
here due to the restrictions to the language (namely, not allowing disjunction of sentences).

\smallskip

We note that one of the main targets of our current work was development of an efficient and practical approach 
of contracting preference relations in the binary relation framework, in the finite and the finitely
representable cases. In addition to the defining semantics
of preference contraction operators, we have also developed a set of algorithms which 
can be used to compute contractions. We have tested them on real-life data and demostrated their efficiency. 
In contrast, \cite{Hansson1995} focuses more on semantical 
aspects of preference change and does not address computational issues of preference
change operators. In particular, finite representability is not addressed. 

\subsection{Other related frameworks}

An approach of preference change is proposed in \cite{DBLP:conf/aaai/ChenP06}. Preferences here
are changed via \emph{interactive example critiques}. This paper identified three types of common 
critique models: similarity based, quality based, and quantity based. However, no formal framework
is provided here. \cite{Freund2004} describes revision of \emph{rational} preference relations over 
propositional formulas. The revision operator proposed here satisfies the postulates of success and minimal change. 
The author shows that the proposed techniques work in case of revision by a single statement and can be 
extended to allow revisions by multiple statements. 

\cite{Dong99} proposes algorithms of incremental maintenance of 
 the transitive closure of graphs using relational algebra. 
The graph modification operations
are edge insertion and deletion. 
Transitive graphs in \cite{Dong99} consist of two kinds of edges:
the edges of the original graph and the edges
induced by its transitive closure. 
When an edge $xy$ of the original graph is contracted,
the algorithm also deletes
all the transitive edges $uv$ such that
all the paths from $u$ to $v$ in the original
graph go through $xy$.
As a result, such contraction is not minimal according to
our definition of minimality. Moreover, \cite{Dong99}
considers only finite graphs, whereas our 
algorithms can work with infinite relations.

\section{Conclusions and future work}

In this paper, we have presented an approach to contracting preference relations. We have considered 
several operators of preference contraction: minimal preference contraction, 
minimal preference-preserving contraction, and (preference protecting) meet contraction
inspired by different scenarios of cautious preference change. We have proposed algorithms and techniques
of computing contracted preference relations for a class of finite and finitely representable relations. 
We have introduced some techniques 
of optimizing preference queries in the presence of contraction.
We have also evaluated the proposed algorithms experimentally and showed that they can be
used in real-life database applications.

We have shown how preference contraction can be evaluated for a special class
of \boundedlayer \contractingrels{}. One of the areas of our future work is to relax that
property and consider more general \contractingrels{}.

An interesting direction of future work is to design an operator of generalized preference relation 
change that allows to change preference relations by discarding existing as well as adding new 
preferences at the same time. The current approaches of preference relation change 
are restricted to only one type of change.

As we showed in the discussion of related work, the existing preference revision
approach \cite{Chomicki2007} fails to work in the presence of conflicts (cycles). A promising direction here
is to use the preference contraction operators presented here to resolve such conflicts. 

In this paper, we assume that the relations defining the preferences to discard are explicitly
formulated by the user. However, such an assumption hardly works in practical scenarios of 
preference change: formulating such a relation requires a full knowledge of his or her preferences,
which may not be the case. Hence, a promising direction is to perform interactive preference
contraction or change.

\section*{Appendix A}

{\bf Theorem \ref{thm:disjuncts}. {\bf (Checking \boundedlayerprop).\ }}{\it
 Let $\form{R}$ be an \ero-formula in DNF, representing an SPO relation $R$, of the following form
 $$F_R(o, o') = F_{R_1}(o, o') \vee \ldots \vee F_{R_l}(o, o'),$$
 where $\form{R_i}$ is a conjunction of atomic formulas. 
 Then checking if there is a constant $k$ such that the length of all $R$-paths is at most $k$ 
 can be done by a single evaluation of $QE$ over a formula of size linear in $|F_R|$. 
}

\medskip
Let $R_i$ be a binary relation represented by the formula $F_{R_i}$ for all $i \in [1, l]$. 
We split the proof of Theorem \ref{thm:disjuncts} into several lemmas. In Lemma \ref{lemma:unbound-disjunct}, we show that
the length of all $R$-paths is bounded by a constant if and only if the length of all $R_i$-paths is bounded by a
constant for every disjunct $F_{R_i}$ of $F_R$. Lemma \ref{lemma:length-conj} shows that the length of all $R_i$-paths is bounded
by a constant if and only if there is a bound on the length of all paths induced by a relation represented by at least one 
conjunct of $F_{R_i}$.
In Lemma \ref{lemma:conj-part-bound}, we show how to check if the length of all paths induced by a conjunct of $F_{R_i}$ is bounded.

\medskip
To prove the first lemma, we use the following idea. Let a sequence $S = (o_{1}, \ldots, o_{n})$ 
of $n \geq 2$ tuples be an $R$-sequence, i.e.,
\begin{equation}\label{eq:disj-0}
(o_{1}, o_{2}), \ldots, (o_{{n-1}}, o_{n}) \in R
\end{equation}

The transitivity of $R$ implies that there is an $R$-edge from $o_{1}$ to 
all other tuples in $S$, i.e.,
\begin{equation}\label{eq:disj-1}
(o_{1}, o_{2}), \ldots, (o_{1}, o_{n}) \in R
\end{equation}
Note that \eqref{eq:disj-1} contains only edges started by $o_1$. 
Since $R = \cup_{i = 1}^l R_i$, for every $R$-edge in \eqref{eq:disj-1}, 
there is $i \in [1, l]$ such that 
it is also an $R_i$-edge. Let $R_{j}$ for some ${j} \in [1, l]$ be such that the number of $R_{j}$-edges in 
\eqref{eq:disj-1} is maximum. Such $R_{j}$ is called \emph{a major component of $S$}. 
Let the sequence $S'$ consist of the end nodes of all these $R_{j}$-edges in the order they appear in $S$.
Such $S'$ is called \emph{a major subsequence of $S$}.

\begin{observation}\label{obs:disj-1}
	Let $S$ be an $R$-sequence, $R_{i^*}$ a major component of $S$,
	and $S'$ be the corresponding major subsequence of $S$. Then
	\begin{enumerate}
	 \item $S'$ is an $R$-sequence
	 \item if the length of $S$ is $n$, then the length of $S'$ is at least $\lceil\frac{n-1}{l}\rceil$
	\end{enumerate}
\end{observation}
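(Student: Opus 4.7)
The plan is to prove the two parts separately, both with fairly short arguments that reuse the transitivity of $R$ and a simple pigeonhole counting. Throughout the proof I will work directly from the setup already established just before the statement: the sequence $S = (o_{1}, \ldots, o_{n})$ is an $R$-sequence, hence by transitivity the ``fan'' $(o_{1}, o_{2}), \ldots, (o_{1}, o_{n})$ consists of $n-1$ $R$-edges all starting at $o_{1}$, each of which belongs to at least one of $R_{1}, \ldots, R_{l}$.

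For Part~1, I would argue that the tuples of $S'$ form an ordered subsequence of $S$, say $(o_{k_{1}}, \ldots, o_{k_{m}})$ with $k_{1} < k_{2} < \cdots < k_{m}$, since by definition $S'$ records the end nodes of selected $R_{i^{*}}$-edges of the fan in the order in which they appear in $S$. Between consecutive tuples $o_{k_{j}}$ and $o_{k_{j+1}}$, the original $R$-sequence $S$ provides an $R$-path (the segment $o_{k_{j}}, o_{k_{j}+1}, \ldots, o_{k_{j+1}}$); transitivity of $R$ then delivers the edge $(o_{k_{j}}, o_{k_{j+1}}) \in R$. Hence $S'$ is an $R$-sequence.

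For Part~2, I apply the pigeonhole principle to the $n-1$ edges of the fan: since they are partitioned (with possible overlaps) among the $l$ relations $R_{1}, \ldots, R_{l}$, at least one of them, namely the major component $R_{i^{*}}$, must contain at least $\lceil (n-1)/l \rceil$ of the fan edges. The end nodes of these edges are exactly the tuples constituting $S'$, and by definition of length (the number of nodes in the sequence, per Definition~\ref{def:edge}), the length of $S'$ is at least $\lceil (n-1)/l \rceil$.

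I do not expect any real obstacle here, since both parts are essentially bookkeeping: Part~1 is just transitivity closing gaps in the subsequence, and Part~2 is pigeonhole. The only mild subtlety is being careful about the convention that ``length'' counts nodes rather than edges, and not to accidentally include $o_{1}$ in $S'$ (which would change the bound by one); I will explicitly flag this at the start of the proof to avoid confusion.
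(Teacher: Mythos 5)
Your proof is correct and follows essentially the same route as the paper, which justifies Part~1 by the transitivity of $R$ (closing the gaps between consecutive selected nodes) and Part~2 by pigeonhole on the $n-1$ fan edges distributed over the $l$ components. Your version merely spells out the bookkeeping that the paper leaves implicit.
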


The first fact of Observation \ref{obs:disj-1} follows from transitivity of $R$, and the second fact 
follows from the definition of major subsequence. Note that a major subsequence
is an $R$-sequence too. Hence, if it has at least two tuples, we can construct its major subsequence.

\begin{observation}\label{obs:disj-2}
	Let $S_0, \ldots, S_t$ be $R$-sequences such that for all $i \in [1, t]$, $S_i$ is a major subsequence
	of $S_{i-1}$ with the corresponding major components $R_{j_i}$. Let $o, o'$ be the first tuples of $S_1$ and
	$S_t$ correspondingly. Then $R_{j_1}(o, o')$.
\end{observation}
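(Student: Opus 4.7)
The plan is to unfold the definition of major subsequence and run a one-line induction. The crucial observation is that, by construction, $S_1$ is exactly the sequence of end nodes of those $R_{j_1}$-edges $(u_0, x)$ with $x$ appearing strictly after $u_0$ in $S_0$, where $u_0$ denotes the first tuple of $S_0$. Consequently, every tuple $v \in S_1$ satisfies $R_{j_1}(u_0, v)$, and this single fact will carry the entire proof once we know where $o'$ sits.

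The main step I would carry out is a trivial induction on $i \in [1, t]$ to show that $S_i$ is a subsequence of $S_1$. The base case $i = 1$ is immediate, and the inductive step uses only that a major subsequence of any $R$-sequence $S$ is, by its very construction, a subsequence of $S$ (its terms are extracted from $S$ in their original order). Chaining the inclusions $S_t \subseteq S_{t-1} \subseteq \cdots \subseteq S_1$ places the first tuple $o'$ of $S_t$ inside $S_1$, and the opening observation then gives $R_{j_1}(u_0, o')$.

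Identifying $o$ with $u_0$ (the head of the initial $R$-sequence) closes the argument. The only real friction is with the literal wording of the statement, which names $o$ as the first tuple of $S_1$ rather than of $S_0$; under that strict reading the case $t = 1$ would force $o = o'$ and contradict irreflexivity of $R_{j_1}$, so I read the statement as referring to the head of $S_0$. Attempting instead to derive $R_{j_1}(o_{\text{first of }S_1}, o')$ directly would require either transitivity of the individual disjunct $R_{j_1}$ — which is not guaranteed in general, since relations defined by a single conjunct of atomic equality/order constraints need not be transitive — or an extra appeal to $R_{j_2}, \ldots, R_{j_t}$ that the observation does not mention. The substance of the proof is therefore purely the bookkeeping of definitions; beyond clarifying which sequence $o$ is pulled from, there is no serious obstacle.
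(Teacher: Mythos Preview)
Your argument is correct and matches the paper's own justification, which consists of the single line ``follows from the definition of major subsequence.'' Unwinding that definition is exactly what you do: every element of $S_1$ is the end node of an $R_{j_1}$-edge out of the head of $S_0$, and each later $S_i$ is a subsequence of $S_{i-1}$, so $o'\in S_t\subseteq S_1$ gives the conclusion.

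Your diagnosis of the indexing issue is also on the mark. As literally written, taking $o$ to be the first tuple of $S_1$ makes the $t=1$ case contradict irreflexivity, and for $t\geq 2$ what the definition actually yields is $R_{j_2}(o,o')$ rather than $R_{j_1}(o,o')$; the disjunct $R_{j_1}$ is not assumed transitive, so one cannot pass from $R_{j_1}(u_0,o)$ and $R_{j_1}(u_0,o')$ to $R_{j_1}(o,o')$. Reading $o$ as the head of $S_0$ (your fix) is the interpretation that makes the observation both true and usable in the proof of Lemma~\ref{lemma:unbound-disjunct}, where only the fact that consecutive chosen heads are linked by \emph{some} common disjunct matters.
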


Observation \ref{obs:disj-2} follows from the definition of major subsequence. 

\begin{example}
 Let $S_0 = (x_1, x_2, x_3, x_4, x_5, x_6, x_7, x_8, x_9, x_{10}, x_{11}, x_{12})$ be an $R$-sequences. Figure \ref{pic:major-subsequences}
 illustrates possible construction of a major subsequence $S_1$ of $S_0$, a major subsequence $S_2$ of $S_1$, and
 a major subsequence $S_3$ of $S_2$. The edges on Figure \ref{pic:major-subsequences} correspond to 
 the major-component edges. In every sequence, a node is dark if it is in the major subsequence of 
 the sequence. Note that $S_3$ does not have a major subsequence because a subsequence has to have at least
 two nodes.
\end{example}

\begin{figure}[ht]
\begin{center}
 \begin{tikzpicture}[yscale=0.5, xscale=0.8]
		\tikzstyle{cir} = [draw=black,rounded corners,inner sep=3pt];
		\tikzstyle{seq} = [fill=black!60, text=white];
		\tikzstyle{upperedge} = [bend left=35, ->];
		\tikzstyle{loweredge} = [bend right=60, ->];

		\node          at (-2, 0){{\small $S_0: $}};
		\node[cir] (a) at (0, 0) {{\scriptsize $x_1$}};
		\node[cir] (b) at +(1, 0) {{\scriptsize $x_2$}};
		\node[cir] (c) at +(2, 0) {{\scriptsize $x_3$}};
		\node[cir, seq] (d) at +(3, 0) {{\scriptsize $x_4$}};
		\node[cir] (e) at +(4, 0) {{\scriptsize $x_5$}};
		\node[cir, seq] (f) at +(5, 0) {{\scriptsize $x_6$}};
		\node[cir, seq] (g) at +(6, 0) {{\scriptsize $x_{7}$}};
		\node[cir] (h) at +(7, 0) {{\scriptsize $x_8$}};
		\node[cir, seq] (i) at +(8, 0) {{\scriptsize $x_{9}$}};
		\node[cir, seq] (j) at +(9, 0) {{\scriptsize $x_{10}$}};
		\node[cir] (k) at +(10, 0) {{\scriptsize $x_{11}$}};
		\node[cir, seq] (l) at +(11, 0) {{\scriptsize $x_{12}$}};
		
		\path (a) edge[upperedge] (d);
		\path (a) edge[upperedge] (f);
		\path (a) edge[upperedge] (g);
		\path (a) edge[upperedge] (i);
		\path (a) edge[upperedge] (j);
		\path (a) edge[upperedge] (l);
		
		\node          at (-2, -2){{\small $S_1: $}};
		\node[cir] (d) at (3, -2) {{\scriptsize $x_4$}};
		\node[cir] (f) at +(5, -2) {{\scriptsize $x_6$}};
		\node[cir, seq] (g) at +(6, -2) {{\scriptsize $x_{7}$}};
		\node[cir, seq] (i) at +(8, -2) {{\scriptsize $x_{9}$}};
		\node[cir, seq] (j) at +(9, -2) {{\scriptsize $x_{10}$}};
		\node[cir, seq] (l) at +(11, -2) {{\scriptsize $x_{12}$}};
		
		\path (d) edge[upperedge] (g);
		\path (d) edge[upperedge] (i);
		\path (d) edge[upperedge] (j);
		\path (d) edge[upperedge] (l);

		\node          at (-2, -4){{\small $S_2: $}};
		\node[cir] (g) at +(6, -4) {{\scriptsize $x_{7}$}};
		\node[cir, seq] (i) at +(8, -4) {{\scriptsize $x_{9}$}};
		\node[cir] (j) at +(9, -4) {{\scriptsize $x_{10}$}};
		\node[cir, seq] (l) at +(11, -4) {{\scriptsize $x_{12}$}};
		
		\path (g) edge[upperedge] (i);
		\path (g) edge[upperedge] (l);

		\node          at (-2, -6){{\small $S_3: $}};
		\node[cir] (i) at +(8, -6) {{\scriptsize $x_{9}$}};
		\node[cir] (l) at +(11, -6) {{\scriptsize $x_{12}$}};
		
 		\path (i) edge[upperedge] (l);
 \end{tikzpicture}
\end{center}
	\caption{Major subsequences}
	\label{pic:major-subsequences}
\end{figure}
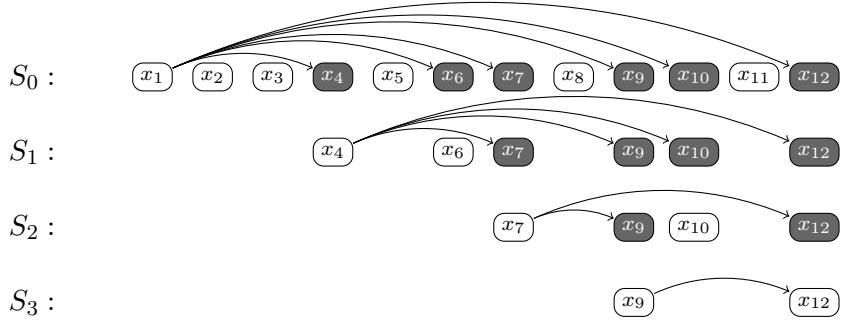

\begin{lemma}\label{lemma:unbound-disjunct}
There is a constant bounding the length of all $R$-paths if and only if for all $i \in [1, l]$, there is a
constant bounding the length of all $R_i$-paths.
\end{lemma}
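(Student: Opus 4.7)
The plan is to establish the two directions separately, with the forward direction essentially trivial and the converse proved contrapositively by combining the iterated major-subsequence extraction from Observations \ref{obs:disj-1} and \ref{obs:disj-2} with a double pigeonhole argument.

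For the forward direction, I would simply observe that $R_i \subseteq R$ for every $i \in [1,l]$, so any uniform bound on the length of $R$-paths also bounds the length of each $R_i$-path.

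For the converse, I assume each $R_i$-path has bounded length and suppose, for contradiction, that $R$-paths are of unbounded length. Given arbitrarily long $R$-sequences $S_0$, I iterate the major-subsequence construction of Observation \ref{obs:disj-1} to obtain a chain $S_0, S_1, \ldots, S_t$ with major components $R_{j_1}, \ldots, R_{j_t}$; the length bound $|S_{i+1}| \geq \lceil (|S_i|-1)/l \rceil$ ensures the iteration continues for $t$ on the order of $\log_l |S_0|$ steps before the length drops below two. Writing $a_p$ for the first tuple of $S_p$, I would then invoke Observation \ref{obs:disj-2} (applied to the tail chain starting at $S_p$ and ending at $S_q$) to conclude $R_{j_{p+1}}(a_p, a_q)$ for all $0 \leq p < q \leq t$, since $R_{j_{p+1}}$-edges run from $a_p$ to every node of $S_{p+1}$ and $a_q$ lies in $S_{p+1}$ by the nesting of iterated subsequences.

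The conclusion is a double pigeonhole. First, among $j_1, \ldots, j_t \in [1,l]$ some index $i^*$ appears at positions $q_1 < q_2 < \cdots < q_m$ with $m \geq \lceil t/l \rceil$, and the chain $a_{q_1-1}, a_{q_2-1}, \ldots, a_{q_m-1}$ is an $R_{i^*}$-sequence because each consecutive pair is an $R_{i^*}$-edge by the previous step specialized to $p = q_k-1$ and $q = q_{k+1}-1$. Second, as we vary the starting length $|S_0|$ the selected index $i^* = i^*(|S_0|)$ again lies in the finite set $[1,l]$, so another pigeonhole yields a single $i^\dagger$ admitting $R_{i^\dagger}$-sequences of arbitrary length, contradicting the hypothesis. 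The main subtlety will be this double use of pigeonhole: one must verify that the inner monochromatic subchain length $m \geq \lceil t/l \rceil$ still grows without bound in $|S_0|$ (roughly like $(\log_l |S_0|)/l$), which is precisely what makes the outer pigeonhole deliver the contradiction.
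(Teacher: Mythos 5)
Your proof is correct and follows essentially the same route as the paper's: the forward direction by inclusion $R_i \subseteq R$, and the converse by iterating the major-subsequence construction of Observations \ref{obs:disj-1} and \ref{obs:disj-2} and applying pigeonhole to the major components (your formulation $R_{j_{p+1}}(a_p,a_q)$ for $p<q$, with the source taken one level up, is in fact the careful reading of Observation \ref{obs:disj-2} that the paper's argument relies on). The only difference is the endgame: the paper fixes a uniform bound $k$ on all $R_i$-path lengths up front and runs the iteration for exactly $(k+2)l$ steps so that a single pigeonhole yields $k+2$ occurrences of one component and hence an $R_i$-path of length $k+1$ (incidentally producing the explicit bound $\sum_{i=1}^{(k+2)l+1} l^i - 2$ on $R$-path lengths), whereas you let the parameters float and close with a second pigeonhole over the starting lengths --- equally valid, just less quantitative.
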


\begin{proof}$ $
In the case when $l = 1$, the lemma trivially holds. Further we assume $l > 1$.

\rightsideproof If for some $i \in [1, l]$, the length of $R_i$-paths cannot be bounded, neither can the 
length of $R$-paths.

\leftsideproof Assume that for all $i \in [1, l]$, all $R_i$-paths are of length at most $k$. 
Show that
the length of all $R$-paths is not more than $\sum_{i=1}^{(k+2)l+1}l^i-2$. For the sake of contradiction, 
let there be an $R$-path 
of length $\sum_{i=1}^{(k+2)l+1}l^i-1$. Let $S_0$ be the corresponding $R$-sequence.
The size of $S_0$ is $\sum_{i=0}^{(k+2)l+1}l^i$. Let $S_1$ be a major subsequence of $S_0$. 
By Observation \ref{obs:disj-1}, $S_1$ is also an $R$-sequence, 
and its length is at least $\sum_{i=0}^{(k+2)l}l^i$. Following that logic, let $S_{t}$ be a major subsequence
of $S_{t-1}$ with the corresponding major component $R_{j_{t-1}}$. The size of $S_{t}$ is 
at least $\sum_{i=0}^{(k+2)l-t+1}l^i$. Such computation may continue while the size of $S_t$ is greater than one, 
i.e., while $t \leq (k+2)l$. Let the major components of $S_1, \ldots, S_{(k+2)l}$ be 
$R_{j_1}, \ldots, R_{j_{(k+2)l}}$ correspondingly. Note that there are at most $l$ possible different major components.
Thus, at least $k+2$ major components in $R_{j_1}, \ldots, R_{j_{(k+2)l}}$ are the same. Let us denote 
the first $k+2$ of them as $R_{t_{1}}, \ldots, R_{t_{k+2}}$ and the tuples which start the corresponding
major sequences as $o_{t_1}, \ldots, o_{t_{k+2}}$. By Observation \ref{obs:disj-2}, 
$$R_{t_1}(o_{t_1}, o_{t_2}) \wedge R_{t_2}(o_{t_2}, o_{t_3}) \wedge \ldots \wedge R_{t_{k+1}}(o_{t_{k+1}}, o_{t_{k+2}})$$
Since all $R_{t_{1}}, \ldots, R_{t_{k+2}}$ are the same, the expression above implies 
that there is an $R_i$-path of length $k+1$ for some $i \in [1, l]$ which is a contradiction. \qed
\end{proof}

In Lemma \ref{lemma:unbound-disjunct}, we showed that the problem of checking the bounded-length property of all $R$-paths
can be reduced to the problem of testing the same property for $R_i$-paths.
Note that $R_i$ is represented
by a formula $F_{R_i}$ which is a conjunction of atomic formulas. 
Let the set of all attributes which are present in the formula
$F_{R_i}$ be defined as $\A_{F_{R_i}}$. Then $F_{R_i}$ can be represented as
$$F_{R_i}(o, o') = \bigwedge_{A \in \A_{F_{R_i}}} \lambda_{A}(o, o'),$$
where $\lambda_A(o, o')$ is a conjunction of all atomic formulas in which the attribute $A$ is used.
Note that the structure of the preference formula language implies that \emph{every atomic formula 
belongs to exactly one $\lambda_{A}$}. 

Denote the relation represented by $\lambda_{A}$ as $\Lambda_A$. 
In the next lemma, we show that the problem of checking the \boundedlayerprop of all $R_i$-paths can be reduced
to the same problem for $\Lambda_{A}$-paths.

\begin{lemma}\label{lemma:length-conj}
  There is a constant bounding the length of all $R_i$-paths if and only if 
   for some $A \in \A_{F_{R_i}}$, there is a constant bounding the length of all $\Lambda_A$-paths.
\end{lemma}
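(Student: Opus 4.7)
}
The plan is to prove the two directions separately, exploiting the fact that each conjunct $\lambda_A$ mentions only the $A$-column of the tuple variables, so the constraints on distinct attributes are decoupled.

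For the $(\Leftarrow)$ direction, suppose $k$ bounds the length of all $\Lambda_A$-paths for some $A \in \A_{F_{R_i}}$. From the structure $F_{R_i}(o,o') = \bigwedge_{B \in \A_{F_{R_i}}} \lambda_B(o,o')$ we immediately get $R_i \subseteq \Lambda_A$, hence every $R_i$-path is a $\Lambda_A$-path and therefore has length at most $k$. This direction is essentially a one-line containment argument.

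For the $(\Rightarrow)$ direction I would argue by contraposition: assume that for every $A \in \A_{F_{R_i}}$ the $\Lambda_A$-paths are unbounded, and build an $R_i$-path of arbitrary length $n$. For each $A \in \A_{F_{R_i}}$, choose a $\Lambda_A$-sequence $(o_0^A, o_1^A, \ldots, o_n^A)$ of length $n+1$, and extract the $A$-values $a_j^A := o_j^A.A$ for $j=0,\ldots,n$; by definition of $\Lambda_A$ the atomic formulas in $\lambda_A$ involve only the $A$-column, so the sequence $(a_0^A, \ldots, a_n^A)$ witnesses $\lambda_A(o,o')$ on consecutive values regardless of the other columns. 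Now synthesize tuples $o_0, \ldots, o_n \in \U$ by declaring $o_j.A := a_j^A$ for each $A \in \A_{F_{R_i}}$ and picking any fixed values for attributes not occurring in $F_{R_i}$ (these are irrelevant to $F_{R_i}$). For every $j$ and every $A \in \A_{F_{R_i}}$ we then have $\lambda_A(o_j, o_{j+1})$, so $F_{R_i}(o_j, o_{j+1})$ holds, i.e.\ $(o_j, o_{j+1}) \in R_i$. Since $n$ was arbitrary, $R_i$-paths are unbounded, contradicting the hypothesis.

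The main thing to be careful about is the independence claim that underwrites the synthesis step: because each atomic formula of $F_{R_i}$ belongs to exactly one $\lambda_A$ (as noted in the paragraph preceding the lemma) and mentions only attribute $A$ (equality/inequality on a \D{}-column or a rational-order constraint on a \Q{}-column), the truth of $\lambda_A(o_j, o_{j+1})$ depends solely on $o_j.A$ and $o_{j+1}.A$. This orthogonality across attributes is what allows the per-attribute sequences to be glued into a single sequence of tuples without interference, and it is the only nontrivial observation in the argument; the rest is bookkeeping. No Ramsey-style combinatorics are needed here, in contrast to Lemma \ref{lemma:unbound-disjunct}, precisely because a conjunction (rather than a disjunction) decomposes cleanly over independent coordinates.
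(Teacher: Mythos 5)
Your proof is correct and follows essentially the same route as the paper's: the $(\Leftarrow)$ direction is the containment $R_i \subseteq \Lambda_A$ (the paper states it contrapositively), and the $(\Rightarrow)$ direction glues per-attribute $\Lambda_A$-sequences into a single tuple sequence exactly as the paper does, relying on the fact that each atomic formula mentions only one attribute. Your explicit emphasis on this orthogonality is a nice clarification of a step the paper leaves implicit, but it is not a different argument.
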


\begin{proof}$ $
 
 \leftsideproof Let for every $k$, there be an $R_i$-path of length at least $k$
	$$R_i(o_1, o_2) \wedge R_i(o_2, o_3) \wedge \ldots \wedge R_i(o_k, o_{k+1})$$
 Then for all $A \in \A_{F_{R_i}}$, we have a $\Lambda_A$-path of length at least $k$
	$$\Lambda_A(o_1, o_2) \wedge \Lambda_A(o_2, o_3) \wedge \ldots \wedge \Lambda_A(o_k, o_{k+1})$$ 

 \rightsideproof Let for every $k$ and $A \in \A_{F_{R_i}}$, there be an $\Lambda_A$-path of length at least $k$
	$$\Lambda_{A}(o_1^A, o_2^A) \wedge \Lambda_A(o_2^A, o_3^A) \wedge \ldots \wedge \Lambda_A(o_k^A, o_{k+1}^A)$$
  Construct a sequence of tuples $(o_1, o_2, o_3, \ldots)$ as follows. Let $o_j.A = o_j^A.A$ if
	$A \in \A_{F_{R_i}}$. Otherwise, let $o_j.A$ be any value from the domain $\Dom_{A}$ of $A$. Clearly, the following $R_i$-path
 	is of length at least $k$
	$$R_i(o_1, o_2) \wedge R_i(o_2, o_3) \wedge \ldots \wedge R_i(o_k, o_{k+1})$$
	\qed
\end{proof}

\begin{lemma}\label{lemma:conj-part-bound}
 There is a constant bounding the length of all $\Lambda_A$-paths if and only if 
 there is no $\Lambda_A$-path of length three, i.e., 
	$$\neg \exists o_1, o_2, o_3, o_4 \in \U \ .\ \Lambda_A(o_1, o_2) \wedge \Lambda_A(o_2, o_3) \wedge \Lambda_A(o_3, o_4)$$
\end{lemma}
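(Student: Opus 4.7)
The direction ``no length-$3$ path $\Rightarrow$ bounded length'' is trivial: the constant $2$ then bounds every $\Lambda_A$-path. The substantive content is the contrapositive of the other direction: given a $\Lambda_A$-path of length three, exhibit $\Lambda_A$-paths of every length.

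First I would decompose $\lambda_A$ into its unary and binary parts. Since every atomic formula in $\lambda_A$ mentions only the single attribute $A$, write $\lambda_A(o,o') = \lambda_A^s(o) \wedge \lambda_A^e(o') \wedge \lambda_A^b(o,o')$, where $\lambda_A^s$ collects the atomic formulas mentioning only $o.A$, $\lambda_A^e$ those mentioning only $o'.A$, and $\lambda_A^b$ the genuinely binary atoms. Let $C_s, C_e \subseteq \Dom_A$ be the sets of values satisfying $\lambda_A^s$ and $\lambda_A^e$, set $C_m = C_s \cap C_e$, and let $B \subseteq \Dom_A \times \Dom_A$ be the relation defined by $\lambda_A^b$. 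A $\Lambda_A$-path $(a_1,\ldots,a_n)$ is exactly a sequence with $a_1 \in C_s$, $a_n \in C_e$, every interior $a_i \in C_m$, and $(a_i,a_{i+1}) \in B$ for each $i$.

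Next comes a small structural lemma. Under the $\leq, \geq$-to-disjunction transformation assumed in Theorem~\ref{thm:disjuncts}, the allowable atomic formulas on a single attribute are $o.A = c$, $o.A \neq c$, and (for \Q-attributes) $o.A < c$ and $o.A > c$, together with their binary variants $o.A\ \theta\ o'.A$ for $\theta \in \{=,\neq,<,>\}$. Case inspection of conjunctions of such atoms shows $B$ simplifies to one of six relations: $\emptyset$, $\Dom_A^2$, $\{=\}$, $\{\neq\}$, $\{<\}$, $\{>\}$; and each of $C_s, C_e, C_m$ is either empty, a single point, or infinite --- specifically, for a \Q-attribute, an open rational interval (possibly all of $\Q$) minus finitely many excluded points, and for a \D-attribute, a cofinite subset of $\Dom_A$.

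Given a length-$3$ path $(a_1,a_2,a_3,a_4)$, we have $a_2,a_3 \in C_m$ and $(a_2,a_3) \in B$. The plan is to extend to arbitrary length via one of three techniques, chosen by $B$:
\begin{enumerate}
\item If $B$ is reflexive on $C_m$ (cases $\Dom_A^2$ or $\{=\}$), insert arbitrarily many copies of $a_2$ in the middle: $a_1, a_2, a_2, \ldots, a_2, a_4$ (note that when $B = \{=\}$ we have $a_1 = a_2 = a_3 = a_4$, so all edges automatically hold).
\item If $B$ is symmetric but not reflexive (case $\{\neq\}$), oscillate $a_2, a_3, a_2, a_3, \ldots$ between $a_1$ and $a_4$; all consecutive pairs are $(a_2,a_3)$ or $(a_3,a_2)$, and $a_2 \neq a_3$ is already given.
\item If $B \in \{<,>\}$, then $(a_2,a_3) \in B$ forces $a_2 \neq a_3$, so $C_m$ contains two distinct rationals and must (by the structural lemma) contain cofinitely many rationals of the open interval between them; pick any finite strictly $B$-ordered chain $b_1, \ldots, b_k$ from $C_m$ strictly between $a_2$ and $a_3$ and insert it.
\end{enumerate}
A case check against the six possibilities for $B$ confirms that whenever a length-$3$ path exists, at least one mechanism applies.

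The main obstacle is the strict-order case $B \in \{<, >\}$, where one must be sure $C_m$ contains enough rationals strictly between $a_2$ and $a_3$. This is exactly where the structural lemma matters: $C_m$ is never a finite set with two or more elements, being an intersection of rational half-lines with finitely many point-exclusions; so $a_2, a_3 \in C_m$ with $a_2 \neq a_3$ forces cofinitely many rationals of the open interval $(a_2, a_3)$ to lie in $C_m$, which is precisely what density-based insertion needs. The remaining cases are routine checks.
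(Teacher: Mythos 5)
Your proof is correct and follows essentially the same route as the paper's: the paper likewise splits $\lambda_A$ into $\phi_L$ (atoms on $o.A$ and constants), $\phi_R$ (atoms on $o'.A$ and constants), and $\phi_M$ (binary atoms), observes that the unary ranges are empty, a single point, or an open interval minus finitely many points, and then cases on the form of the binary part to build constant, alternating, or strictly monotone chains of unbounded length. Your explicit six-way classification of the binary relation $B$ and the insertion of the chain strictly between $a_2$ and $a_3$ are just slightly more detailed renderings of the same argument.
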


\begin{proof}$ $

	\leftsideproof If for every constant $k$, there is a $\Lambda_A$-path of length at least $k$, 
	then there is a $\Lambda_A$-path of length three.

	\rightsideproof If $\Lambda_A$ is unsatisfiable, then there are no $\Lambda_A$-paths. Thus, we assume that
	$\Lambda_A$ is satisfiable. Based on the preference formula language, the formula $\lambda_A(o, o')$ can be split
	into at most three conjunctive formulas: 
	\begin{enumerate}
	 \item $\phi_L$: a conjunction of all atomic formulas $o.A \theta c$, 
	 \item $\phi_R$: a conjunction of all atomic formulas $o'.A \theta c$, 
	 \item $\phi_M$: a conjunction of all atomic formulas $o.A \theta o'.A$	 
	\end{enumerate}
	for $\theta \in \{=, \neq, <, >\}$ and a $\D$- or $\Q$-constant $c$. Any of these three formulas may be missing
	because $\lambda_A$ may not containt atomic formulas of the specified type. 
	$\phi_L$ and $\phi_R$ capture the range of the left and the right argument in $\lambda_A$, 
	correspondingly, and $\phi_M$ constrains their relationship. 

	Here we assume that $A$ is a
	$\Q$-attribute, and the case of $\D$-attributes is similar. Note that if $\phi_L$ is defined, 
	then the range $r_{L}$ of $\phi_L$ is 1) an open rational number interval with a finite number 
	of holes (due to possible atomic formulas $o.A \neq c$), or 2) a single rational value
	(due to the formula $o.A = c$).
	 If $\phi_L$ is undefined, then 
	$r_{L}$ is the entire set of rational numbers. Thus, the the number of distinct elements $|r_L|$ in $r_L$ is either $\infty$ or 1. The same holds for the number of distinct elements $|r_R|$ in $r_R$. 
	Hence for our class of formulas, $|r_L \cap r_R| \in \{1, \infty \}$.
	
	Now consider the structure of $\phi_M$. If $\phi_M$ is undefined, then 
	$|r_L \cap r_R| > 0$ implies that there are $\Lambda_A$-paths of length at least $k$ for every $k$, consisting of tuples
	whose $A$-values are arbitrary elements of $r_L \cap r_R$. 
	If $\mbox{``$o.A = o'.A$``} \in \phi_M$, then 
	no other atomic formula is in $\phi_M$ (otherwise, $\Lambda_A$ is unsatisfiable). Since 
	$|r_L \cap r_R| > 0$, $\Lambda_A$-paths of length at least $k$ for every $k$  can be constructed of tuples with the 
	value of $A$ all equal to any member of $r_L \cap r_R$. If $\mbox{``$o.A > o'.A$``} \in \phi_M$, then
	$\mbox{``$o.A = o'.A$``}, \mbox{``$o.A < o'.A$``} \not \in \phi_M$ (otherwise $\lambda_A$ is unsatisfiable).
	However, $\mbox{``$o.A \neq o'.A$``}$ may be in $\phi_M$ and is implied by $\mbox{``$o.A > o'.A$``} \in \phi_M$ 
	so can be dropped.
	The existence of a $\Lambda_A$-path of length three implies that $|r_L \cap r_R| > 1$ and thus
	$|r_L \cap r_R|  = \infty$. Hence there are $\Lambda_A$-paths of length at least $k$ for every $k$. 
	The case of $\mbox{``$o.A < o'.A$``} \in \phi_M$
	is analogous. The last case is when only $\mbox{``$o.A \neq o'.A$``}$. The existence of a $\Lambda_A$-path
	of the length three implies that there are two different values $c_1, c_2 \in r_L \cap r_R$. 
	Hence, $\Lambda_A$-paths of length at least $k$ for every $k$ can be constructed by taking any sequence 
	of tuples in which the value of $A$ of every even tuple is $c_1$ and of every odd tuple is $c_2$.
	\qed
\end{proof}

\begin{proofofthm-disjuncts}$ $
 Here we show how to construct a formula which is true iff there is a constant $k$
such that the length of all ${R}$-paths is bounded by $k$. By Lemma \ref{lemma:unbound-disjunct},
such a formula can be written as a conjunction of $l$ formulas each of which represents the fact
that the length of all $R_i$-paths is bounded. By Lemma \ref{lemma:length-conj},
such a formula can be written as a disjunction of formulas each of which represents the fact that
the length of all $\Lambda_A$-paths is bounded. By Lemma \ref{lemma:conj-part-bound}, such formulas
are of size linear in the size of $\Lambda_A$. Hence, the resulting formula is linear in the size
of $F_{R}$. Due to the construction in Lemma \ref{lemma:conj-part-bound}, the formula has quantifiers. They can be eliminated
using $QE$. \qed
\end{proofofthm-disjuncts}

\bibliographystyle{alpha}
\bibliography{preference}

\end{document}